\newcommand{\blindversion}[1]{}
\newtheorem{theorem}{Theorem}
\newtheorem{lemma}{Lemma}
\newtheorem{definition}{Definition}
\newtheorem{remark}{Remark}
\newtheorem{proposition}{Proposition}%[section]
\newcommand{\Fset}{{\cal X}}
\newcommand {\beq}{\begin{equation}}
\newcommand {\eeq}{\end{equation}}
\newcommand {\beqn}{\begin{equation*}}
\newcommand {\eeqn}{\end{equation*}}
\newcommand {\bear}{\begin{eqnarray}}
\newcommand {\eear}{\end{eqnarray}}
\newcommand {\bearn}{\begin{eqnarray*}}
\newcommand {\eearn}{\end{eqnarray*}}
\pgfplotsset{compat=newest}
\DeclareMathOperator*{\argmin}{arg\,min}
\DeclareMathOperator*{\argmax}{arg\,max}
\begin{document}
\setstretch{1.3}

\title{Contextual Inverse Optimization: Offline and Online Learning\footnote{An early version of some of the results in present paper  appeared in COLT'21 with the title ``Online Learning from Optimal Actions.'' Only the abstract appeared in the conference proceedings. We thank Alberto Seeger, the COLT program committee members, the area editor, the associate editor and the referees for their valuable feedback.}}

\author{
  Omar Besbes\footnote{Columbia Business School --- ob2015@gsb.columbia.edu.}, ~Yuri Fonseca\footnote{Columbia Business School --- yfonseca23@gsb.columbia.edu.}, ~and Ilan Lobel\footnote{NYU Stern School of Business --- ilobel@stern.nyu.edu.}
}
%\date{June 23, 2022}
\date{first version: January 2021; this version: June 2023}
%\begin{document}
\maketitle
\begin{abstract}
	We study the problems of offline and online contextual optimization with feedback information, where instead of observing the loss, we observe, after-the-fact, the optimal action an oracle with full knowledge of the objective function would have taken. We aim to minimize regret, which is defined as the difference between our losses and the ones incurred by an all-knowing oracle. In the offline setting,  the decision-maker has information available from past periods and needs to make one decision, while in the online setting, the decision-maker optimizes decisions dynamically over time based a new set of feasible actions and contextual functions in each period. For the offline setting, we characterize the optimal minimax policy, establishing the performance that can be achieved as a function of the underlying geometry of the information induced by the data. In the online setting, we leverage this geometric characterization to optimize the cumulative regret. We develop an algorithm that yields the first regret bound for this problem that is logarithmic in the time horizon. Finally, we show via simulation that our proposed algorithms outperform previous methods from the literature. 
\end{abstract}

\medskip
\textbf{Keywords}: contextual optimization, online optimization,  imitation learning, inverse optimization, learning from revealed preferences, data-driven decision-making.

\section{Introduction}\label{sec:intro}

The two classical frameworks for studying decision-making with learning are the offline setting, in which a decision-maker has access to a data set of contexts, actions, and respective payoffs, and the online setting, in which a decision-maker in each period is given a context, chooses an action and receives feedback in the form of a payoff. Crucially, both of these frameworks assume the decision-maker observes payoffs. However, there are important settings in which a machine learning algorithm might not have access to the payoff from a decision, but it does have access after-the-fact to the decision an expert  would have taken in that situation. 

Learning problems of this form arise in a variety of settings. Consider a medical machine learning system trying to learn to emulate a doctor's approach to treating a disease. The machine learning system can observe the patient covariates (demographic information, medical history, blood work, etc.) and needs to issue a treatment recommendation for a junior doctor to follow. A senior doctor (an expert) is not available immediately, but is able to review the treatment decision after-the-fact and input what his or her decision would have been instead. That is, we have access to what an optimal solution would have been to the problem we faced. However, we might never receive feedback on whether a treatment worked, as patients might not inform the medical practice what the outcome of the treatment was, or the feedback might be significantly delayed in time. A similar scenario occurs if we are trying to build a system to predict users' preferences.  Most often, we can observe what the user actually did after-the-fact, but we cannot observe the payoff to the user of a particular decision. The same is true of some machine learning systems that are designed to learn how to operate an autonomous system. We might be able to learn by observing what a human driver (the expert) would have done, but we might not have any payoff feedback. To formalize this class of problems, we consider a general contextual optimization problem with both offline and  online data.

For the offline setting, the decision-maker has access to data $\mathcal D$  of  $N$  contextual optimization problems, each  consisting of a set of feasible actions $\mathcal X_i \subset \mathbb R^n$, a context function $f_i: \mathbb R^n \to \mathbb R^d$ and an optimal solution $x^\star_i \in \argmin_{x\in \mathcal X_i} f(x)'c^\star$ for some \textit{unknown} vector $c^\star$. The challenge is to devise a good action in a new problem, based on the offline data. In particular, a key question pertains to how ``informative" the offline data is for future decision-making in new environments not necessarily seen before. 

  For the online setting, in each period $t$, the decision-maker receives a new feasible set $\mathcal X_t \subseteq \mathbb R^n$ and a new context function $f_t: \mathbb R^n \to \mathbb R^d$, and must select some action $x_t \in \mathcal X_t$ in order to minimize an underlying objective function $f_t(x_t)'c^\star$. As in the offline setting, $c^\star$ is an unknown cost vector, which is only known to belong to some initial knowledge set $C_0 \subseteq  \mathbb R^d$. The key question in this setting is if the decision-maker can gain information about $c^\star$ over time, and what type of performance can be achieved over time when we only observe past optimal actions. % and one faces an arbitrary sequence of optimization problems as above. 
  Informally speaking, we are interested in understanding how well can the decision-maker learn to mimic the expert. In both the offline and online settings, we aim to minimize the worst-case regret, which is measured in terms of the suboptimality gap of the actions taken. In particular, the regret we study captures the quality of the decision made under the \textit{true} cost and formally introduced in \Cref{sec:pb-form}. While not directly observable, we derive theoretical bounds on it.

\subsection{Main contributions}

In the offline setting, we do not make any distributional assumptions on the contextual optimization problems. To the best of our knowledge, we establish the first data-dependent minimax regret result for this problem. We do this by exploiting the underlying geometry of the contextual optimization problem. For the offline problem with data $\mathcal D$, we first characterize the set of cost vectors $C(\mathcal D)$ that are consistent with the offline data and establish that a key driver of performance is the uncertainty angle of the information set, $\alpha(C(\mathcal D))$, which is a measure of how large a revolution cone do we need to contain $C(\mathcal D)$. The axis of such a revolution cone is what's called the circumcenter of $C(\mathcal D)$. We show in \Cref{theorem:one_period_problem} that the offline policy that guarantees an optimal worst-case regret bound  for an arbitrary new optimization problem is to treat the circumcenter of $C(\mathcal D)$ as if it were the true cost. In particular, we show for the offline problem that regardless of the data size, the worst-case future  regret is given by $\sin \alpha(C(\mathcal D))$ if $\alpha(C(\mathcal D)) \leq \pi/2$ and 1 if $\alpha(C(\mathcal D)) > \pi/2$.

For the online case, our first result is a negative one. In \Cref{lemma:insuff_circumcenter}, we show that a na\"ive application of the circumcenter policy fails, in the sense that it may incur regret that is linear in the time horizon. Quite interestingly, this policy may, in all periods, incur significant regret while \textit{also} failing to learn any meaningful new information on the underlying cost vector. This is due to the potentially complex geometry of the knowledge set and highlights how nature may counter the decision-maker and ``gain" on both fronts: inflict regret \textit{and} limit information collection to ensure high future regret. 

 In turn, we develop an approach that leverages a series of ideas to exploit the geometry of the problem while avoiding the pitfalls of the na\"ive  (greedy) circumcenter policy.  We first assume that the initial knowledge set $C_0$ lives in a pointed cone. We regularize our knowledge sets by replacing them with ellipsoidal cones that contain them. This regularization in conjunction with an adjusted circumcenter policy introduces a trade-off for nature: now nature will either inflict high regret or will not enable the decision-maker to collect useful information on the cost vector, but can't anymore achieve both at the same time. In other words, the decision-maker can now indirectly ``force" nature to reveal information about the cost vector, a phenomenon we coin ``inverse exploration." We then adapt the ellipsoid method from optimization theory to deal with ellipsoidal cones instead and appropriately update our regularized sets over time. The key to our  algorithm's performance is to stop the ellipsoidal cones from becoming ill-conditioned, which we are able to achieve by not performing  ellipsoidal updates in periods in which the decision is nearly optimal. In \Cref{theorem:regret}, we establish that the algorithm we construct, properly tuned, achieves a worst-case regret bounded by $\mathcal O(d^2 \ln (T \tan \alpha(C_0)))$. 
 
The algorithm above was constructed assuming that the initial knowledge set $C_0$ lived  in a pointed cone, and enabled to highlight a first set of phenomena at play in this class of problems. In Section \ref{sec:algo_design_general}, we drop this assumption. To address this generalized version of the problem, we establish  that it is possible to always maintain a subspace such that the projection of our knowledge set onto it has a bounded uncertainty angle (lives within a pointed cone). We then extend our algorithm to have three different kinds of periods: periods where our action is nearly optimal (no knowledge set update is needed), periods where we perform an ellipsoidal cone update, and periods where we add an extra dimension to our subspace. There are two new key steps in this algorithm. First, we need to robustify the ellipsoidal cuts to account for potential error introduced by the projection onto a subspace. Second, we need to construct a new ellipsoidal cone every time we increase the dimension of the subspace, and we need to argue that this new ellipsoidal cone has a bounded uncertainty angle. For the constructed algorithm, properly tuned,  we are able to obtain a universal $\mathcal O(d^4 \ln T)$ regret bound in \Cref{theorem:regret_general_case}, regardless of the initial knowledge set, which is the main result of this paper. For comparison, the prior state-of-the-art results (algorithms and associated performance) on the regret notion we study were presented in \cite{barmann2017emulating} and report regret $\mathcal O(\sqrt T)$, for what the authors defined there as ``solution error".

We complement our theoretical analysis with a numerical study. We illustrate the performance of our algorithms on an important type of contextual optimization problem where the expert (consumer) solves a sequence of knapsack problems with features, prices and products varying over time. We compare the performance of our algorithms in both the pointed and the general case with the Exponential Weighs Update (EWU) presented in \cite{barmann2017emulating} and Online Gradient Descent (OGD) presented in \cite{barmann2018online}. We show that our algorithms enjoy good empirical performance and outperform the benchmarks in all instances tested.
 
 These results provide new achievability results for both the offline and the online setting, but also novel algorithmic ideas to account for the special nature of the feedback associated with optimal actions. We also hope that some of these ideas can be useful in tackling more general versions of the problem, such as problems with noise in the feedback.

\vspace{.05in}
\textbf{Structure of the paper.} Next, we position our paper in the broad literature. We formulate the problem in \Cref{sec:pb-form}. We analyze the offline setting in \Cref{sec:one-period} and the online setting in Sections \ref{sec:circumcenter_policy} and \ref{sec:algo_design_general}. In Section \ref{sec:numerics}, we conduct illustrative numerical experiments. In Section \ref{sec:conclusion}, we discuss open questions and possible extensions. We discuss connections to other models in the literature in Appendix \ref{sec:appl} and present the proofs of the results in the paper in Appendix \ref{app:proofs}.

\subsection{Related Literature}\label{sec:lit}

The problem formulation we study allows us to encompass and relate to a variety of problems studied earlier in the literature such as inverse optimization, imitation learning and structured prediction problems with a host of associated applications. We next broadly discuss how our work relates to these  streams of literature.

 In the offline setting, inverse optimization  typically refers to the problem of finding an objective function given an optimal solution and a feasible set. \cite{ahuja2001inverse} defined this problem early on and we refer the reader to \cite{chan2019inverse} for recent developments. See also \cite{keshavarz2011imputing}, \cite{bertsimas2015data}, \cite{aswani2018inverse},  \cite{thai2018imputing}.  Perhaps closest to our offline formulation is  \cite{esfahani2018data}, which introduces, among others, a convex loss denoted suboptimality loss to select among models in a distributionally robust framework. They make distributional assumptions and work with strong convex nominal optimization problems and with noisy observations, and provide out-of-sample guarantees for the Conditional Value-at-Risk for these losses. In the present paper, we analyzes a more general nominal optimization problem, but without noise, and provide a guarantee for the regret under an arbitrary new  optimization instance. Moreover, we define the regret using a different loss, that quantifies the suboptimality gap of the decision-maker's action with respect to the expert under the true cost vector. We discuss this difference in \Cref{sec:pb-form}.

In the online setting, perhaps the closest paper to ours is  \cite{barmann2017emulating, barmann2018online}, which studies an online version of inverse optimization. The key step in \cite{barmann2017emulating} is a reframing that allows the authors to leverage online learning algorithms such as online gradient descent and exponential weights updates, and prove a regret bound of $\mathcal O(\sqrt{T})$. We discuss this result and their approach in more detail after we state Theorem 4. In \cite{ward2019learning}, the authors develop an online learning  algorithm that is built on exponential weights updates for dynamic scheduling with a regret guarantee of $\mathcal O(\ln T\sqrt T)$. \cite{jabbari2016learning} studies a related setting with a  sequence of linear programming problems associated with the behavior of a rational agent. Under the assumption that the true cost vector is separated from all other candidate ones through the use of a precision parameter, they show that the ellipsoid algorithm ensures that the number of mistakes is bounded by a polynomial function of the precision parameter. In contrast, we do not make any separation assumption and work with a continuum of cost vector candidates. Also related is \cite{dong2018generalized} who analyze a model with noisy observations of the expert's actions while restricting attention to  a convex optimization problem, and again obtain $\mathcal O(\sqrt{T})$ regret in terms of prediction accuracy. In a different, but related class of problems, \cite{amin2017repeated} study regret in an online problem in which the expert has access to a different type of feedback: in addition to the expert's action, the decision-maker is also provided a binary feedback on the level of suboptimality of the decision which is based on the true unknown set of parameters. 

Relatedly, a class of applications in marketing is choice-based  conjoint analysis, where consumers are provided with a list of products to choose from in a questionnaire. The answers are used to discover the consumer preferences.  In this setting, \cite{Toubia_2004,Saure_Vielma_2019} show how to leverage  polyhedral or ellipsoidal methods to select questionnaires that allow one to recover the consumer utility function. In these, the decision-maker chooses the optimization problems that will be solved by the consumer through the sequence of questionnaires. In contrast, when interpreting our model in this application, the optimization problems solved by the consumer can be arbitrary and cannot be optimized by the decision-maker. Our online contextual inverse optimization formulation is also related to studies on online contextual pricing and contextual search (\cite{roth2016watch}, \cite{lobel2018multidimensional},  \cite{leme2018contextual}, \cite{cohen2016feature}, and  \cite{liu2021optimal}). While we are able to leverage some ideas from this literature such as the use of ellipsoids and projected knowledge sets, we highlight, however, that the nature of the problem studied here is substantially different from the contextual pricing/search literature due to the nature of the feedback, as there is no control in our setting on the feedback one sees. The recent studies \cite{feng2018learning} and \cite{chen2021model} study robust assortment and price optimization based on revealed preferences associated with choice data.    Additional early works that focus on learning utility functions from revealed preferences include  \cite{beigman2006learning}, \cite{zadimoghaddam2012efficiently}, \cite{balcan2014learning}, \cite{amin2015online}.

Imitation learning (\cite{osa2018algorithmic}) is also a framework where the goal is to learn from optimal actions (or demonstrations). When inverse reinforcement learning algorithms are used, the goal is to discover the reward function associated with a Markov decision process (MDP) that describes the problem of interest. In this stream, two important approaches are feature-matching (\cite{abbeel2004apprenticeship}) and the maximum entropy method (\cite{ziebart2008maximum}). In both frameworks, the reward function is described by an  unknown linear combination of a vector of known features. The goal is to learn the weights of the reward function. In \cite{ratliff2006maximum}, the authors use the Maximum Margin Planning framework, also commonly used in structured prediction problems  that we describe below, for the online problem that consists of solving a sequence of MDPs. The authors apply algorithmic techniques from the online convex optimization literature to obtain a $\mathcal O(\sqrt T)$ regret in terms of the prediction error. There, at each time $t$, the decision-maker faces a different MDP. We refer to \cite{arora2021survey} for a review in algorithmic approaches to inverse reinforcement learning, and to \cite{BBS2021} for a recent application in the context of learning what drives decisions of high performance workers in order to  train and increase performance of new or less experienced workers.% with great financial benefit for firms .

Finally, our work  also relates to problems in structured prediction (\cite{taskar2005learning,nowozin2011structured,osokin2017structured}), where we observe a set of covariates denoted as the input and another set of covariates denoted as the output, and the goal is to discover a mapping from the inputs to the outputs. Several approaches for solving structured prediction problems assume an underlying optimization problem parametrized by the inputs, a stream of methods usually referred to as ``the argmax formulation.'' The objective function to be optimized can be seen as a score function, and the solution that maximizes the score is given by the best possible prediction for the output.  An early and influential framework to solve the argmax formulation is that of Maximum Margin Planning by \cite{taskar2005learning}. The Conditional Random Field formulation for structured prediction introduced in \cite{sutton2006introduction} can also be seen as an argmax formulation where the score function is the likelihood function of the data observed.
 
\section{Problem formulation}\label{sec:pb-form}

A contextual optimization problem will be defined by two objects: the first is a feasible set of actions $\Fset$, which is a compact subset of an Euclidean space $\mathbb{R}^n$, and the second is a context function $f \in \mathcal F$, where $\mathcal F$ denotes the set of  $L$-Lipschitz continuous functions from $\mathbb R^n$ into  $\mathbb{R}^d$. As a normalization, we restrict our analysis to the case $\Fset \in \mathcal{B}$, where $\mathcal{B}$ denotes the set of all compact subsets of $\mathbb{R}^n$ with diameter at most 1, i.e., $\sup_{x_1,x_2 \in {{\cal X}}}\|x_1-x_2\| \leq 1$ (the operator $\|\cdot\|$ refers to the Euclidean norm). Similarly, we also assume the Lipschitz constant $L$ from the definition of the set of functions $\mathcal F$ is 1. 

 For a given cost vector $c$, feasible set $\Fset$ and context function $f$, we introduce the following problem:
\begin{equation}\label{eq:forward_problem}
 \psi(c,{\cal X},f) = \argmin_{x \in {\cal X}}   f(x)'c.
\end{equation}
We will refer to Problem \eqref{eq:forward_problem} as the \emph{forward (contextual) problem}.  We do not assume that the  problem in Eq. \eqref{eq:forward_problem} satisfies any additional structure such as linearity or convexity, neither that its solution be unique.\footnote{The formulation we consider is fairly general and encompasses various prototypical problems. We discuss some examples in  \Cref{sec:appl}.}

The problem we will face in  both the offline and online versions is one in which the underlying cost vector $c^\star$ will be fixed but \emph{unknown} across problems. In turn, the decision-maker will be trying to optimize an object related to the forward problem (or multiple instances of such a problem in the online setting) but without knowledge of $c^\star$. Information about $c^\star$ will come in the form of past optimal actions that would have been chosen by an expert with knowledge of $c^\star$. We will assume that the vector of unknown parameters  $c^\star$ has dimension  $d \geq 2$ as the case $d=1$ case is trivial. 

Throughout the paper, to lighten notation,  we make  the assumption that $c^\star$ lives in a $d$-dimensional sphere with radius 1, which we denote by $S^d$. However, we note that the analysis we develop does not rely on this assumption and actually leads to performance bounds that can be interpreted as being per unit of the true cost vector norm. We comment further on this point following Theorem \ref{theorem:regret_general_case}.

\subsection{Offline setting}

In the offline setting, we assume that the decision-maker has access to an offline data set ${\cal D}$ consisting of $N$ past problem instances and associated optimal decisions: $\mathcal D = \{\Fset_i,f_i,x^\star_i\}_{i = 1,\cdots, N}$. That is, for each observation $i = 1,\cdots, N$, we are given a set of feasible actions $\Fset_i \in \mathcal B$, a context function $f_i \in \mathcal F$, and an optimal action $x^\star_i \in \psi(c^\star, \Fset_i,f_i)$.

Given a data set $\mathcal D$, the decision-maker can restrict the set of cost vectors to a subset that is consistent with the optimal actions. In particular, we define $C(\mathcal D)$ to be the  set of cost vectors consistent with $\mathcal D$:
\begin{align}\label{eq:knowledge_set_offline}
C(\mathcal D) = \{c \in S^{d}:    c'f_i(x^{\star}_i) \leq c'f_i(x), \;\forall \;x\in {\cal X}_i, \; i = 1,\cdots, N\}.
\end{align}

The decision-maker selects a mapping $\pi$ from the data and the current problem instance $({\cal X},f)$ that materializes, into a feasible action $x^\pi$. We let $\mathcal P$ denote the set of all such mappings. We will measure the performance of a policy through regret, which is the difference between  the decision-maker's loss and the loss that could have been achieved with knowledge of $c^\star$:
\begin{eqnarray}
\mathcal R^\pi\left(c^\star,{\cal X}, f\right) &=&  \bigl(f(x^{\pi})-f(x^\star)\Bigr)'c^\star. \label{def:regret-offline}
\end{eqnarray}

In particular, the decision-maker aims to minimize the worst-case regret:% and we define the minimax regret, given offline data ${\cal D}$ as
\begin{equation}\label{eq:wcr-offline}
%{\cal R}^*({\cal D}) = \inf_{\pi \in {\cal P}} \; 
{\cal WCR}^{\pi}({\cal D}) = \sup_{c^\star \in C(\mathcal D),~\Fset \in \mathcal{B}, ~f \in \mathcal{F}}  \mathcal R^\pi(c^\star, \Fset, f).
\end{equation}

Note we are interested in providing guarantees for a strong type of risk, which is the worst-case loss,
where the worst-case is taken over any possible new optimization problem instance that can materialize
out-of-sample\footnote{While we do not pursue these here, other risk functions could also be considered, for instance, by assuming a
distribution for the instance of optimization problems and computing the expected suboptimallity gap instead of the worstcase.}.

\subsection{Online setting}

In the online setting, the time horizon is denoted by $T$. At  every period $t \in \{1,...,T\}$, the feasible actions set $\Fset_t \in \mathcal B$ and the context function $f_t \in \mathcal F$ are revealed, and the decision-maker needs to select an action $x_t$ in $\Fset_t$. Upon selecting an action, the decision-maker incurs a cost given by  $f_t(x_t)'{c^\star}$, where  $c^\star \in \mathbb{R}^d$ is unknown to the decision-maker. In particular, the incurred cost $f_t(x_t)'{c^\star}$ is never revealed to the decision-maker. At the end of the period, the decision-maker observes an  optimal oracle action $x_t^\star \in \Fset_t$. While the optimal action $x_t^\star$ is revealed too late in period $t$ to be useful in that period, it potentially allows the decision-maker to make better decisions in periods $t+1$ onwards since it contains information about the cost vector ${c^\star}$.

We assume that initially the decision-maker knows only that  $c^\star$ belongs to some  initial knowledge set $C_0$. We assume that $C_0 \subset \mathbb R^d$ is a closed subset of $S^d$, the  sphere with unit radius. We let  $\mathcal{I}_t$ denote the information that is available when making a decision in period $t$. In particular, ${\cal I}_1 = \{\Fset_{1}, f_{1}\}$, and for all $t\ge 1$, 
$\mathcal{I}_{t+1} = \left\{(\Fset_s, f_s, x_s^\pi, x^\star_s): 1\le s \le t \right\}~\cup~\{\Fset_{t+1}, f_{t+1}\},
$ where $x_s^\pi$ will be defined shortly.
We naturally focus on non-anticipatory policies (i.e., policies such that the action in period $t$ is measurable with respect to the history $\mathcal{I}_{t}$). With some abuse of notation from the offline setting, we let ${\mathcal P}$ denote this set of policies, and for any policy $\pi \in \mathcal P$, we denote by $x_t^{\pi}$ the action it prescribes in period $t$. Given the information collected up to time $t$, the decision-maker can restrict the set of cost vectors to a subset that is consistent with the optimal actions observed. In particular, we define $C(\mathcal{I}_t)$ to be the set of cost vectors consistent with ${\mathcal I}_t$:
\begin{align}\label{eq:knowledge_set}
C(\mathcal{I}_t) = \{c \in C_0:    c'f_s(x^{\star}_s) \leq c'f_s(x), \;\forall \;x\in {\cal X}_s, \; s = 1,\cdots, t-1\}.
\end{align}

We will measure the performance of a policy through regret over $T$ periods, which is the difference between the cumulative performance achieved and the cumulative performance that could have been achieved with knowledge of $c^\star$. For  notational simplicity, we define $\vec{\mathcal X}_T = (\Fset_1,...,\Fset_T)$ and $\vec{f}_T = (f_1,...f_T)$. 
 For a given policy $\pi \in {\cal P}$, we define the regret over $T$ time periods as:
\begin{eqnarray}
{\cal R}^{\pi}_T\left(c^\star, \vec{\cal X}_T,\vec f_T\right) &=& \sum_{t=1}^T \bigl(f_t(x_t^{\pi})-f_t(x^\star_t)\Bigr)'c^\star. \label{def:regret}
\end{eqnarray}

The decision-maker's problem is to choose a policy $\pi \in \mathcal P$ in order to minimize its cumulative regret assuming the cost vector $c^\star$, the feasible sets $\vec{\mathcal X}_T$ and the context functions $\vec{f}_T$ are chosen adversarially by nature. In other words, we want to analyze the worst-case regret for a policy $\pi$:
\begin{eqnarray}\label{eq:multi-period-goal}
{\cal WCR}^{\pi}_T(C_0) = \;\sup_{c^\star \in C_0,~\vec{\mathcal X}_T \in \mathcal{B}^T ,~\vec{f}_T \in \mathcal{F}^T}{\cal R}^{\pi}_T\left(c^\star,\vec{\cal X}_T,\vec f_T\right).
\end{eqnarray}

With a slight abuse of notation, we represent the worst-case regret in the online as a function of the initial knowledge set, in contrast with Eq. \eqref{eq:wcr-offline}, where it is a function of the offline data. 

\textbf{Remark on the objectives:} Some previous studies focused on alternative loss functions. In the offline setting, \citep{esfahani2018data, barmann2017emulating} consider the loss defined as $\bigl(f(x^{\star})-f(x^\pi)\Bigr)'c^\pi$, which is the suboptimality gap of the expert with respect to the decision-maker under the decision-maker guess of cost vector. This loss, sometimes referred to as objective loss, is observable and enjoys nice properties such as convexity. We highlight here that, in contrast, in the analysis of both the offline and the online setting, we analyze the regret in terms of suboptimality gap of the decision-maker's action with respect to the expert's actions under the \textit{true cost vector} $c^\star$. The regret we study admits a direct interpretation for inverse optimization problems: it quantifies precisely the suboptimality gap of the decision-maker choices and isolates the ``cost of not knowing $c^\star$.''

In the online setting, \cite{barmann2017emulating,barmann2018online} bound the cumulative losses associated with per period loss given by $\bigl(f(x^{\star})-f(x^\pi)\Bigr)'(c^\pi-c^\star)$, which they later use to establish bounds on the same loss that we work with, namely $\bigl(f(x^\pi) - f(x^{\star})\Bigr)'c^\star$. In the online setting, a contribution of our work is to show that it is possible to analyze the cumulative regret associated with loss $\bigl(f(x^\pi) - f(x^{\star})\Bigr)'c^\star$ directly, and, achieve tighter upper bounds for this quantity.
\section{The Offline Problem}\label{sec:one-period}

In this section, we focus on the offline problem: we start from a data set $\mathcal D$ and aim to find a policy $\pi \in \mathcal P$ to minimize ${\cal WCR}^{\pi}({\cal D})$, as defined in Eq. \eqref{eq:wcr-offline}. To that end, we start by defining a subset of policies whose performance is more amenable to analysis, the class of proxy policies (cf. \Cref{lemma:01}). This result motivates us to define two important geometric objects, the uncertainty angle and the circumcenter of a set (cf. \Cref{def:set_angle}) that impact the worst-case performance of proxy policies. Finally, we establish that an appropriate proxy policy based on the circumcenter achieves the best uniform performance over problem classes (cf. \Cref{theorem:one_period_problem}) and the worst-case performance is given by the uncertainty angle.

%%%%%%%%%%%%%%%%%%%%%%%%%%%%%%
%%%%%%%%%%%%%%%%%%%%%%%%%%%%%%

\subsection{Proxy Policies}\label{app:proxy}

We begin our analysis by defining a class of policies that will play a central role in our paper.

 \begin{definition}[Proxy policies]\label{def:proxy} We say $\pi \in \mathcal P$ is a \emph{proxy policy} if the action selected by the policy $\pi$, $x^\pi$,  is consistent with some cost vector $c^\pi$. In particular, let $\psi$ be as defined in Eq. \eqref{eq:forward_problem} and let $\mathcal P'$ be the set of proxy policies defined as follows 
\[{\cal P}'=\: \Bigl\{ \pi \in {\cal P} :\mbox{ there exists some } c^\pi \in S^d \mbox{ such that } x^{\pi}  \in   \psi (c^{\pi},{\cal X},f) \mbox{ for all } \mathcal X \in \mathcal B \mbox{ and } f \in \mathcal F\Bigr\}. \]
\end{definition}

 For an illustrative example of a policy in $\cal P'$, suppose that $\psi(c,\Fset,f)$ corresponds to a linear program. If $\Fset$ is a polytope, then every policy that chooses an action $x^\pi$ in the interior of $\Fset$ cannot belong to $\cal P'$, since the only cost vector that makes an interior point optimal would be the origin, and the latter does  not belong to the unit sphere. In this case, a policy in ${\cal P}'$ would only induce decisions corresponding to extreme points. 
  Through proxy policies, one can think about choosing costs vectors  instead of actions of possibly complicated instances $\Fset$ chosen by nature. This idea appears in the literature and was used in problems such as structured prediction  and  online inverse optimization (see, for instance,  \cite{taskar2005learning} and \cite{barmann2017emulating}).

Because a proxy policy is defined in terms of a cost vector, we will, with a slight abuse of notation, refer to a proxy policy by its cost vector $c^\pi$, and use $x^\pi$ to represent an action implied by $c^\pi$. This representation creates a bit of ambiguity with respect to the value of $x^\pi$ when the forward problem $\psi(c^{\pi},{\cal X},f)$ has multiple optimal solutions. To address this ambiguity, we will consider the worst-case choice of $x^\pi$ when introducing our loss function for proxy policy. Recall that $c^\star$ is the true cost vector and $x^\star$ is an optimal solution of the forward problem given $c^\star$. Then, the worst-case  regret for a policy $c^\pi$ given and a true cost vector $c^\star$ is given by:
\begin{eqnarray}\label{pb-0} 
 \mathcal{L}(c^\pi, c^\star) &=& \sup_{{x}^\pi \in \psi(c^\pi,{\cal X},f),\;\mathcal X \in \mathcal B,\; f \in \mathcal F }  ~ \Bigl(f(x^\pi)-f(x^\star)\Bigr)'c^\star.
 \end{eqnarray}

We define $\theta(\cdot,\cdot)$ to be the angle between two vectors, i.e., $\theta(c,\hat c) = \arccos c'\hat c/\|c\|\|\hat c\|$, with $\theta(c,0)$ defined to be zero. Our next result shows that we can bound the worst-case regret loss under policy $c^\pi$ using the angle between the vectors $c^\pi$ and $c^\star$. This bound is tight in the sense that there exists $\mathcal X$ and $f$ for which this bound holds with equality.

  \begin{lemma}[Realized worst-case regret]\label{lemma:01} Let $c^{\pi}$ be a policy in $\mathcal P'$ and $c^\star$ be a cost in $S^d$. Then,
    \begin{eqnarray*}
       \mathcal{L}({c}^\pi, c^\star)
       &=& \begin{cases}
      \sin \theta(c^\star,{c}^\pi) & \mbox{if }\theta(c^\star,{c}^\pi) < \pi/2, \\
      1 & \mbox{otherwise}.
   \end{cases}
    \end{eqnarray*}
    \end{lemma}
    
This lemma proves that if the angle between $c^\pi$ and $c^\star$ is small, then the regret loss due to policy $c^\pi$ is small. If the angle is $\pi/2$ or larger, then we risk incurring the maximal regret of 1. The proof of the lemma is based on, first constructing a semi-definite relaxation of the optimization problem  from Eq. \eqref{pb-0}, then explicitly solving it, and finally constructing an instance to show that the relaxation is tight. 

 \Cref{lemma:01} yields significant intuition: if ones wants to minimize the worst-case regret for policies in $\cal P'$, then one should select the proxy cost vector $c^\pi$ that minimizes the worst-case angle with respect to the unknown $c^\star$. We explore this idea in the next subsection.

%%%%%%%%%%%%%%%%%%%%%%%%%%%%%%
%%%%%%%%%%%%%%%%%%%%%%%%%%%%%%

\subsection{The Uncertainty Angle and the Circumcenter} \label{sec:geom_objects} 

We now define two geometric objects that will be key to our analysis: the uncertainty angle and the circumcenter of a set. 

\begin{definition}[Uncertainty angle and circumcenter]\label{def:set_angle}
Let $C$ be a nonempty set. We define the \emph{uncertainty angle} of $C$, $\alpha(C)$, as
\begin{equation}\label{eq:def-alpha}
\alpha(C) = \inf_{\hat{c} \in S^d}  \sup_{c \in C} \theta (c,\hat{c}).
\end{equation}
If the infimum is attained, we call the minimizer of the equation above the \emph{circumcenter} $\hat c(C)$ of set $C$.
\end{definition}

To understand the notion of uncertainty angle, it is useful to think in terms of revolution cones that contain the set $C$. A revolution cone $ K(\hat c,\gamma)$ in $\mathbb R^d$ is defined by two objects, an axis $\hat c \in S^d$ and an aperture angle $\gamma \in [0,\pi/2)$, and consists of all points that are within an angle $\gamma$ of $\hat c$ (see  \Cref{fig:uncertainty_angle}). If we assume that $C$ is contained within a revolution cone and that $K(\hat c, \gamma)$ is the revolution cone with the smallest aperture angle containing $C$, then $\alpha(C) = \gamma$. In this case, the vector $\hat c$ is the circumcenter of the set $C$. In other words, the circumcenter $\hat c(C)$ of a set $C$ is the axis of the revolution cone containing it that has the smallest aperture angle. Our definition of circumcenter is a slight generalization of the one used in \cite{henrion2010inradius}, which defines the notion of the circumcenter of a cone (our definition applies to an arbitrary nonempty set $C$). If $C$ is not contained in a revolution cone, $\alpha(C)$ could take values from $\pi/2$, if $C$ is a half-space, to $\pi$, if $C$ is the full Euclidean space. When $\alpha(C) \geq \pi/2$, we will continue to refer to a vector $\hat c(C)$ that minimizes Eq. \eqref{eq:def-alpha} as the circumcenter of $C$, even though  $C$ is not contained in a revolution cone.

\begin{figure}[ht]
    \centering
    \includegraphics[scale = .4]{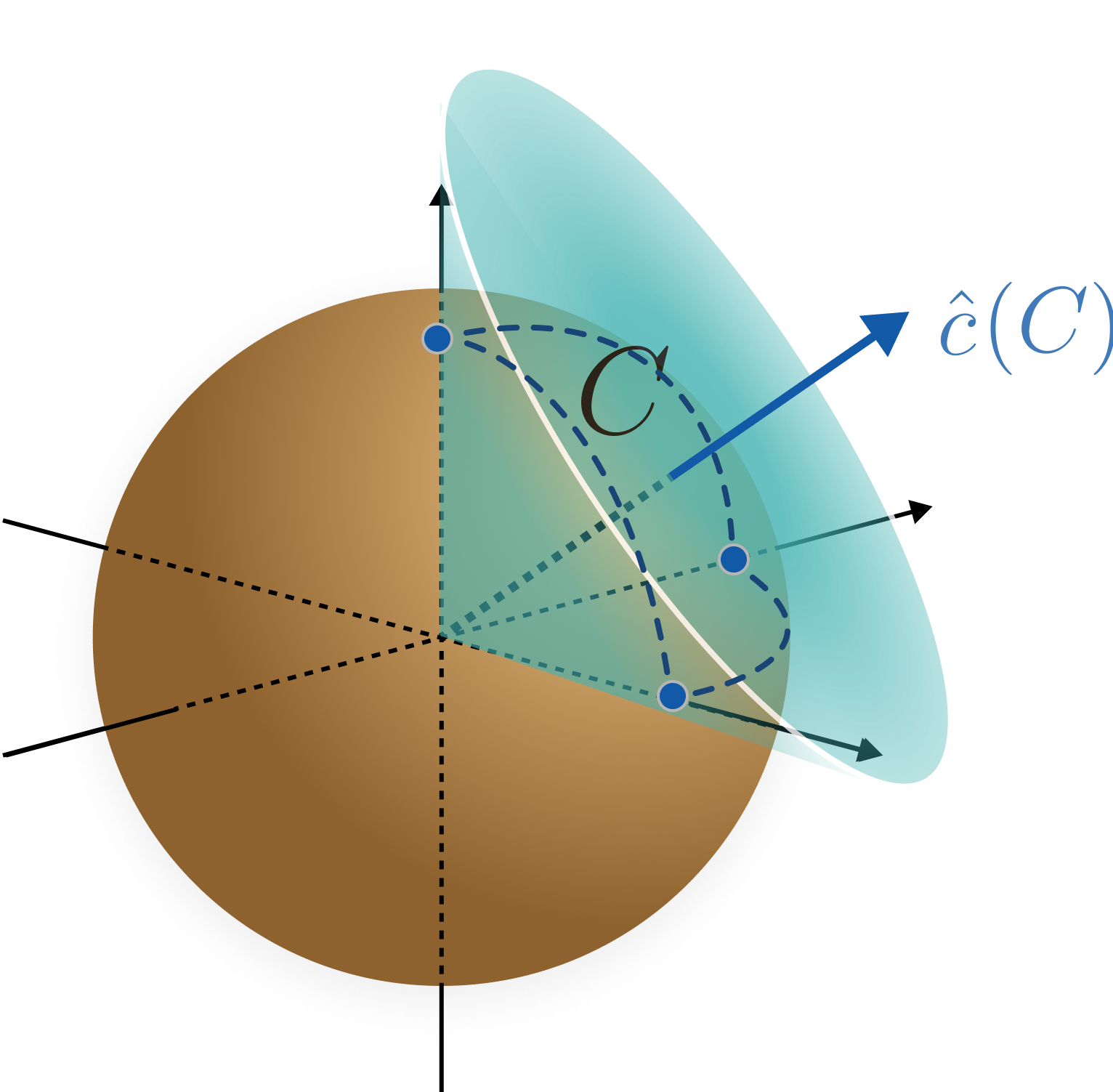}
    \caption{Set $C$ (intersection of the nonnegative orthant with the unit sphere) and the smallest aperture-angle revolution cone (with axis $\hat c(C)$) containing it. We increased the norm of $\hat c(C)$ for illustrative purposes.}
    \label{fig:uncertainty_angle}
\end{figure}

\begin{lemma}[Existence and uniqueness of circumcenter] Let $C$ be a nonempty set. There always exists a circumcenter $\hat c(C) \in S^d$ that minimizes the right-hand side of Eq. \eqref{eq:def-alpha}. Furthermore, if $C \setminus \{0\}$ is nonempty and $\alpha(C) < \pi/2$,  then the circumcenter is unique.
\label{lemma:existence_circuncenter}
\end{lemma}

\subsection{The Circumcenter Policy and the Offline Minimax Regret} \label{sec:theorem_one}

The circumcenter of a set $C$ is the point that minimizes the angle of other points in $C$ with respect to it. Therefore, a potentially good policy for the offline problem is to treat the circumcenter of $C$ as a proxy cost. That is, we choose the action $x^\pi \in \mathcal X$ by solving Eq. \eqref{eq:forward_problem} with  $c = \hat c(C)$. We call this policy the \emph{circumcenter policy} and use the representation $c^\pi = \hat c(C)$ to denote the proxy cost associated with it.

\begin{theorem}[Offline minimax regret]\label{theorem:one_period_problem}
The worst-case regret of the circumcenter policy $\pi$ is upper bounded as follows
	$$
	 {\cal WCR}^\pi({\cal D}) \: \le \:  
     \mathbf{1}\{\alpha(C({\cal D})) < \pi/2\} \: \sin \alpha(C({\cal D})) \: +\: \mathbf{1}\{\alpha(C({\cal D})) \ge \pi/2\}.
   $$

Furthermore, for any angle $\bar \alpha \in [0,\pi]$, there exists an information set $C$ such that $\alpha(C) = \bar \alpha$ and no policy $\pi \in {\cal P}$ can achieve worst-case regret lower than $\sin (\bar \alpha)$ (respectively $1$)  if $\bar \alpha < \pi/2$ (respectively $\bar \alpha \ge \pi/2$).  
\end{theorem}

\Cref{theorem:one_period_problem} characterizes the minimax regret as a function of the  uncertainty angle $\alpha(C)$. When $ \alpha(C) < \pi/2$,  the best uniform (over all knowledge sets) regret is equal to $\sin \alpha(C)$. If $\alpha(C) \geq \pi/2$, then nature may be  able to cause maximal regret (the maximal regret is 1 because we assumed that $c^\star$ has norm 1, the set $\mathcal X$ has diameter at most 1, and the context function $f$ is 1-Lipschitz continuous). In particular, for the circumcenter policy, the minimax regret for any offline data $\mathcal D$ is an instance-dependent bound given by $\sin \alpha(C(\mathcal D))$ if $\sin \alpha(C(\mathcal D)) < \pi/2$, or $1$ otherwise. This result highlights the driver of regret in offline problems and will be a crucial stepping stone in the online setting.

\section{The Online Setting: Initial Analysis and Intuition}\label{sec:circumcenter_policy}

The previous section constructs a policy called the circumcenter policy for the offline problem that offers robust performance guarantees. This section presents a set of initial results in the online setting that highlight the intricate interplay between learning and regret in this class of problems.

The offline learning problem is equivalent to the online learning problem with $T=1$, so repeatedly applying the circumcenter policy will correspond to a greedy policy in the online setting.  We establish that such a policy can fail (cf. \Cref{lemma:insuff_circumcenter}), even in ``benign'' cases. This will highlight that in this class of problems it possible have poor regret while also not collecting any meaningful new information on the unknown cost vector.  To build ideas gradually, we first zoom in on the nature of information collection in this class of problems in \Cref{sec:information} given the type of feedback collected. Focusing on the case when the initial knowledge set $C_0$ satisfies $\alpha(C_0) < \pi/2$, we then introduce in \Cref{sec:ellip_cones} an approach that considers supersets of the knowledge sets (in the form of ellipsoidal cones) and an approach to update these. These supersets ensure that useful information is collected over time even if one  uses a greedy approach (choosing the circumcenter of the superset as a proxy cost), which will automatically balance the trade-off between instantaneous performance and information gain. In \Cref{sec:theorem_2}, we propose an algorithm for the pointed case and derive an upper bound on its performance in \Cref{sec:theorem_2}. We show that when the initial knowledge set is pointed, the \texttt{EllipsoidalCones} algorithm  ensures logarithmic regret in the time horizon.

\subsection{The Greedy Circumcenter Policy}\label{app:circumcenter}

\Cref{theorem:one_period_problem} implies that the circumcenter policy is optimal in a worst-case sense for the one-period problem. A natural candidate policy for the multi-period problem is therefore to simply use the circumcenter policy in each period, a policy we call the \emph{greedy circumcenter policy}. Formally,
\[\pi_{greedy} = \: \Bigl\{ x^\pi_t \in \psi(\hat c(C(\mathcal I_t)),\Fset_t,f_t)), \; \mbox{for every }t \leq T\Bigr\}, \label{def:greedy}
\]
where, $
C(\mathcal{I}_t) = \{c \in C_0:    c'f_s(x^{\star}_s) \leq c'f_s(x), \;\forall \;x\in {\cal X}_s, \; s = 1,\cdots, t-1\}$. This policy might seem appealing due to its simplicity. However, the next proposition shows that  the worst-case regret incurred by the greedy circumcenter policy is linear in the time horizon, even if the initial knowledge set is pointed.

\begin{theorem}[Insufficiency of the greedy circumcenter policy]\label{lemma:insuff_circumcenter}
 There exists knowledge sets $C$ such that the worst-case regret of the policy $\pi_{greedy}$ satisfies
	$$
    {\cal WCR}^{\pi_{greedy}}_T(C) = \Omega(T). 
	$$
\end{theorem}

The proof of this proposition provides insights into the limitation of the greedy circumcenter policy for the multi-period problem. We detail here the key drivers of this linear regret through an example. Consider the example presented in Figure \ref{fig:inverse_exploration_fail} with initial knowledge set $C_0$.  The associated circumcenter  $\hat{c}(C_0)$ is on the boundary of the knowledge set (See Figure \ref{fig:inverse_exploration_fail}(a)). %\ref{fig:greedy_initial_set} and  \ref{fig:greedy_circumcenter}). % (see \Cref{fig:set_C0} in  \Cref{example:01}). 
In such a case, nature can construct adversarial instances $\{\Fset_t,f_t\}$ such that the decision-maker will not be able to update meaningfully the knowledge set and, \textit{at the same time}, incur positive regret. Indeed, suppose that nature selects the identity as the context function and a feasible set with two actions, $\mathcal X_1 = \{x_1,0\}$, where $x_1$ is such that $ -\epsilon \leq \hat{c}(C_0)'x_1 < 0$ for a small and positive $\epsilon$, so that $x_1$ is strictly better than $\{0\}$ but with a small margin (Figure \ref{fig:inverse_exploration_fail}(b)). Then the circumcenter policy would prescribe to select $x_1$. Suppose that $c^\star$ is as depicted in the figure. Then $x_1^\star=0$ and the updated information after observing $x_1^\star$ would be minimal. Indeed, we would have $C({\cal I}_2)= C_0 \cap \{c ' x_1 \ge 0\}$, which would almost coincide with $C_0$. We depict the updated set in  Figure \ref{fig:inverse_exploration_fail}(c).  %(See Figure \ref{fig:greedy_initial_instance} and \Cref{fig:greedy_update}). 
If nature repeatedly uses perturbations of such an instance,  the cumulative regret will be linear.

\begin{figure}[ht]
  \begin{subfigure}{0.33\textwidth}\label{fig:greedy_circumcenter}
    \includegraphics[width=\linewidth]{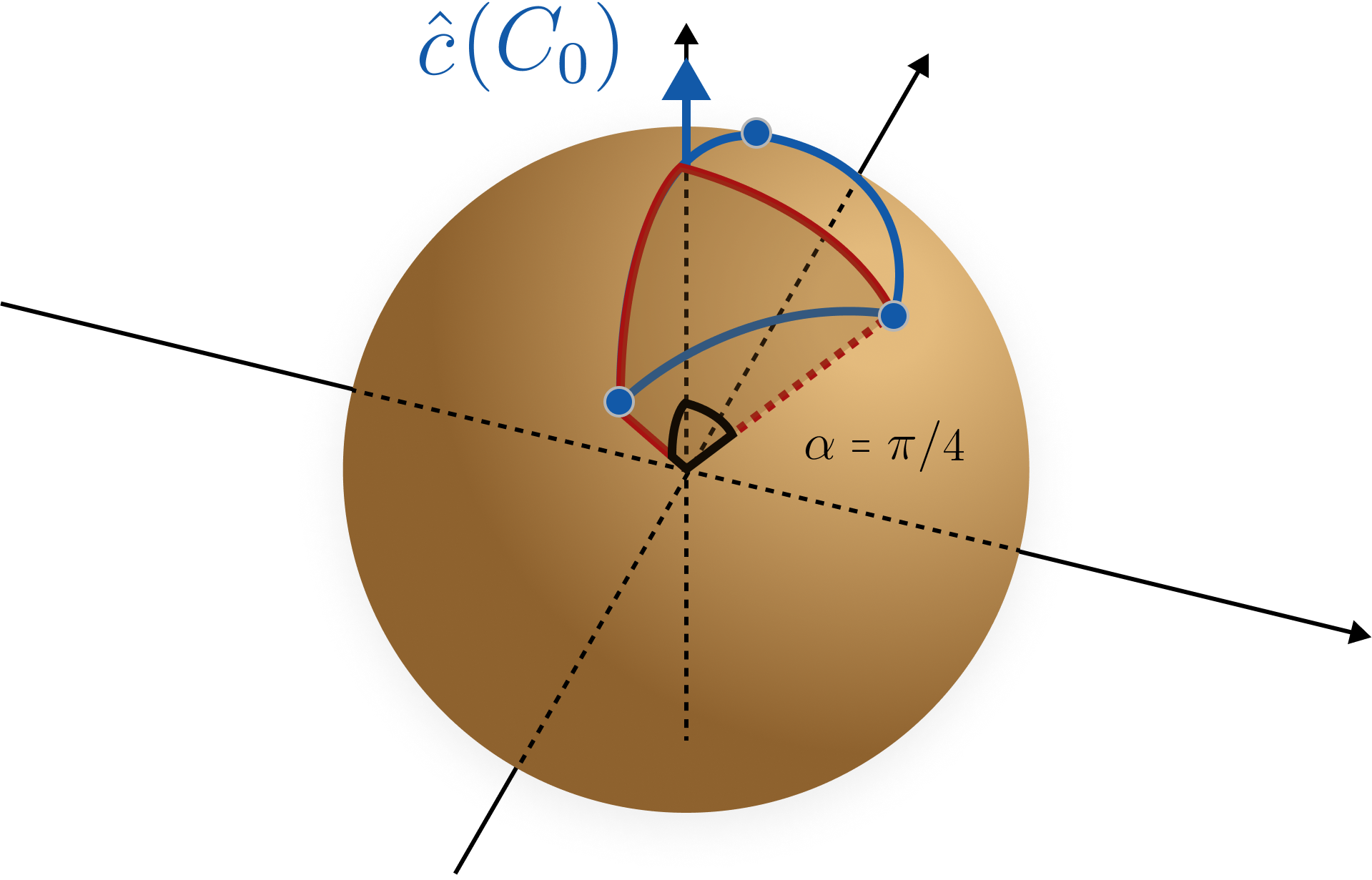}
    \caption{}
  \end{subfigure}%
  \hspace*{\fill}   
    \begin{subfigure}{0.33\textwidth}\label{fig:greedy_initial_instance}
    \includegraphics[width=\linewidth]{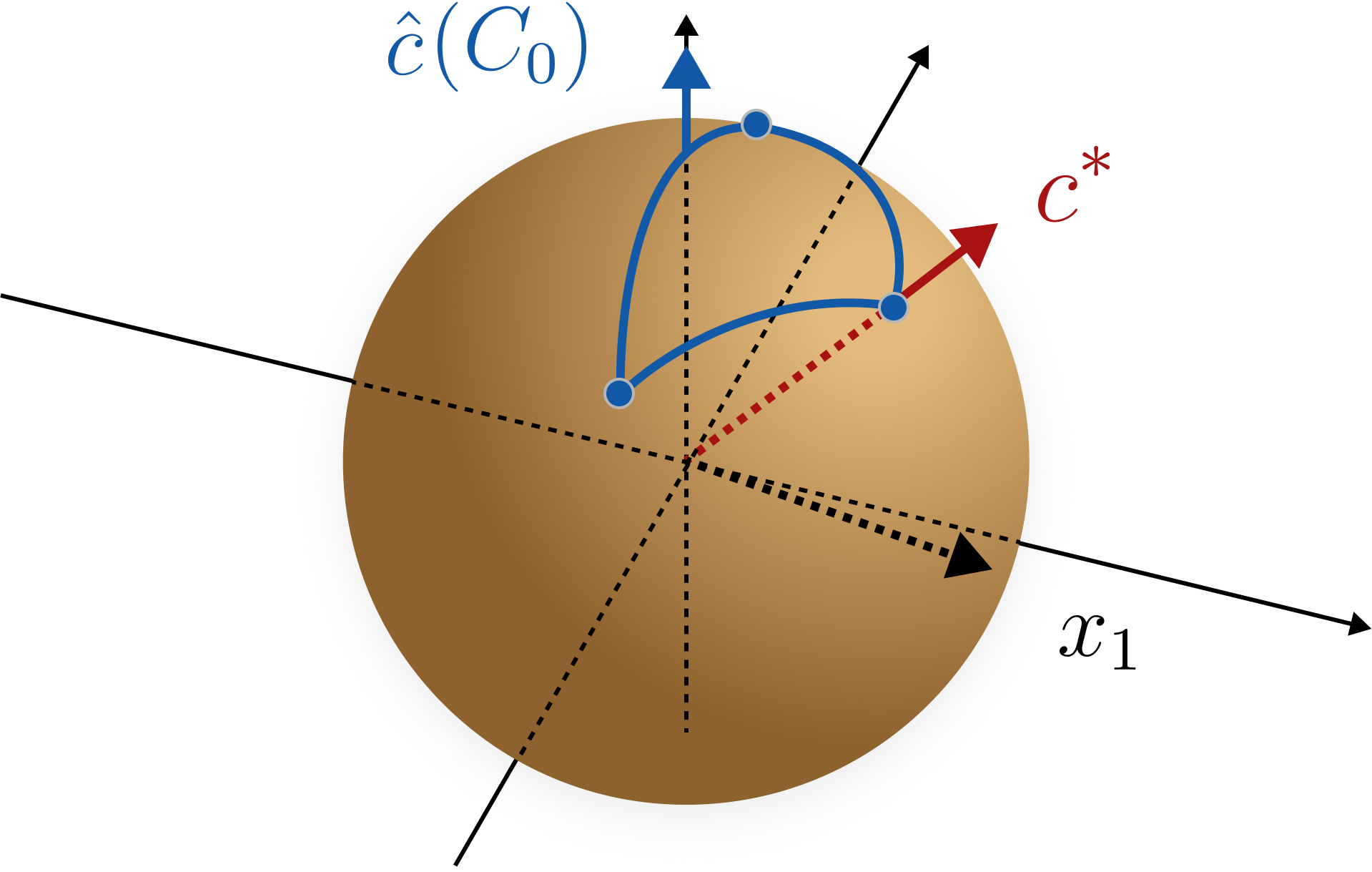}
    \caption{}
  \end{subfigure}%
  \hspace*{\fill}   % maximize separation between the subfigures
  \begin{subfigure}{0.33\textwidth}\label{fig:greedy_update}
    \includegraphics[width=\linewidth]{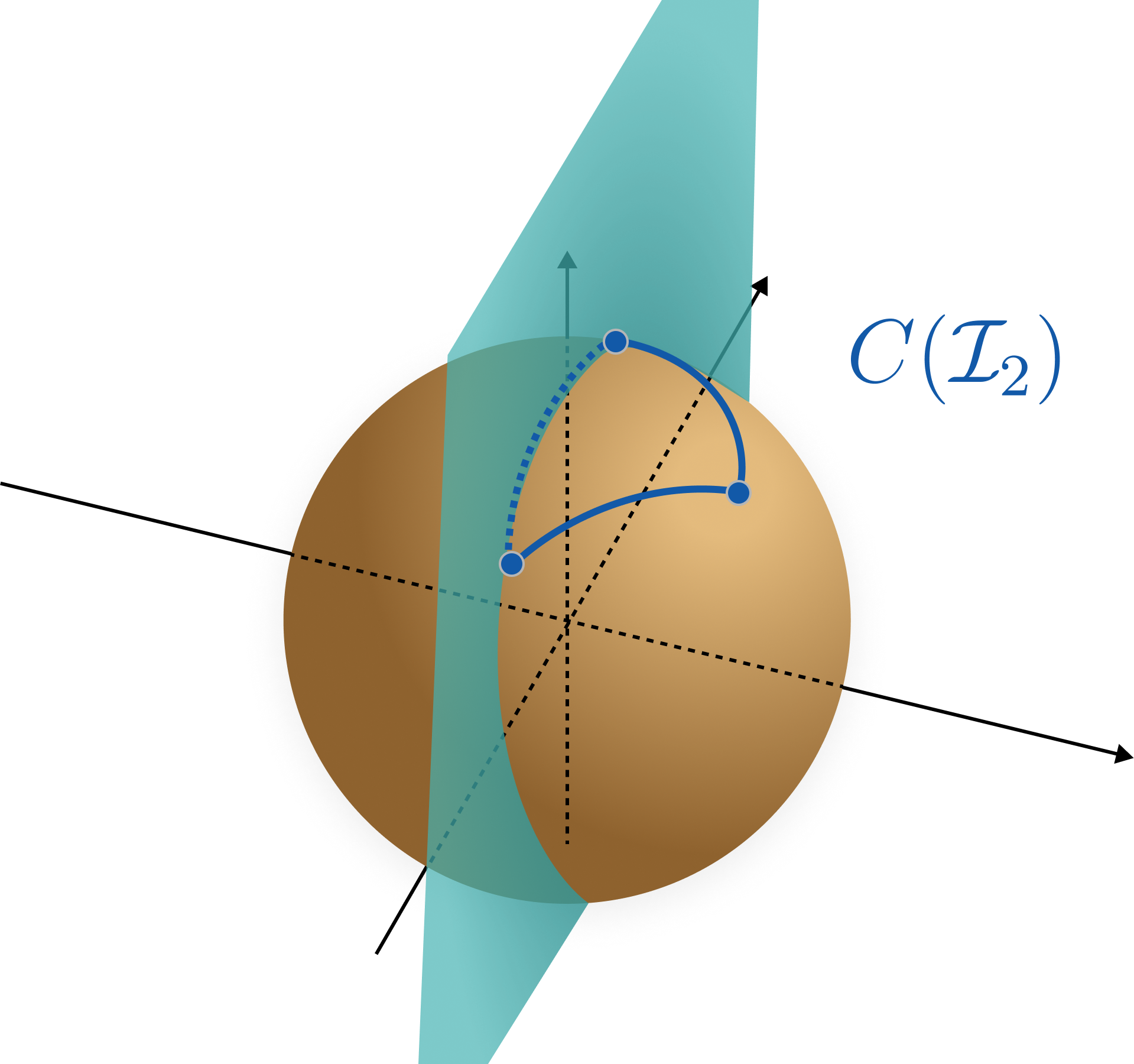}
    \caption{}
  \end{subfigure}%
  \hspace*{\fill}   
  \caption{Example where the greedy policy fails. In $(a)$, we have the initial knowledge set $C_0$ where the circumcenter lies in the border of $C_0$ and achieves angle equal to $\pi/4$ with the extreme points. In $(b)$, the optimization instance $f_1$ as the identity, $\mathcal X_1 = \{x_1,0\}$ and the true cost vector. In $(c)$ the updated set with minimal information collection despite the high regret.}
  \label{fig:inverse_exploration_fail}
\end{figure}

 Note that the knowledge set at time $t+1$ can be represented by the intersection of the knowledge set at time $t$ with all of the halfspaces that characterize the optimality of $x_t^\star$, i.e.,
\begin{equation*}
C(\mathcal{I}_{t+1}) = C(\mathcal{I}_t) \cap \{c \in \mathbb{R}^{d}:    f_t(x^{\star}_t)'c \leq f_t(x)'c, \;\forall \;x\in {\cal X}_t\}.
\end{equation*}
That is, we update the knowledge set by intersecting it with a collection of half-spaces. Thus, in general, the risk of the circumcenter being on the boundary of the knowledge set is a significant one.

At a higher level, the greedy circumcenter policy suffers because it does not introduce sufficient tension for nature's problem. When nature tries to counter the policy, it is able to  both inflict high regret and limit information collection for the decision-maker. This highlights that, in order to counter nature, the decision-maker needs to design a policy that \textit{induces} nature to trade-off instantaneous regret performance and the prevention of information collection. This would allow the decision-maker to accumulate information over time. As the decision-maker can only learn indirectly by forcing nature to reveal information about the nature of the costs, we refer to this phenomenon as \emph{inverse exploration}. \Cref{lemma:insuff_circumcenter} is not overly conservative in the sense that the instances used to prove the result are simple sequences of linear objective functions with two actions available. In fact, the main driver of the result is the dynamics of information collection and how the knowledge set evolve over time. Therefore, in the next two subsections, we describe this dynamic in detail and how to avoid the pitfalls of the greedy circumcenter policy presented in \Cref{lemma:insuff_circumcenter}.

\subsection{Information Collection and Knowledge Set Update}\label{sec:information}

At each period, after observing the expert's action $x_t^\star$, the knowledge set update corresponds to adding  constraints $(f_t(x)-f_t(x^\star))'c \geq 0$ for each $x \in \mathcal X_t$. Let us define: 
\[\delta_t(x) = \frac{f_t(x)-f_t(x^\star_t)}{\|f_t(x)-f_t(x^\star_t)\|},\]
with $\delta_t(x) = 0$ if $f_t(x)-f_t(x^\star_t) = 0$. Then, the full update constraints can be rewritten as $\delta_t(x)'c \geq 0$ for all $x \in \mathcal X_t$. This is a fairly complicated procedure since the object $\{c \in \mathbb{R}^{d}:    \delta_t(x) \geq 0, \;\forall \;x\in {\cal X}_t\}$ is the intersection of potentially infinitely many halfspaces. Therefore, instead of focusing on all of the halfspaces, we will focus only on one specific halfspace, a procedure we will call the relaxed update. Denote $\delta^\pi_t = \delta_t(x^\pi_t)$ as the \emph{effective difference}, which is the vector associated with the action $x_t^\pi$ chosen at period $t$. We use the name effective difference because each period regret is simply the inner product of $\delta^\pi_t$ with $c^\star$ (up to the norm of $\delta_t^\pi$, which is bounded by 1). We will define the relaxed update to be equal to \begin{equation} \label{eq:relaxed_update}C(\mathcal I_t)\cap \{c \in \mathbb R^d:\; {\delta^\pi_t}'c \geq 0\}.\end{equation} That is, in the relaxed update we include only one new constraint at time $t$, the one associated with what we call the effective difference. The effective difference satisfies two important inequalities that will be crucial in order to understand how inverse exploration manifests itself. First, ${\delta_t^\pi}'c^\star \geq 0$ since $x_t^\star$ is optimal with respect to $c^\star$. Second, for any proxy policy $c^\pi_t$,  we have that $x_t^\pi$ is optimal with respect to $c_t^\pi$, which implies that $f_t(x_t^\pi)'c^\pi_t \leq f_t(x)'c_t^\pi$ for all $x \in \Fset_t$, and thus, ${\delta_t^\pi}'c^\pi_t \leq 0$. In turn, we have the following implication on the link between  $c^\pi_t$ and $C(\mathcal I_{t+1})$ if ${\delta_t^\pi} \neq 0$: 
\begin{equation}\label{eq:feasible_feedback}
{\delta_t^\pi}'c^\pi_t \leq 0\mbox{ and }{\delta_t^\pi}'c^\star \geq 0, \mbox{ for  }c^\star \in C(\mathcal I_t) \implies c_t^\pi \notin \mbox{int}(C(\mathcal I_{t+1})). 
\end{equation} 

The two conditions in Eq. \eqref{eq:feasible_feedback} imply that if $c_t^\pi$ is ``sufficiently" in the relative interior of $C(\mathcal I_t)$, only two cases can happen: either  we are able to remove a sufficient mass of candidates when updating $C(\mathcal I_{t+1})$,  or $\delta_t^\pi = 0$, implying no regret. Therefore, there exists a tension in this problem between minimizing instantaneous regret (which is achieved via the circumcenter policy) and gaining more information (which is achieved by selecting a proxy cost sufficiently in the interior of the knowledge set). 

 \begin{remark}[nature of the feedback] Here we highlight an important and fundamental difference from the contextual search literature discussed in Subsection \ref{sec:lit}.  There, the vector $\delta_t^\pi$ (commonly denoted as the contextual information) would have been known before choosing the proxy cost $c_t^\pi$, whereas in our case, $\delta_t^\pi$ materializes only in the end of the period. As a result, here  we can only indirectly affect it by our choice of proxy cost $c_t^\pi$.
 \end{remark}

Next, we show that if one starts with an uncertainty set that lives in a pointed cone, then, it is possible to conduct inverse exploration and ``force'' nature to trade-off between inflicting regret and limiting information collection. Our strategy is to replace the knowledge sets in our algorithm with  regularized supersets that contain $C(\mathcal I_t)$. These supersets will ensure that useful information is collected over time despite us using a greedy approach (choosing the  circumcenter of the superset as a proxy cost), which will automatically balance the trade-off between instantaneous performance and information gain.

\subsection{Ellipsoidal Cones}\label{sec:ellip_cones}

We replace the knowledge sets with ellipsoidal cones, which are better-behaved objects and guarantee the circumcenter is always in the interior of the superset. These supersets will allow us to leverage the powerful machinery from the ellipsoid method \citep{khachiyan1979polynomial} in order to update the knowledge set every time that we collect a new effective difference. 

In \Cref{fig:cone_fit}, $E$ is an ellipsoidal cone that contains the initial knowledge set that was used in \Cref{fig:inverse_exploration_fail}. $C_0$ is a ``bad" instance that can lead to a linear regret with the greedy circumcenter policy since its circumcenter is in the border of the set. $\hat c(E)$ is the circumcenter of the ellipsoidal cone that contains $C_0$ and is ``sufficiently" close to the original circumcenter but also sufficiently in the relative interior of $C_0$ to ensure inverse exploration. 

\begin{figure}[ht]
    \centering
    \includegraphics[scale = .25]{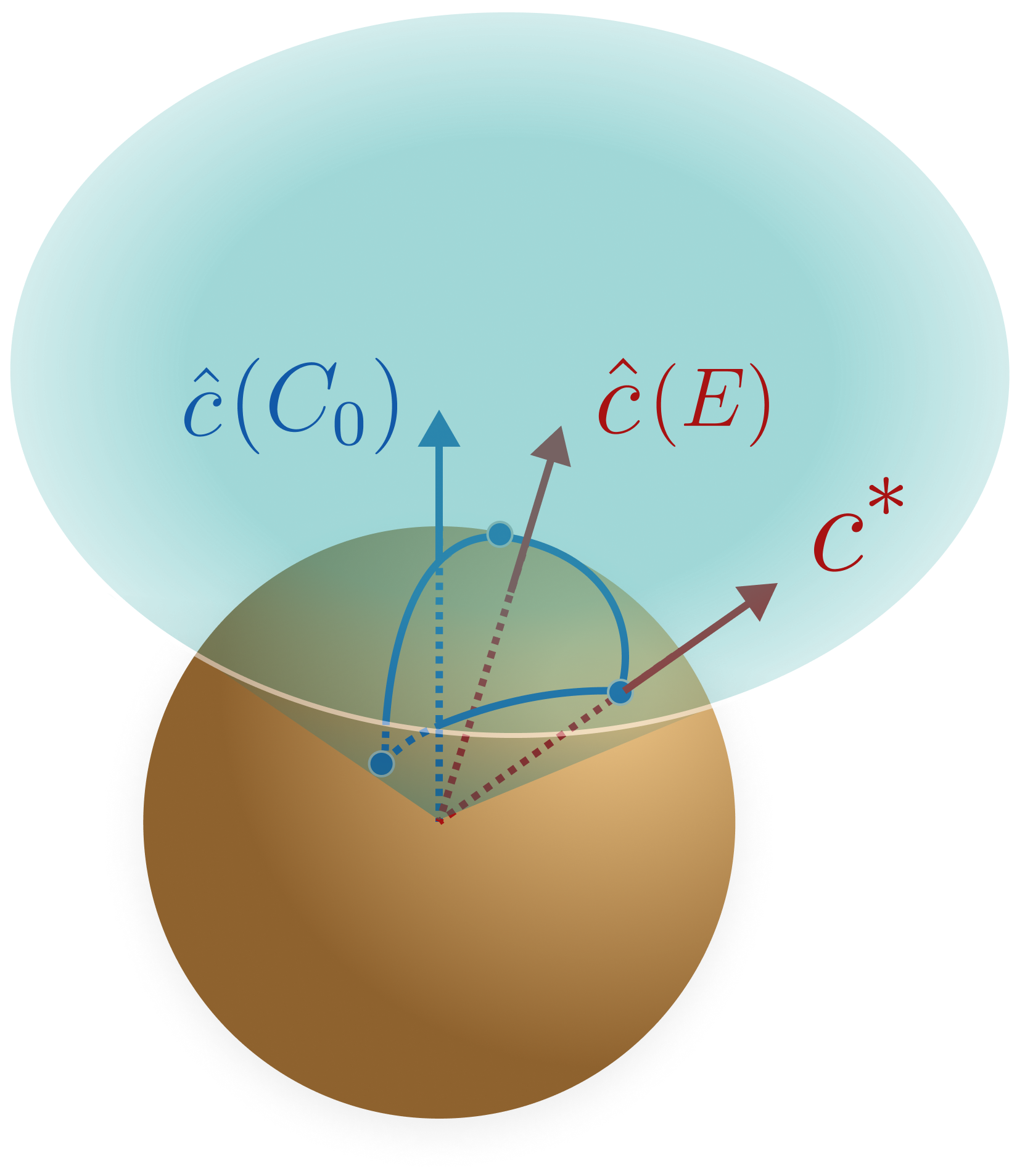}
    \caption{Example of an ellipsoidal cone $E$ that contains an initial knowledge set $C_0$, including the circumcenter of $C_0$, $E$ and a true cost candidate. The vectors were rescaled for illustrative purposes.}
    \label{fig:cone_fit}
\end{figure}

We will need notation to refer to specific components of a vector $c \in \mathbb{R}^d$. We use $c_{[i:j]}$ to denote the entries $i,...,j$ of the vector $c$. Before we present the general definition of an ellipsoidal cone, we introduce the notion of a standard-position ellipsoidal cone, which is one where the axis of the cone is the canonical vector $e_1$, and the eigenvectors of the generating ellipsoid are the canonical vectors $e_2$,...,$e_d$. Let $\mathbb D^d_{++}$ be the set of $d$-dimensional positive-definite diagonal matrices. 

\begin{definition}[Standard-position ellipsoidal cone]\label{def:soc}
We say that a set $E(W) \subseteq \mathbb{R}^d$ is a \emph{standard-position ellipsoidal cone} if there exists a  matrix $W \in \mathbb{D}^{d-1}_{++}$ such that
$$
E(W) = \left\{ c \in \mathbb{R}_+ \times \mathbb{R}^{d-1} : \; c_{[2:d]}'W^{-1}c_{[2:d]} \leq c_{[1]}^2\right\}.
$$
\end{definition}
The definition above implies that $E(W)$ is contained in the halfspace $\{c\in \mathbb R^d:\; e_1'c \geq 0\}$ and that its circumcenter $\hat c(W)$ is equal to $e_1$. We obtain other ellipsoidal cones via rotation.

\begin{definition}[Ellipsoidal cone]\label{def:ellipsoidal_cone}
We say that a set $E(W,U) \subseteq \mathbb{R}^d$ is an \emph{ellipsoidal cone} if there exists an orthonormal matrix $U \in \mathbb R^d \times \mathbb R^d$ and a standard-position ellipsoidal cone $E(W)$ such that
$
c \in E(W,U) \iff U^{-1}c \in E(W).
$
\end{definition}

Since the uncertainty angle is not affected by orthonormal transformations, it follows that, for an ellipsoidal cone $E(W,U)$, the circumcenter is given by $Ue_1$, i.e., $\hat c(E(W,U)) = Ue_1$. 

\textbf{Cone Updates.} A key advantage of keeping track of the knowledge superset as an ellipsoidal cone is that we can build on the ellipsoid method to update our knowledge superset over time. In the classic ellipsoid method, one starts each period with an ellipsoidal feasible set,  finds a cut through the center of the ellipsoid, removing half of the potential solutions, and then one constructs the L\"owner-John ellipsoid of the remaining half-ellipsoid, where the L\"owner-John ellipsoid of a convex set refers to the smallest volume ellipsoid that contains it.  The key argument in the ellipsoid method is that this process reduces the volume of the ellipsoid by $e^{-1/2d}$ at each time step if we are working in a $d$-dimensional space \cite[Lemma 3.1.28, page 74]{grotschel1993ellipsoid}. We will now argue that we can extend this method to work with ellipsoidal cones and that we obtain similar volume reduction guarantees.

\begin{figure}[ht]
  \begin{subfigure}{0.49\textwidth}
    \includegraphics[scale = .2]{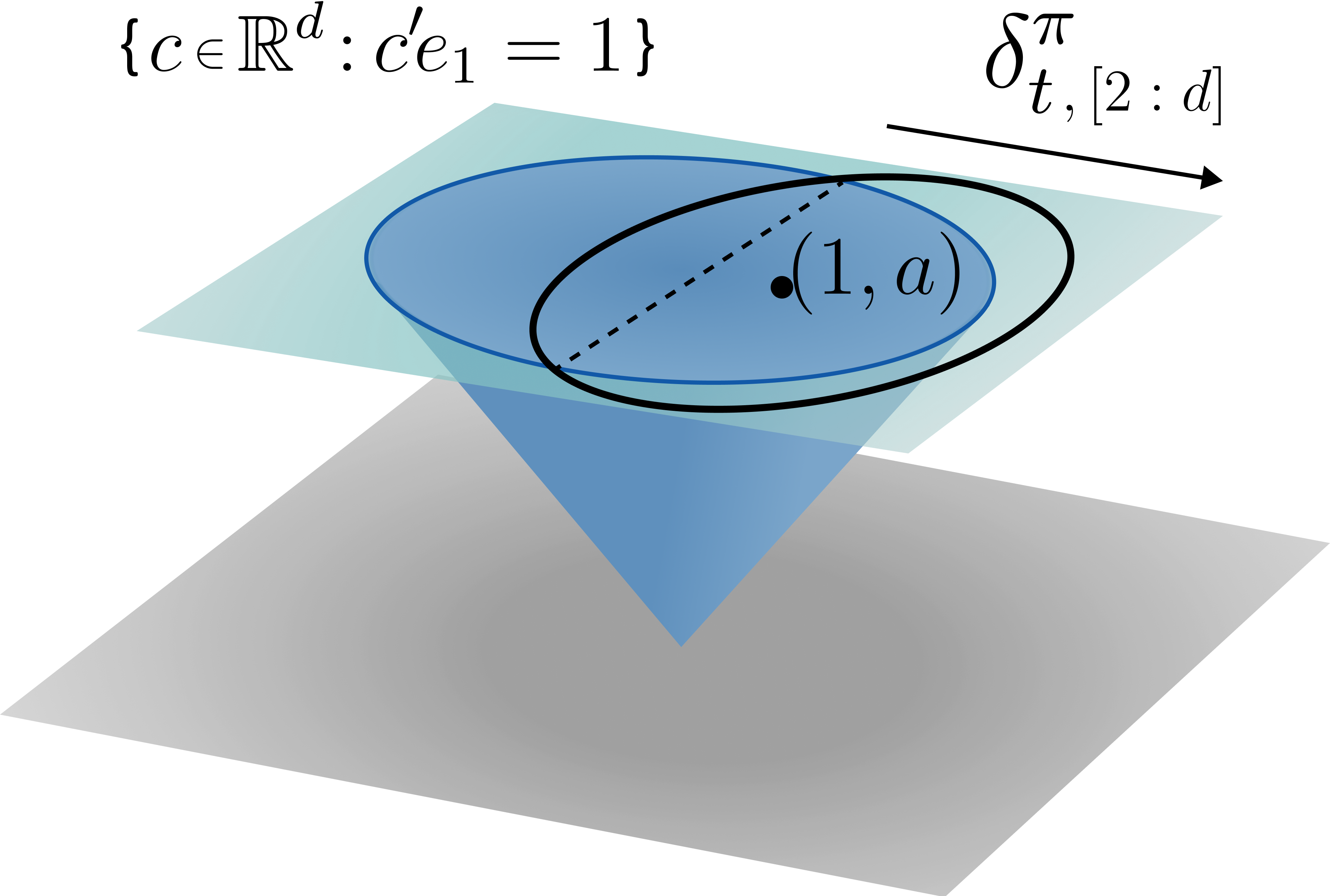}
    \caption{Ellipsoidal cone  intersected with the hyperplane defined by $\hat c(E(W_t,U_t)) = e_1$ and the effective difference projected onto the hyperplane. The updated ellipsoid (in black) remains on the hyperplane and is defined by the center $a$ and the matrix $N$ calculated with the standard ellipsoid method.} \label{fig:algo_a}
  \end{subfigure}%
  \hspace*{\fill}   % maximize separation between the subfigures
  \begin{subfigure}{0.49\textwidth}
    \includegraphics[scale = .2]{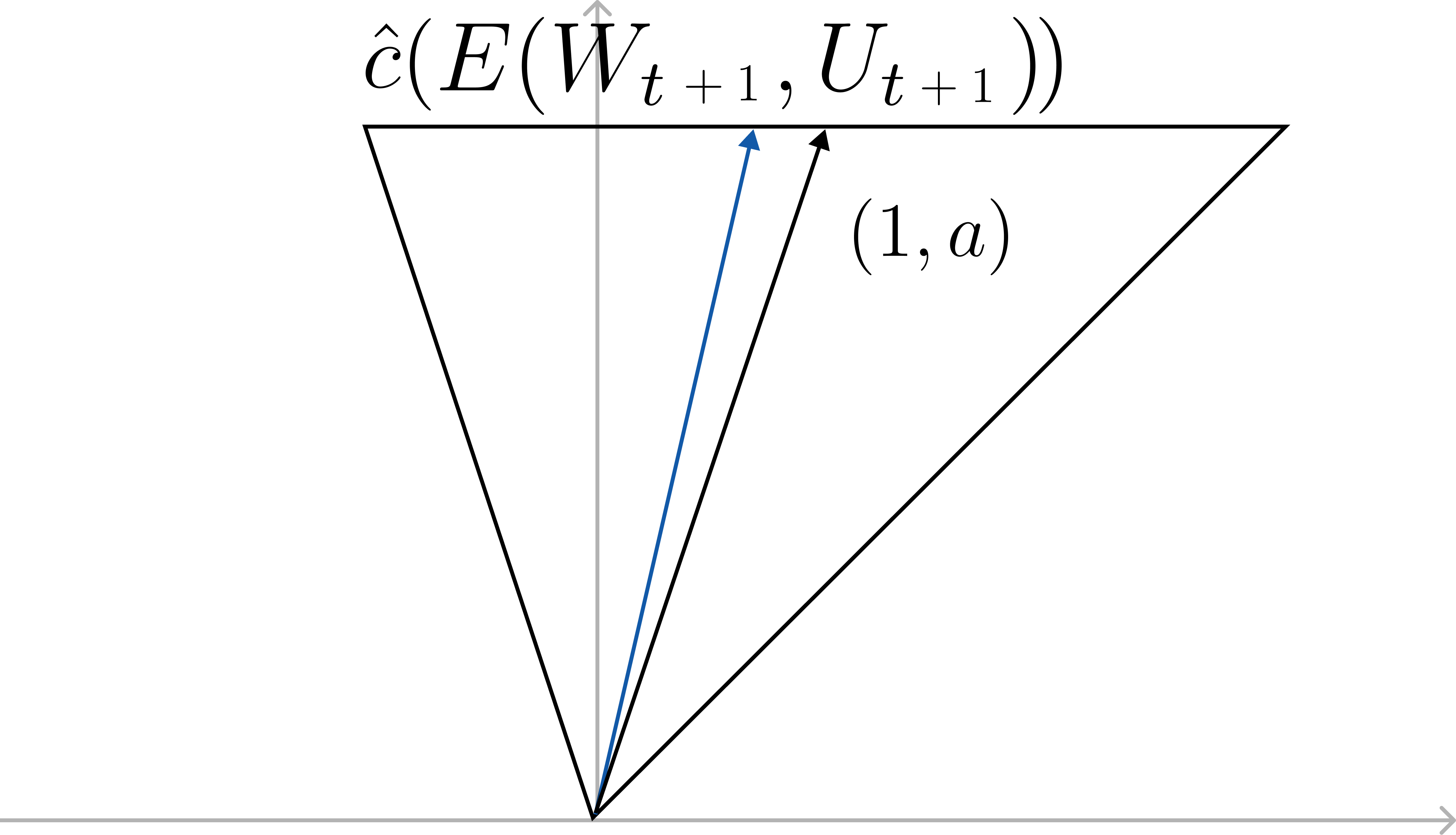}
    \caption{Cross-section of the updated ellipsoidal cone. Note that the vector $(1,a)$ obtained directly by standard ellipsoid method does not give us the circumcenter of the new updated ellipsoidal cone, which is computed instead via spectral decomposition.} \label{fig:algo_b}
  \end{subfigure}%
  \caption{The steps required to perform an update of the ellipsoidal cone.}
\end{figure}

We  now proceed to describe our cone-update algorithm. Intuitively, our  cone-update algorithm can be understood as a two-step procedure. We start by performing one step of the ellipsoid method using the ellipsoid obtained by the intersection of $E(W_t,U_t)$ with an appropriate hyperplane defined by its circumcenter $\hat c(E(W_t,U_t))$ (see \Cref{fig:algo_a}). In the second step, we calculate the circumcenter of the new ellipsoidal cone. As we illustrate in \Cref{fig:algo_b}, the circumcenter of the new ellipsoidal cone is not in general given by the center of the updated ellipsoid. Instead, we need to compute the spectral decomposition of an appropriate matrix to determine it. We call this algorithm \texttt{ConeUpdate} and it is fully laid out  in Algorithm \ref{algo:robust_cone_update}. As specified, it includes more parameters than we need here. Those extra parameters ($\eta$, $B$, and $p$) will only be needed when we get to \Cref{sec:algo_design_general}. We have to divide the cone update in two cases, since when $p = 2$, the underlying ellipsoid lives in a one-dimensional space and the ellipsoid update would reduce to a simple binary search. Otherwise we proceed with the standard update.

\begin{algorithm}[ht]
	\caption{\texttt{ConeUpdate}}
	\label{algo:robust_cone_update}
	\SetKwInOut{Input}{input}
	\SetKwInOut{Output}{output}
	\Input{Matrices $W,U$, projected vector $\delta$, margin $\eta$, basis $B$, dimension $p$.}
	$\bar \delta = U^{-1}B\delta_{[2:p]}/\|U^{-1}B\delta_{[2:p]}\|$\;
	
	\eIf{$p = 2$}{
	
	$\beta = -\eta \bar \delta$, $b = \sqrt W \bar \delta$, $a = \frac{1+\beta}{2}b$\;
	$V_1 = (1,a)'/\sqrt{1+a^2}$\;
	Let $V$ denote any orthonormal basis with $V_1$ as the first component\;
	$\widetilde U = UV$\;
	$\widetilde W = W/2 + \eta$\;
	
	}{
	
	$\beta \leftarrow -\frac{\eta}{\sqrt{{\bar \delta}'W{\bar \delta}}}$, $\quad$ $b \leftarrow \frac{W{\bar \delta}}{\sqrt{{{\bar \delta}'}W{\bar \delta}}}$, $\quad$ $a \leftarrow \frac{1+(p-1)\beta}{p}b$\;
	$N \leftarrow \frac{(p-1)^2}{(p-1)^2-1}(1-\beta^2)(W-\frac{2(1+(p-1)\beta)}{p(1+\beta)}bb')$\;
	$M \leftarrow  \begin{pmatrix} 1 & a' \\ a & aa' - N \end{pmatrix}$\;
	Let $V \Lambda V'$ denote the spectral decomposition of $M$\;
	$\widetilde U \leftarrow UV$\;
	Set $\widetilde W_{i,i} \leftarrow \lambda_{p-i}(N)$, $i = 1,\cdots, p-1,$ where $\lambda_i(N)$ are arranged in nonincreasing order\;
	
	}
	\Output{$\widetilde{W},\widetilde{U}$.}			
\end{algorithm}

\Cref{lemma:ellipsoidal_cone_update} will formalize two properties of  \texttt{ConeUpdate}. It shows that after collecting an effective difference, we can fit another ellipsoidal cone that contains the remaining set of feasible costs and, furthermore, we can guarantee a reduction on the product of the eigenvalues with the same rate as obtained by the ellipsoid method in $(d-1)$ dimensions. We let $\lambda_i(W)$ denote the $i^{th}$ eigenvalue of a matrix $W$ and $I_d \in \mathbb D_{++}^d$ denote the identity matrix.

\begin{lemma}[Ellipsoidal cone updates]\label{lemma:ellipsoidal_cone_update}
    Suppose we run algorithm \emph{\texttt{ConeUpdate}} with inputs $W_t$, $U_t$, $\delta_t^\pi$, $\eta=0$, $B=I_d$, and $p=d$. Assume that $C(\mathcal I_t) \subseteq E(W_t,U_t)$. Then, $C(\mathcal I_{t+1}) \subseteq E(W_t,U_t) \cap \{c\in \mathbb R^d: \; {\delta_t^\pi}'c \geq 0\} \subseteq E(W_{t+1},U_{t+1})$. Moreover, $\prod_{i=1}^{d-1}\lambda_i(W_{t+1}) \leq e^{-1/(d-1)}\prod_{i=1}^{d-1}\lambda_i(W_t)$.
\end{lemma}

%%%%%%%%%%%%%%%%%

\subsection{The Algorithm for the Pointed Case} \label{sec:theorem_2}%Building on the developments of the prior subsections, 
We now explore the  architecture for our algorithm. We start by replacing our initial knowledge set with a revolution cone that contains it (which is possible since we assume  $\alpha(C_0)< \pi/2$ for now),  %(which we assume to satisfy $\alpha(C_0)< \pi/2$) , 
and then at each period $t$, we choose the circumcenter of the ellipsoidal cone $E(W_t,U_t)$ as our proxy cost. We use the algorithm \texttt{ConeUpdate} whenever we want to update our knowledge superset. The last missing step is to propose a rule for deciding when our algorithm should perform a \texttt{ConeUpdate}.

Recall that the uncertainty angle of the knowledge set is key for bounding the one-period regret.  \Cref{prop:angle_and_eigenvalue}  relates  the uncertainty angle of an ellipsoidal cone $E(W,U)$ to the matrix $W$. 
\begin{proposition}[Uncertainty angle of ellipsoidal cones]\label{prop:angle_and_eigenvalue}
    Let $W \in \mathbb D^{d-1}_{++}$ and let $\lambda_{max}(W)$ denote the largest eigenvalue of the matrix $W$. Then, 
    $\alpha(E(W,U))= \arctan \sqrt{\lambda_{max}(W)}$.
\end{proposition}

\Cref{prop:angle_and_eigenvalue}, together with \Cref{theorem:one_period_problem}, highlight the need to control not just  the product of the eigenvalues over time, but also  the value of the largest eigenvalue of $W_t$. Controlling the largest eigenvalue is challenging because of a quirk of the ellipsoid method. When we perform an update step in \texttt{ConeUpdate}, we sometimes create an ellipsoid that has a larger eigenvalue than any eigenvalue in the original ellipsoid (this is discussed, for example, in \citet[Figure 6]{cohen2016feature}). Geometrically, the ellipsoid method runs the risk of creating ellipsoids that are ``long and skinny'' (in technical terms, ill-conditioned). If the ellipsoid has at least one large eigenvalue, then \Cref{prop:angle_and_eigenvalue} shows its uncertainty angle will be large.

The key to completing our pointed-case algorithm will be to use cone-updates  sparingly. Instead of updating our ellipsoidal cone after computing every effective difference, we will only update the ellipsoidal cone if we made a ``substantial'' mistake, where the threshold for determining whether a mistake is substantial is described by a new parameter $\epsilon > 0$. Let $\bar \delta_t$ be the rotated effective difference so that  we are working in standard position, i.e., $\bar \delta_t= U_t^{-1}\delta_t^\pi$. At every time $t$, we do not update the ellipsoidal cone if:% for the next time period if:
\begin{equation}\label{eq:condition-update}
{\bar \delta}'_{t,[2:d]}W_t {\bar \delta}_{t,[2:d]} \leq \epsilon^2.
\end{equation} 
The next result establishes that such periods lead to low regret.

\begin{lemma}[Sufficiency for one-period low regret]\label{lemma:epsilon}
    Assume that $C(\mathcal I_t) \subseteq E(W_t,U_t)$. Suppose that  Eq. \eqref{eq:condition-update} holds,  then the one-period regret in period $t$ is upper bounded by $\epsilon$.
\end{lemma}

In other words, whenever Eq. \eqref{eq:condition-update} holds, we have a certificate to ensure low regret in period $t$ and updating the ellipsoidal cone with the information provided by the current effective difference would risk causing some of the larger eigenvalues to grow for very little benefit.

 Therefore, we say period $t$ is a \emph{cone-update period} if  ${\bar \delta}'_{t,[2:d]}W_t \bar \delta_{t,[2:d]} > \epsilon^2$. Let us denote the number of cone-update periods by $$I^{\pi}_T = \sum_{t=1}^T \mathbf{1} \{{\bar \delta}'_{t,[2:d]}W_t \bar \delta_{t,[2:d]} > \epsilon^2\}.$$ If the period is not an update one, we call it  a \emph{low-regret period} since the regret in that period is upper bounded by $\epsilon$.  The next lemma argues that, since updates only occur when Eq. \eqref{eq:condition-update} is violated, the maximum eigenvalue of the ellipsoid is bounded as function of $I_T^\pi$.

\begin{lemma}[Largest eigenvalue under $\epsilon$-update]\label{lemma:stopping_rule}
 Suppose $\lambda_{min}(W_1) > \left( \frac{\epsilon}{10(d-1)}\right)^2$. Then,   after $I^{\pi}_T$ updates, we have that
    $$
    \lambda_{max}(W_t) \leq \left(\frac{10d}{\epsilon}\right)^{2(d-2)} \left(\lambda_{max}(W_1)\right)^{d-1} e^{-\frac{I_T^\pi}{(d-1)}},
    $$
    where $\lambda_{max}(W_1)$ denotes the largest eigenvalue of the initial ellipsoidal cone $E(W_1,U_1)$. 
    If  $\lambda_{min}(W_1) \le \left( \frac{\epsilon}{10(d-1)}\right)^2$, then $I^{\pi}_T=0$ and the per period regret is bounded above by $\epsilon$ for all periods. 
\end{lemma}
 The  main algorithm of the current section, \texttt{EllipsoidalCones},  chooses the circumcenter of the ellipsoid as the proxy cost in each period, and it updates the ellipsoid only when Eq. \eqref{eq:condition-update} is violated. A full specification is presented in Algorithm \ref{algo:cone_algo_1}. 
 
 \begin{algorithm}[ht]
	\caption{\texttt{EllipsoidalCones}}
	\label{algo:cone_algo_1}
	\SetKwInOut{Input}{input}
	\Input{time horizon $T$, dimension $d$, initial knowledge set $C_0$, parameter $\epsilon$.}
	Rotate $C_0$ to place the circumcenter of $C_0$ at $e_1$\;
	Construct a revolution cone containing $C_0$: $\left(W_{1} \leftarrow \tan^2 \alpha(C_0) I_{d-1},~ U_{1} \leftarrow I_{d-1}\right)$\;
	\For{$1 \leq t \leq T$}{
		Set $c_t^{\pi} \leftarrow U_te_1$, $\bar \delta = U^{-1}_t\delta_t^\pi$\; 
		Choose action $x^\pi_t \in \psi(c^\pi_t,\Fset_t,f_t)$\;
		Observe $x^\star_{t}$\;
		Set $\delta_{t}^{\pi} \leftarrow \frac{f_t(x^\pi_t) - f_t(x^\star_t)}{\|f_t(x^\pi_t) - f_t(x^\star_t)\|}$\;

		\eIf{${\bar \delta}'_{[2:d]}W_{t}{\bar \delta}_{[2:d]} \leq \epsilon^2$}{
		    $W_{t+1},U_{t+1} \leftarrow W_{t},U_{t}$\;
		}{
		    $W_{t+1},U_{t+1} \leftarrow \mbox{\texttt{ConeUpdate}}(W_t,U_t,\delta_t^\pi,0,I_d,d)$\;
		}
	}
\end{algorithm}

 We are now ready to present its performance.

\begin{theorem}[Regret for pointed case]\label{theorem:regret}
	Set $\epsilon = d/T$ in  \emph{\texttt{EllipsoidalCones}}. Consider any $C_0$ with $\alpha(C_0) < \pi/2$. If $\alpha(C_0) > \arctan ({d}/{T})$, then the regret is bounded as follows: 
	$$
	 {\cal WCR}^{\pi}_T\left(C_0\right) \leq 
      \mathcal{O}\left(d^2\ln (T \tan \alpha(C_0))  \right).
	$$
	If $\alpha(C_0) \leq \arctan ({d}/{T})$, then the regret is bounded by $\mathcal O(d)$.
	This algorithm runs in polynomial time in $d$ and $T$. 
\end{theorem}

This result says that if the uncertainty angle of the initial knowledge set is bounded, then we can obtain a strong performance guarantee. We have constructed a policy that avoids the pitfalls of the greedy circumcenter policy and yields logarithmic regret in the time horizon.  The key to this result was to exploit the underlying geometry of the problem at hand and the  regularization of the knowledge sets that enabled the decision-maker to force inverse exploration for nature. These ideas will be crucial when we relax the assumption of pointedness of the initial knowledge set in the next section.

We remark here that the worst-case regret admits a dependence $d^2$. It is possible that the dimension dependence might still be improved; the $d^2$ term in the regret bound is a consequence of our choice of using ellipsoidal cones as supersets. It is possible that an alternative superset that fits the knowledge set more tightly might lead to a better dependence on $d$.

\section{The Online Setting:  General Case}\label{sec:algo_design_general}

In this section, we provide an algorithm for the general case and prove our main result, a logarithmic regret even for the case where there is no prior knowledge about the true cost vector $c^\star$ (cf. \Cref{theorem:regret_general_case}). This algorithm will build on the ideas outlined in \Cref{sec:circumcenter_policy}, but the generality of the initial knowledge set requires a new feature. To that end, we first introduce the notion of \textit{relevant subspaces}, which are subspraces that we construct with relevant feedback information. Next, we show that focusing on ellipsoidal cones within the relevant subspace suffices to achieve logarithmic regret. Moreover, we show that every time we collect relevant information with respect to a new dimension not yet considered, we can restart the ellipsoidal cone approach in this higher dimensional space.

We now drop the assumption that the initial knowledge set lies within a pointed cone ($\alpha(C_0) < \pi/2$) and construct  an algorithm, \texttt{ProjectedCones},  to solve the general case. In this general setting, one key idea will be the following.  At each period $t$, if $C(\mathcal{I}_t)$ does not have an uncertainty angle less than $\pi/2$, we will search for a linear subspace, denoted by $\Delta_t$, such that the projection of $C(\mathcal{I}_t)$ onto this subspace has an uncertainty angle less than $\pi/2$. The high-level idea will be that, if the effective difference $\delta_t^\pi$ at period $t$ is sufficiently close to the subspace $\Delta_t$, then we can operate in that period by ignoring everything that does not lie in the subspace $\Delta_t$. In such periods, we can operate in a way that is similar to \texttt{EllipsoidalCones}. Meanwhile, if the effective difference $\delta_t^\pi$ is far from  $\Delta_t$, that means we will gain enough information in period $t$ in the sense that the projection of $C(\mathcal I_{t+1})$ will be pointed in a higher-dimensional subspace than $\Delta_t$. That is, when the effective difference $\delta_t^\pi$ is far from $\Delta_t$, we will make progress via a \emph{subspace update} from $\Delta_t$ to $\Delta_{t+1}$. In what follows, we will show the modifications needed to operate within subspaces, as well as show how to perform the subspace updates.

We define the orthogonal projection of a vector $c$ onto the subspace $\Delta$ by 
$
\Pi_{\Delta}(c) = \argmin_{\hat c \in \Delta} \|c-\hat c\|.
$
We will use the same notation $\Pi_{\Delta}(C)$ for the projection of a set $C$. We will be working with ellipsoidal cones $E(W_t,U_t)$  that belong to the subspaces $\Delta_t$ and contain the projections of the knowledge sets $\Pi_{\Delta_t}(C(\mathcal I_t))$. 
If we collect an effective difference sufficiently close to $\Delta_t$, meaning that $\|\delta_t^\pi - \Pi_{\Delta_t}(\delta_t^\pi)\| \leq \eta$ for a given $\eta > 0$ that we will choose, we say that we are in a \emph{pointed period}. Just as in Section \ref{sec:circumcenter_policy}, pointed periods can be subdivided into cone-update periods and low-regret periods. For a period where $\|\delta_t^\pi - \Pi_{\Delta_t}(\delta_t^\pi)\| > \eta$, we call it a \emph{subspace-update period}, and we perform a subspace update and define a new ellipsoidal cone in $\Delta_{t+1}$ that contains $\Pi_{\Delta_{t+1}}(C(\mathcal I_{t+1}))$. Our actions are still determined according to the circumcenter policy.

\vspace{.05in}
\noindent\textbf{Working with Subspaces.} Before studying the mechanics of the pointed-case algorithm within a subspace, we need to argue that there exists a positive-dimensional subspace $\Delta_t$ where $\Pi_{\Delta_t}(C(\mathcal I_t))$ belongs to a pointed cone. This is not necessarily true for the initial knowledge set $C_0$ (for example, if $C_0 = S^d$, then the statement is clearly untrue). The next lemma  shows that this statement is true as soon as the  the decision-maker selects an action such that $f_{t_0}(x_{{t_0}}^\pi) \neq f_{{t_0}}(x^\star_{{t_0}})$. %That is, as soon as the decision-maker incurs any regret, we can construct a positive-dimensional subspace where the projected knowledge set lives in a pointed cone. 
\begin{lemma}[Existence of subspace]\label{lemma:existence_subspace}
    Let $t_0$ be the first  period $t$ such that $f_{t}(x_{t}^\pi) \neq f_{t}(x^\star_{t})$ and let: 
    $$
\delta_{t_0}^\pi = \frac{f_{t_0}(x_{t_0}^\pi) - f_{t_0}(x^\star_{t_0})}{\|f_{t_0}(x_{t_0}^\pi) - f_\tau(x^\star_{t_0})\|}, \quad \Delta_{t_0+1} = \{c \in \mathbb{R}^d:\; c = \gamma \delta_{t_0}^\pi, \; \gamma \in \mathbb{R}\}.$$ 
Then, $\Pi_{\Delta_{t_0+1}}\left(C(\mathcal{I}_{t_0+1}^\pi)\right)$ lives in a pointed cone. Moreover, the regret is zero for every $t < t_0$.
\end{lemma}
Since we incur no regret before $f_{t_0}(x_{{t_0}}^\pi) \neq f_{{t_0}}(x^\star_{{t_0}})$, we will assume through the rest of the text that $t_0=1$. % and allow $\Delta_1$ to be a one-dimensional subspace. 
For a given period $t \geq 2$, we can assume the dimension of $\dim(\Delta_t) = p  \in \{1,...,d\}$. The dimension of the matrices $W_t$ and $U_t$ will be determined by the dimension of the subspace $\Delta_t$ since we are forcing the ellipsoidal cone $E(W_t,U_t)$ to live in $\Delta_t$. Then, the matrix $U_t  \in \mathbb R^p \times \mathbb R^p$  is represented under some $\mathbb R^p$ basis for $\Delta_t$. We define the matrix $B_{\Delta_t} \in \mathbb R^p \times \mathbb R^d$ to be a matrix with $p$ rows built with orthonormal vectors that live in $\Delta_t$ (the precise construction of $B_{\Delta_t}$ will be discussed later when we define the steps for a subspace update).
The matrix $B_{\Delta_t}$ will be useful because it allows us to pick some vector $c \in \Delta_t$ (represented under the canonical basis in $\mathbb R^d$), and write it under a basis of $\Delta_t$. In order to do so, we can simply compute $\bar c = B_{\Delta_t}c$. Then, $\bar c \in \mathbb R^p$ is a representation of $c \in \Delta_t$ under the basis of $\Delta_t$. Equivalently,  to represent any $\bar c \in E(W_t,U_t) \subset \mathbb R^p$ under the canonical basis in $\mathbb R^d$, it suffices to compute $c = B_{\Delta_t}'\bar c$.

As previously mentioned, the algorithm will classify the periods into three categories: low-regret, cone-update and subspace-update periods. Let 
$$r_t^\pi = \delta_t^\pi - \Pi_{\Delta_t}(\delta_t^\pi)$$ denote the residual of the projection of the effective difference onto $\Delta_t$. If the residual $r_t^\pi$ is small, in the sense that $\|r_t^\pi\| \leq \eta$ for a given choice of $\eta > 0$, then we will say that we are in a pointed period. Pointed periods can be of two kinds: low-regret and cone-update. Denote  $\bar \delta_t = U_t^{-1}B_{\Delta_t}'\Pi_{\Delta_t}(\delta_t^\pi)$. The multiplication with the matrix $B_{\Delta_t}'$ allows us to write $\Pi_{\Delta_t}(\delta_t^\pi)$ in a basis that is compatible with $E(W_t,U_t)$. The second operation with the matrix $U^{-1}_t$ allows us to work with ellipsoidal cones in standard-position. Analogously to Eq. \eqref{eq:condition-update}, we say a pointed period is a low-regret period if 
\begin{align}\label{eq:update_rule_general_case}
{\bar \delta_{t,[2:p]}}'W_{t} {\bar \delta_{t,[2:p]}} \leq \epsilon^2.
\end{align}
If the period is pointed but the equation above is violated, then we say period $t$ is a cone-update period. When $\|r_t^\pi\| > \eta$, we are not in a pointed period, but instead in a subspace-update period.

\vspace{.05in}
\noindent\textbf{Low-Regret Periods.} 
Low-regret periods are the simplest to analyze since the knowledge superset is not updated on these periods. Our only task here is to show that we have a certificate of low  regret for the circumcenter policy  in such periods. The next lemma shows that if (i) the projection of the knowledge set is contained in the ellipsoidal cone, (ii) the period is pointed, and (iii) Eq. \eqref{eq:update_rule_general_case} is satisfied, then the circumcenter policy incurs low regret that period. However, compared to Lemma \ref{lemma:epsilon},  we do incur an additional regret of $\eta$ since we are working with the projection of the effective difference onto $\Delta_t$ instead of working with the effective difference directly.

\begin{lemma}[Regret incurred in low-regret periods]\label{lemma:stopping_rule_2}
	 Assume that $\Pi_{\Delta_t}(C(\mathcal I_t)) \subseteq E(W_t,U_t)$. If $\|r_t^\pi\| \leq \eta$, and Eq. \eqref{eq:update_rule_general_case} is satisfied, then the regret in that period is upper bounded by $\epsilon+\eta$.
\end{lemma}

\noindent\textbf{Cone-Update Periods.}  There are two important differences in how we perform a cone update in \texttt{ProjectedCones} versus in \texttt{EllipsoidalCones}. The first difference is a simple one: we are working in a subspace $B_{\Delta_t}$ of dimension $p$, rather than the original $d$-dimensional space. The second change is more complex. When we perform a standard ellipsoid method update using a projected effective difference, we run the risk of removing the true cost from our knowledge set because the projected effective difference is not exactly the same as the effective difference. To deal with this, we modify the ellipsoid method to use what is called a shallow cut \citep{grotschel1993ellipsoid} instead of what the traditional ellipsoid method uses, which is a center cut. A shallow cut is a cut that doesn't cut through the center of the ellipsoid and instead leaves more than half of the original ellipsoid behind. The reason to use a shallow cut is to create a buffer to ensure we do not remove the true cost. The needed shallowness will be a function of our choice of $\eta$, as a larger value of $\eta$ would imply that the true effective difference might be further away from the projected effective difference, and therefore we'd need a bigger buffer. Our implementation of \texttt{ConeUpdate} already allows for shallow cuts via a margin $\eta$, and allows for us to operate in the $p$-dimensional subspace $\Delta_t$.

The next lemma shows the reduction in the product of eigenvalues when we run the robust version of a cone update. We call this cone update a robust one since it includes a positive margin $\eta$. It assumes that we are working in a dimension $p > 1$ as the ellipsoidal cone is degenerate when $p=1$ (it's a ray) and, thus, cone-update periods do not occur when $p=1$. 

\begin{lemma}[Robust ellipsoidal cone updates]\label{lemma:properties_robust_update}

    Set $\eta = \epsilon/2d$ and suppose we run the algorithm \emph{\texttt{ConeUpdate}} with inputs $W_t,U_t,\Pi_{\Delta_t}(\delta_t^\pi),\eta,$ $B_{\Delta_t}$ and $p \in \{2,...,d\}$. Assume that $\Pi_{\Delta_t}(C(\mathcal I_t)) \subseteq E(W_t,U_t)$. Then, $\Pi_{\Delta_t}(C(\mathcal I_{t+1})) \subseteq E(W_t,U_t) \cap \{c\in \mathbb R^d \; {\delta_t^\pi}'\Pi_{\Delta_t}(c) \geq 0\} \subseteq E(W_{t+1},U_{t+1})$.  Moreover, $\prod_{i=1}^{p-1}\lambda_i(W_{t+1}) \leq e^{-1/20(p-1)}\prod_{i=1}^{p-1}\lambda_i(W_t)$.
\end{lemma}

The next result leverages the reduction in the product of eigenvalues to bound the number of times cone-update periods occur while we operate in a $p$-dimensional subspace.

\begin{lemma}[Cone-updates per dimension]\label{lemma:cone_updates_per_subspace}

     Let $p \in \{2,...,d\}$ and $I_T^{\pi,p}$ be the number of cone-updates while $\dim(\Delta_t)=p$ and let $t_0^p$ be the first period such that $\dim(\Delta_t)=p$.
      Suppose $\lambda_1(W_{t_0^p}) > \left( \frac{\epsilon}{10(d-1)}\right)^2$. Then, 
    $$I_T^{\pi,p} \leq 20(p-1)^2\ln \left(\frac{10 p \tan \alpha (E(W_{t_0^p},U_{t_0^p}))}{\epsilon}\right).$$ 
 If $\lambda_1(W_{t_0^p}) \le \left( \frac{\epsilon}{10(d-1)}\right)^2$, then $I_T^{\pi,p}=0$ and the per period regret is bounded above by $\epsilon+\eta$  while $\dim(\Delta_t)=p$ .
\end{lemma}

\noindent\textbf{Subspace Update.} 
Next, focus on periods during which the effective difference is ``sufficiently orthogonal" from the subspaces. In such periods, the decision-maker can increase the dimensionality of the subspace while ensuring pointedness in a new subspace of higher dimension. We now show how the subspaces are constructed. We initialize the first-period subspace with $\Delta_1 = \{0\}$. Once the decision-maker observes an effective difference such that $\|r_t^\pi\| > \eta$ for some pre-specified $\eta$, this effective difference is incorporated into the definition of the subspace $\Delta_{t+1}$. That is, let $\tau(t)$ be the set of time periods such that $\|r_i^\pi\| > \eta$, for $i \leq t$. Then, we define $\Delta_t$ to be the subspace defined by the effective differences from periods in $\tau(t)$:
$
\Delta_t = \left\{c \in \mathbb R^d:\; c = \sum_{i \in \tau(t)} \gamma_i \delta^\pi_i, \; \gamma_i \in \mathbb R\right\}.
$

The definition of $\Delta_t$ allows us to construct the matrix $B_{\Delta_t}$. It suffices to take each row as the effective differences $\delta_i^\pi$ and apply the Gram-Schmidt procedure in order to obtain orthogonal rows with unit norm (see, for instance, \cite{bjorck1994numerics}, for implementation details). Moreover, the subspace updating rule implies that there are at most $d$ periods where we increase the dimensionality of our space. Therefore, the cumulative regret incurred in subspace-update periods is small (upper bounded by $d$). The key arguments we need  to show are that $\Pi_{\Delta_{t+1}}(C(\mathcal I_{t+1}))$ is pointed within $\Delta_{t+1}$, and that the uncertainty angle of the new set is controlled by the parameter $\eta$. In order to prove these points, we will consider a polyhedral cone that contains the projection of the knowledge set onto that period's subspace. The period $t$ polyhedral cone is given by:
\begin{equation}\label{eq:K}
K_t = \{c \in \Delta_t:\; c'\delta_i^\pi \geq 0,\; i \in \tau(t)\}.
\end{equation}
We will next show that our projected knowledge set is contained within $K_t$ and that we can bound the uncertainty angle of $K_t$.
\begin{lemma}[Uncertainty angle of polyhedral cone]\label{lemma:gamma_pointed} Let $\eta \in (0,1]$ and let $K_t$ be defined according to Eq. \eqref{eq:K}. Then, for every period $t$, $\Pi_{\Delta_t}(C(\mathcal I_t)) \subseteq K_{t}$. Also, the circumcenter of $K_t$ can be computed in polynomial time from the inputs $\delta_i^\pi$, with $i \in \tau(t)$. 
     Furthermore, the uncertainty angle of $K_t$ is bounded by
    $
    \alpha(K_{t}) \leq \arccos \left({\eta^{d-1}}/{d^{3/2}}\right).
    $
\end{lemma}
\Cref{lemma:gamma_pointed}  ensures that every time  we increase the dimension of our subspace, we can construct an ellipsoidal cone in the new higher-dimensional space that has a bounded uncertainty angle. If we perform a subspace update in period $t$, then we will restart our ellipsoidal cone in period $t+1$ with a revolution cone with aperture angle given by $\arccos ({\eta^{d-1}}/{d^{3/2}})$ and axis given by the circumcenter $\hat c(K_{t+1})$, which we compute using a polynomial-time algorithm we call \texttt{PolyCenter} (see Algorithm \ref{algo:poly_center}).

 The proof of \Cref{lemma:gamma_pointed} is based on various geometric ideas. We use duality to show that providing an upper bound for the aperture angle of the \textit{smallest} revolution cone that \textit{contains} $K_{t+1}$ is equivalent to {providing a lower bound for the aperture angle of the \textit{largest} revolution cone \textit{contained} in the dual of $K_{t+1}$. We also show that in order to establish such property, it suffices to quantify the ``degree" of noncolinearity of the generators of the dual of $K_{t+1}$. By using the fact that the dual cone of $K_{t+1}$ is generated \emph{only} by the effective differences used for the subspace updates, we have by the subspace updating rule that the effective differences are ``sufficiently" orthogonal to the subspace generated by the previous ones, the key of the argument is to characterize the extent to which this holds for the overall system.

\begin{algorithm}[ht]
	\caption{\texttt{PolyCenter}}
	\label{algo:poly_center}
	\SetKwInOut{Input}{input}
	\SetKwInOut{Output}{output}
	\Input{Set of effective differences $\{\delta_i\}_{i \in \tau(t)}$, basis $B$, new dimension $p$.}
	$\bar \delta_i \leftarrow B\delta_i$, for all $i \in \tau(t)$\;
	$\tilde z \leftarrow \sum_{i \in \tau(t)}\bar \delta_i$\;
	$z \leftarrow \tilde{z}/\|\tilde z\|$\;
	\For{$1 \leq k \leq p$}{
	Define $q_k = \chi_k/\|\chi\|_2$, where $\chi_k$ is the unique solution to $\{c \in \mathbb R^p: \;c'z = 1,\; {\bar \delta}_i'c = 0,\; i \in \tau(t), \; i \neq k\}$\; 
	}
	Solve the optimization: $
    \hat c \leftarrow \argmin \|c\|^2, \quad \mbox{s.t. } \; c = \sum_{i = 1}^{p} \gamma_iq_i, \; \sum_{k = 1}^{p} \gamma_i = 1, \; \gamma_k \geq 0
    $\;
   
    $\alpha = \arccos \left(\frac{1}{\|\hat c\|}\min_{i} \hat c'q_k\right)$\;
	\Output{$\hat c/\|\hat c\|, \alpha$.}			
\end{algorithm}

\Cref{fig:dimension_update_figures} illustrates the two types of updates. On the one hand, in \Cref{fig:subspace} we have an effective difference $\delta_t^\pi$ close to the current subspace $\Delta_t$, therefore, the vector $\delta_t^\pi$ is not sufficiently informative for a subspace update. In this case, we can work with the projection of $\delta_t^\pi$ in order to perform a cone update within the subspace (\Cref{fig:subspace_cone_update}).  
On the other hand, \Cref{fig:subspace2} shows an effective difference $\delta_t^\pi$ that gives sufficient information about the third dimension of the problem. Therefore, we can include $\delta_t^\pi$ in $\tau(t)$ and fit a sufficiently pointed ellipsoidal cone in $\mathbb R^3$ (\Cref{fig:subspace_update2}).

\begin{figure}[ht]
  \begin{subfigure}{0.49\textwidth}
    \includegraphics[width=\linewidth]{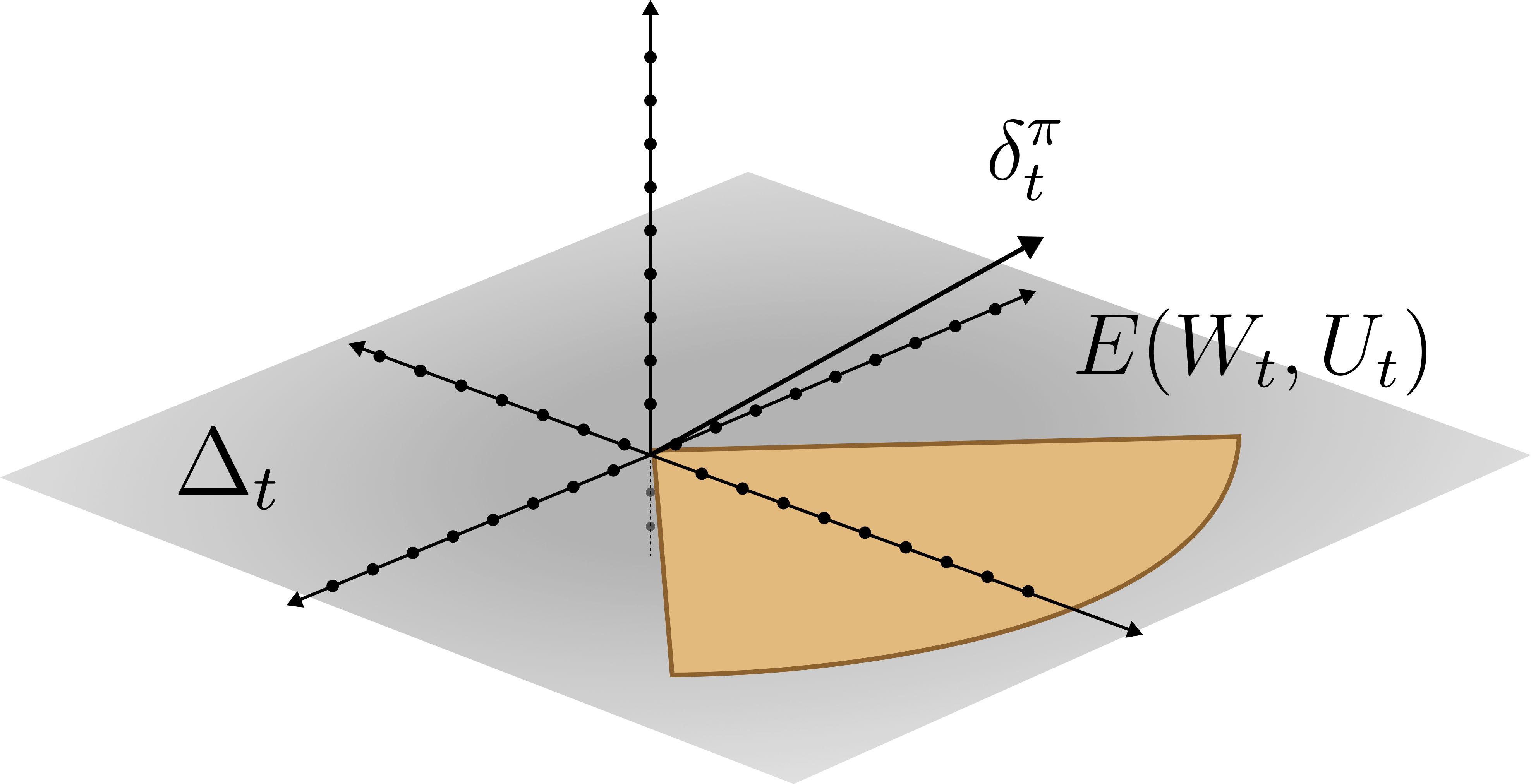}
    \caption{} \label{fig:subspace}
  \end{subfigure}
  \hspace*{\fill}  
  \begin{subfigure}{0.49\textwidth}
    \includegraphics[width=\linewidth]{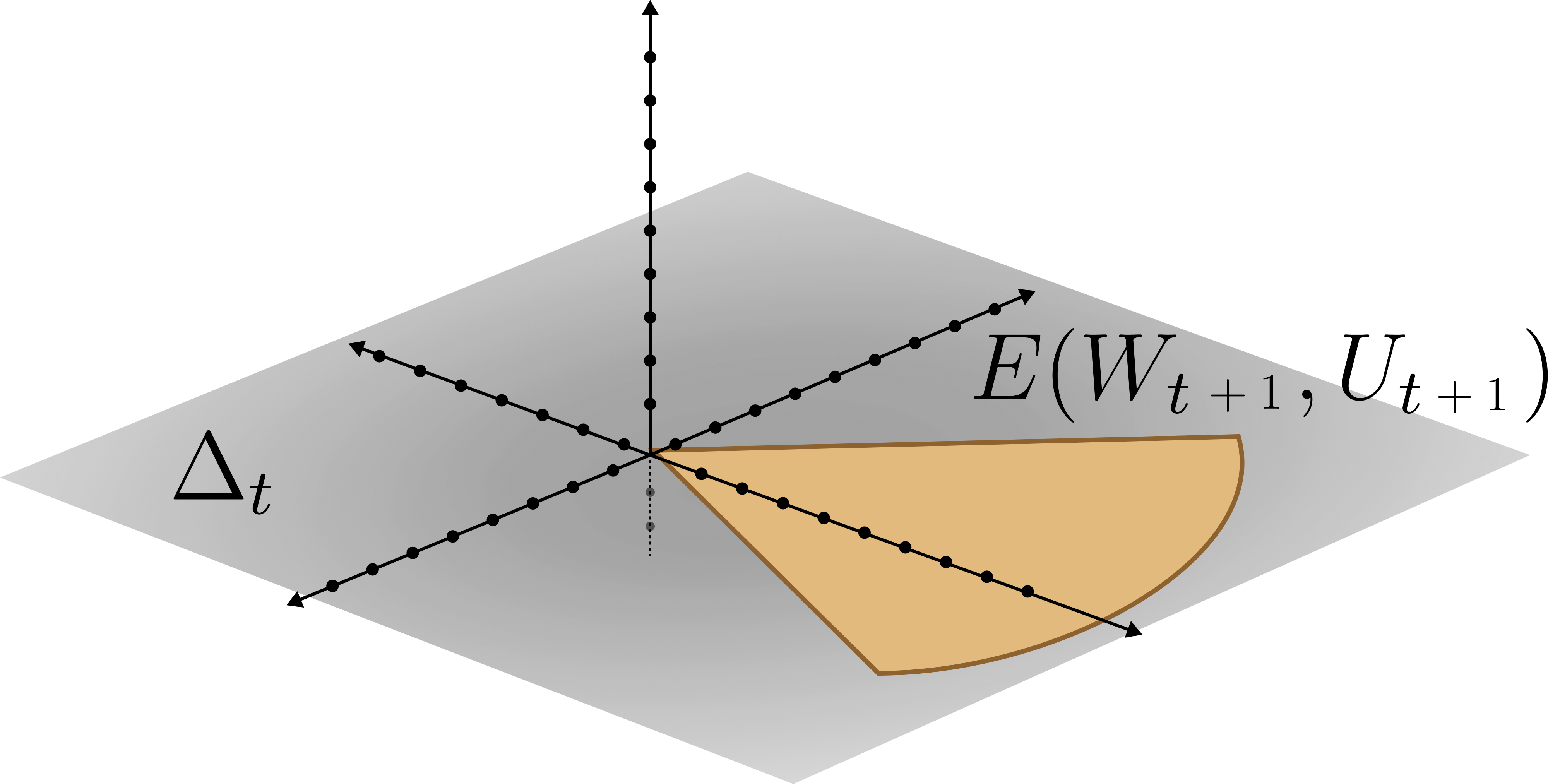}
    \caption{} \label{fig:subspace_cone_update}
  \end{subfigure}%
  \hspace*{\fill} 
  
  \begin{subfigure}{0.49\textwidth}
    \includegraphics[width=\linewidth]{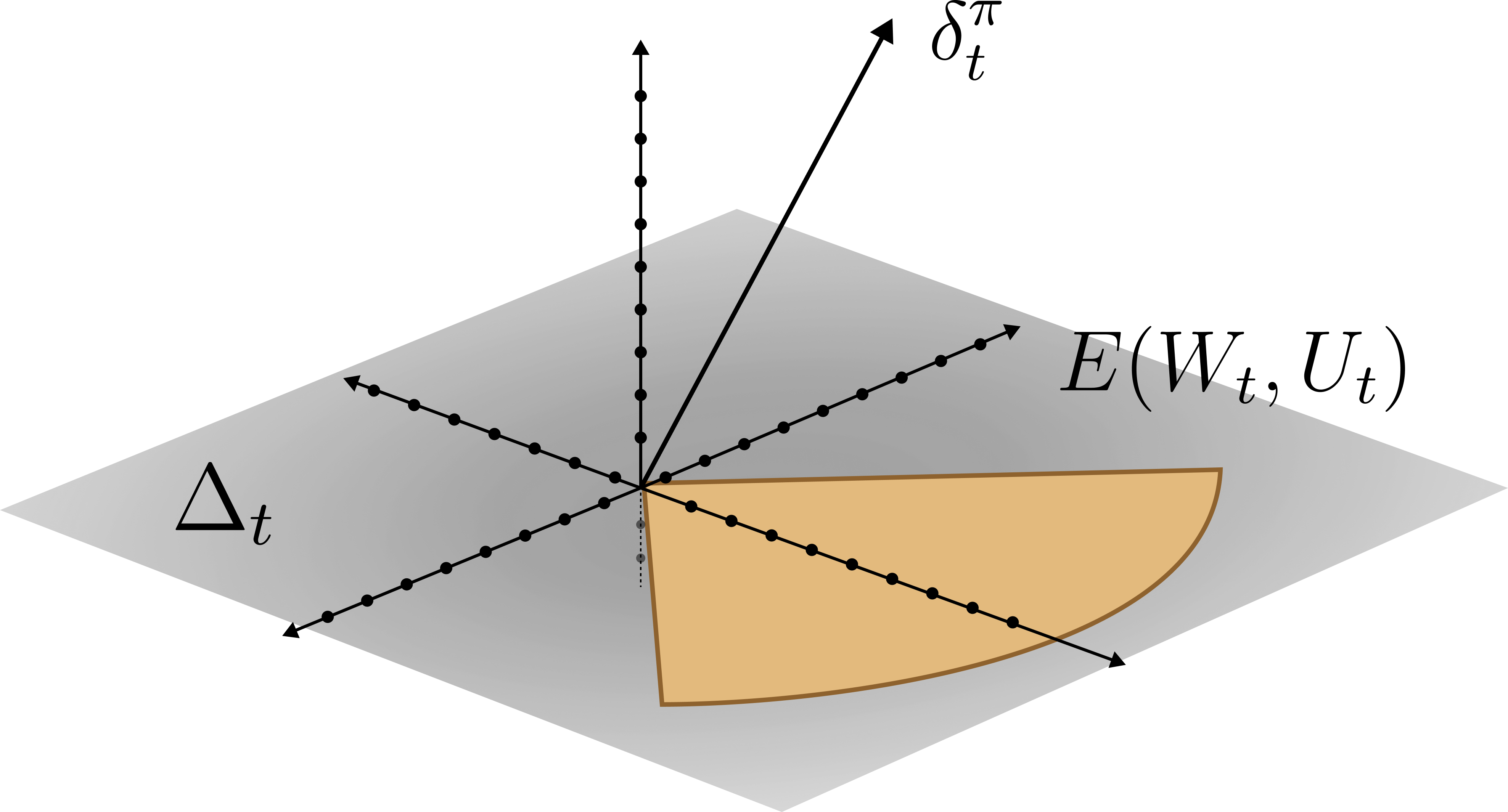}
    \caption{} \label{fig:subspace2}
  \end{subfigure}%
  \hspace*{\fill}  
  \begin{subfigure}{0.49\textwidth}
    \includegraphics[width=\linewidth]{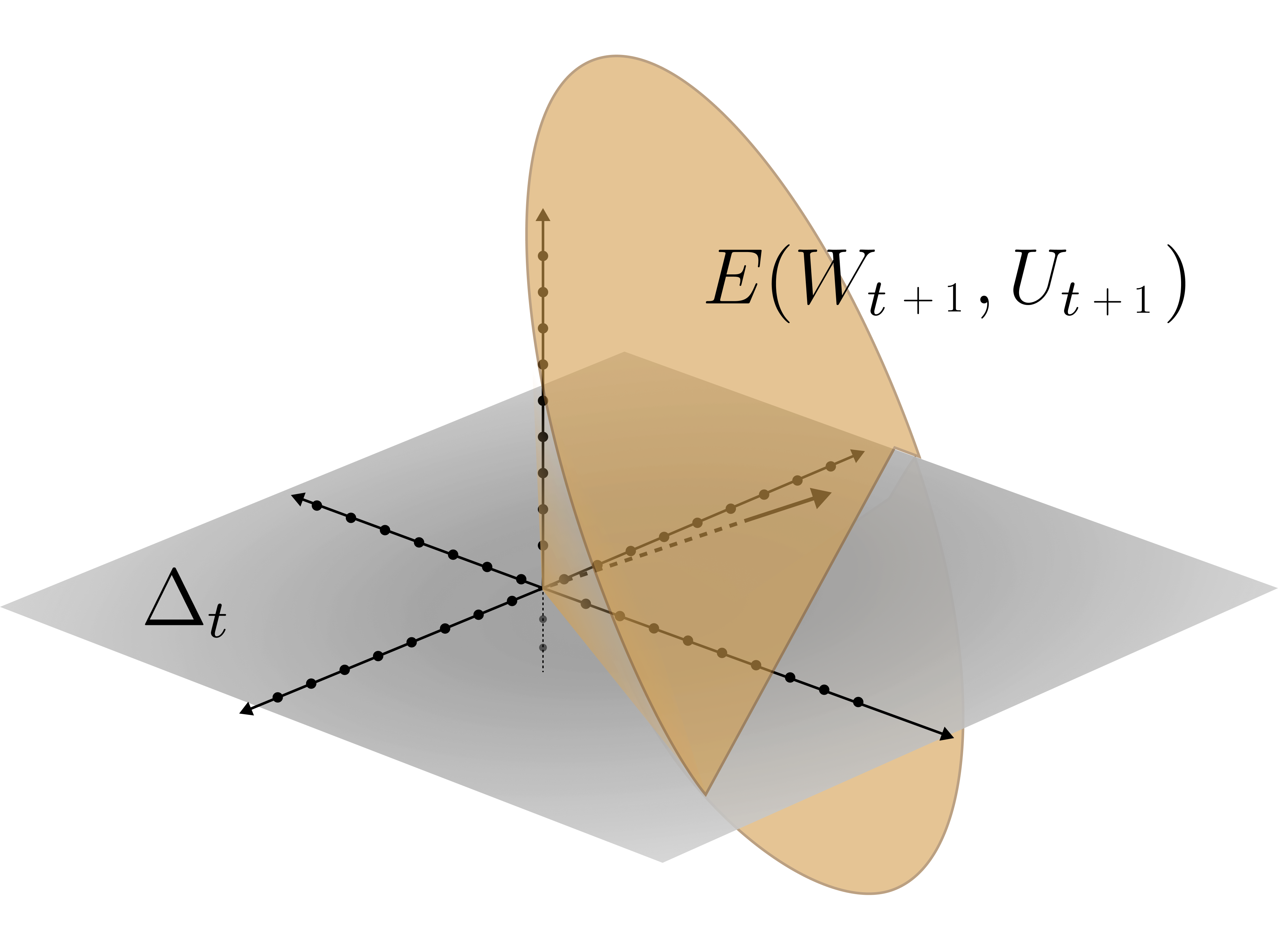}
    \caption{} \label{fig:subspace_update2}
  \end{subfigure}
  \caption{Two types of cone updates. In $(a)$ and $(b)$, we peform a cone update within the subspace $\Delta_t$. In $(c)$ and $(d)$ we have a subspace update.}
  \label{fig:dimension_update_figures}
\end{figure}

\vspace{.05in}
\noindent\textbf{The Algorithm for the General Case.} 
We  have  now presented all of the different parts of our main algorithm, \texttt{ProjectedCones}, including how to verify the kind of period we are in (low-regret, cone-update or subspace-update) and what update to perform in cone-update and subspace-update periods. The full specification of \texttt{ProjectedCones} is presented in Algorithm \ref{algo:general_algo}. 

\begin{algorithm}[!htb]
	\caption{\texttt{ProjectedCones}}
	\label{algo:general_algo}
	\SetKwInOut{Input}{input}
	\Input{$T, \eta, \epsilon$}
	Set $c_1^{\pi} = e_1$ and $x^\pi_1 \in \psi(c^\pi_1,\Fset_1,f_1)$\;
	Observe feedback $x^\star_{1}$ and set $\delta^\pi_{1} = \frac{f_1(x^\pi_1) - f_1(x^\star_1)}{\|f_1(x^\pi_1) - f_1(x^\star_1)\|}$\;
	$\tau(t) \leftarrow 1$, $\Delta_2 \leftarrow \left\{c \in \mathbb R^d:\; c =  \gamma \delta_1^\pi, \; \gamma \in \mathbb R\right\}$, $p = 1$\;
	Set $B_{\Delta_2} = \delta^{\pi}_1$, $W_2 \leftarrow 0$ and $U_2 = 1$\;
	\For{$2 \leq t \leq T$}{
		Set $c_t^{\pi} \leftarrow B_{\Delta_t}'U_te_1$ and $x^\pi_t \in \psi(c^\pi_t,\Fset_t,f_t)$\;
		Observe feedback $x^\star_{t}$\;
		Set $\delta_{t}^{\pi} = \frac{f_t(x^\pi_t) - f_t(x^\star_t)}{\|f_t(x^\pi_t) - f_t(x^\star_t)\|}$ and $r^\pi_t = \delta^\pi_t - \Pi_{\Delta_t}(\delta^\pi_t)$\;
		\eIf{$\|r_t^\pi\| \leq \eta$}{
		    Set $\delta = U_t^{-1} B_{\Delta_t}\Pi_{\Delta_t}(\delta^\pi_t,p) \in \mathbb R^{p}$\;
			\eIf{${(\delta)}'_{[2:p]}W_{t} {(\delta)}_{[2:p]} \leq \epsilon^2$}{
			$W_{t+1}, U_{t+1}, \Delta_{t+1} \leftarrow W_t, U_t, \Delta_t$ \;
			}{
				$W_{t+1}, U_{t+1} \leftarrow \mbox{\texttt{ConeUpdate}}(\Pi_{\Delta_t}(\delta_t^\pi),W_t,U_t,\eta, B_{\Delta_t}, p)$\;
				$\Delta_{t+1} \leftarrow \Delta_t$\;
			}
		}{
		    $\tau(t+1) \leftarrow (t,\tau(t)),\;p \leftarrow p+1$\; 
			$\Delta^\pi_{t+1} \leftarrow \left\{c \in \mathbb R^d:\; c = \sum_{i \in \tau(t)} \gamma_i \delta^\pi_i, \; \gamma_i \in \mathbb R\right\}$\;
			Construct $B_{\Delta_{t+1}}$ orthonormal with the linearly independent vectors $\delta_i^\pi, \; i \in \tau(t+1)$\;
			$u_1,\alpha \leftarrow \texttt{PolyCenter}(\delta_i^\pi, \; i \in \tau(t+1), B_{\Delta_{t+1}}, p)$\;

			Set $W_{t+1} \leftarrow \tan^2(\alpha) I_{p-1}$ and $U_{t+1} \subset \Delta_{t+1}$ orthonormal with first column $u_1$\;
		}
	}
\end{algorithm}

Our main result shows that this algorithm is computationally efficient and incurs regret that is logarithmic in $T$, independently of the nature of the initial knowledge set.

\begin{theorem}[Regret for  general case]\label{theorem:regret_general_case}
	Set $\epsilon = d/T$ and $\eta = \epsilon/2d$ in \emph{\texttt{ProjectedCones}}. For any set $C_0$, the worst-case regret of the  \emph{\texttt{ProjectedCones}} policy $\pi$ is upper bounded as follows.
	$${\cal WCR}^{\pi}_T\left(C_0\right) \leq \mathcal{O}\left(d^4\ln T \right). $$
	Moreover, this algorithm runs in polynomial time in $d$ and $T$.
\end{theorem}

In \Cref{theorem:regret}, we had already established the possibility of obtaining logarithmic dependence for this class of problems, but this dependence relied on the nature of initial knowledge set. With \Cref{theorem:regret_general_case} above, we have now established that the possibility of achieving logarithmic dependence on the time horizon is \textit{universal}: it is independent of the nature of the initial knowledge set. As discussed in the introduction, the best known regret bounds  to date were $\mathcal{O}(\sqrt{T})$ for linear context functions, as developed in \cite{barmann2017emulating}.

For our algorithm, the structure of the knowledge set determines the dependence on $d$. The term $d^4$ follows from us running a variation of \texttt{EllipsoidalCones} $d$ times,  essentially restarting whenever we perform a subspace update, and the regret of that algorithm being $\mathcal O(d^2 \ln (T \tan \alpha(K))$ for an initial set $K$. The final $d$ factor emerges from using \Cref{lemma:gamma_pointed} to bound $\tan \alpha(K)$.

\paragraph{Remark on the regret analysis.} \label{pg:reg-remark}
In \cite{barmann2017emulating,barmann2018online}, the authors establish that it is possible to reframe the problem as an online convex optimization problem. To be more precise, notice that
$$
(x_t^\pi-x_t^\star)'c^\star \leq (x_t^\pi-x_t^\star)'c^\star + (x_t^\star-x_t^\pi)'c^\pi_t = (x_t^\pi-x_t^\star)'(c^\star-c_t^\pi). 
$$
The RHS above can be upper bounded by the regret with respect to the sequence of linear loss functions given by $\ell_t(c) = (x^\star_t-x^\pi_t)'c$, where the sequence $\{x^\star_t-x^\pi_t\}$ is fixed. By analyzing the regret with respect to this sequence of losses, the authors are able to characterize an upper bound on $\sum_{t=1}^T(x_t^\pi-x_t^\star)'(c^\star-c_t^\pi)$. In turn, this bound leads to a bound on the regret we study in this paper $\sum_{t=1}^T(x_t^\pi-x_t^\star)'c^\star$. In our setting, we show that it is possible to analyze directly the suboptimality gap of the decision-maker's action with respect to the expert's action under the true cost, and by doing so, one can tighter upper bounds on this regret.

\section{Numerical Experiments} \label{sec:numerics}

In this section, we explore numerically the performance of the algorithms we propose. To do so,  we conduct a series of experiments in the setting of learning from revealed preferences. In \Cref{sec:simulations_1}, we anchor the numerical experiments around a setup proposed in \cite[Section 4.1]{barmann2017emulating}, where the consumer (expert) faces a set of goods with prices and budget varying over time. The goal of the decision-maker is to learn the consumer's utilities of each product by solving the same sequence of problems. Since the utilities of the goods are nonnegative, this allow us to compare our \texttt{EllipsoidalCones} algorithm (algorithm \ref{algo:cone_algo_1}) with Exponential Weights Update (EWU) proposed in \cite{barmann2017emulating} and Online Gradient Descent (OGD) proposed in \cite{barmann2018online}. In \Cref{sec:simulations_2}, we enrich the problem in order to consider the case where $c^\star$ lies anywhere in the unit sphere. For this case, goods valuation are given by the inner product of a vector of features with an unknown vector. In this case, features, prices and budget constraints are changing over time. We compare the \texttt{ProjectedCones} algorithm (\ref{algo:general_algo}) with OGD since EWU is not applicable in this case.

\subsection{Learning from Revealed Preferences - Pointed Case}\label{sec:simulations_1}
At each time $t$, a consumer chooses among goods $i =1,\cdots,d$ under a budget constraint. Prices and budget changes\ over time and each good has a nonnegative valuation for the consumer. Note that since the utilities for every product is nonnegative, we have a priori knowledge that $c^\star$ lives in the nonnegative orthant, which is a pointed cone with circumcenter given by the vector of ones and uncertainty angle given by $\arccos \sqrt{1/d}$, the \texttt{EllipsoidalCones} algorithm can be used.

We assume that the consumer chooses goods by maximizing their own utility. Let $p_t \in \mathbb R^d_+$ denote the vector of prices at time $t$ and $b_t \in \mathbb R_+$ denote the budget constraint. The consumer solves

\begin{align}\label{eq:foward_problem_preferences}
\max_{x} u'x: \quad x'p_t \leq b_t, \quad x \in \{0,1\}^d.
\end{align}

Note that this problem fits in our formulation by taking $c^\star = -u$, $f_t(x) = x$, and $\mathcal X_t$ as a polyhedron intersected with the binary constraints. At each time $t =1,\cdots,T$, the decision-maker has access to the prices realization $p_t$ and budget constraint $b_t$. The goal of the decision-maker is to mimic the consumer behavior by solving the forward problem \eqref{eq:foward_problem_preferences}. In the end of each period, the decision-maker gets to know the consumer true choice. The regret incurred at each time $t$ is the suboptimality gap of the decision-maker ``recommended action/best guest" with respect to the consumer's action.

We follow the same setup of \cite{barmann2017emulating} for generating the stochastic knapsack instances. The consumer's unknown utilities for the different goods are drawn as integer numbers from the interval $[1, 1000]$ according to a uniform distribution. We normalize the utilities using the $1$-norm in order to compare our method with other proposed algorithms. The prices for sample $t$ are chosen to be
$p_{t,[i]} = u_{[i]} + 100 + r_{t,[i]}$, $i = 1,\cdots, n$, where $r_{t,[i]}$ is an integer uniformly drawn from the interval $[-10,10]$. Finally, the budget constraint $b_t$ is also an integer drawn uniformly from the interval $[1, \sum_{i=1}^d p_{t,[i]}-1].$ We set $\epsilon = d/T$ in \texttt{EllipsoidalCones} and step sizes of $\sqrt{\log d/T}$ and $O(1/\sqrt T)$ for EWU and OGD, respectively \footnote{These step sizes follows the choices in \cite{barmann2018online} of $D/G\sqrt T$, where $D$ is the diameter (here $D = 1$ ) and $G$  is the maximal squared norm of the gradient (here $G = \sup_{x_1,x_2 \in \mathcal X} \|x_1-x_2\|^2 = d$).}. Moreover, at each iteration of OGD, we project the proxy cost back to the set $C_0 = \{c: \sum c_i = 1, c_i \geq 0, \forall i\}$ \footnote{Here we used the L1 norm intersected with the nonnegative orthant to follow the same setup of \cite{barmann2018online}}. In \Cref{fig:sim01} we compare the cumulative regret of our algorithm with OGD and EWU for $d = 10$ and $T = 1000$.  In \Cref{fig:sim02} we repeat the simulation for $d = 25$ and $T = 5000$. In both settings, the proposed algorithm \texttt{EllipsoidalCones} outperforms  both of the alternatives. Although all the methods achieve sublinear regret, we can also see from the simulations that the dimensionality increase has a large impact in the cumulative regret in all of them. For instance, when $T = 1000$, and when the dimensionality increased from 10 to 25, the regret is about 5 times higher for EWU and Ellipsoidal cones and about $8$ times higher for OGD.

\begin{figure}[!ht]
  \begin{subfigure}{0.49\textwidth}
\begin{tikzpicture}

\begin{axis}[
            title={},
	        width=7.5cm,
	        height=7.5cm,
	        xlabel =  Period, 
	        ylabel = Cumulative Regret, 
	        grid=both, 
	        legend pos=south east,
	        legend style={nodes={scale=0.8, transform shape}},]
\addplot[color = magenta, ultra thick, dashed] table[x="time", y = "OGD", col sep = comma] {Data_simulations/d_10_pointed_shaded_reduced.dat};\addlegendentry{OGD} 
\addplot[color = orange, ultra thick, dashdotted] table[x="time", y = "EWU", col sep = comma] {Data_simulations/d_10_pointed_shaded_reduced.dat};\addlegendentry{EWU} 
\addplot[blue, ultra thick] table[x="time", y = "EllipsoidalCones", col sep = comma] {Data_simulations/d_10_pointed_shaded_reduced.dat};\addlegendentry{EllipCones}

% shaded areas plot
% sup
\addplot[name path = max_ogd, color = white, thin] table[x="time", y = "max_ogd", col sep = comma] {Data_simulations/d_10_pointed_shaded_reduced.dat};
\addplot[name path = max_ewu, color = white, thin] table[x="time", y = "max_ewu", col sep = comma] {Data_simulations/d_10_pointed_shaded_reduced.dat};
\addplot[name path = max_cone, color = white, thin] table[x="time", y = "max_cones", col sep = comma] {Data_simulations/d_10_pointed_shaded_reduced.dat};
% inf
\addplot[name path = min_ogd, color = white, thin] table[x="time", y = "min_ogd", col sep = comma] {Data_simulations/d_10_pointed_shaded_reduced.dat};
\addplot[name path = min_ewu, color = white, thin] table[x="time", y = "min_ewu", col sep = comma] {Data_simulations/d_10_pointed_shaded_reduced.dat};
\addplot[name path = min_cone, color = white, thin] table[x="time", y = "min_cones", col sep = comma] {Data_simulations/d_10_pointed_shaded_reduced.dat};

% filling
\addplot [magenta, fill opacity=0.2] fill between [of = max_ogd and min_ogd];
\addplot [orange, fill opacity=0.2] fill between [of = max_ewu and min_ewu];
\addplot [blue, fill opacity=0.2] fill between [of = max_cone and min_cone];

\end{axis}
\end{tikzpicture}
\caption{$d = 10$ and $T = 1000$.} \label{fig:sim01} 
\end{subfigure}
\begin{subfigure}{0.49\textwidth}
\begin{tikzpicture}
\begin{axis}[
            title={},
	        width=7.5cm,
	        height=7.5cm,
	        xlabel =  Period, 
	        ylabel = Cumulative Regret, 
	        grid=both, 
	        legend pos=south east,
	        legend style={nodes={scale=0.8, transform shape}},]
\addplot[color = magenta, ultra thick, dashed] table[x="time", y = "OGD", col sep = comma] {Data_simulations/d_25_pointed_shaded_reduced.dat};\addlegendentry{OGD} 
\addplot[color = orange, ultra thick, dashdotted] table[x="time", y = "EWU", col sep = comma] {Data_simulations/d_25_pointed_shaded_reduced.dat};\addlegendentry{EWU} 
\addplot[blue, ultra thick] table[x="time", y = "EllipsoidalCones", col sep = comma] {Data_simulations/d_25_pointed_shaded_reduced.dat};\addlegendentry{EllipCones}

% shaded areas plot
% sup
\addplot[name path = max_ogd, color = white, thin] table[x="time", y = "max_ogd", col sep = comma] {Data_simulations/d_25_pointed_shaded_reduced.dat};
\addplot[name path = max_ewu, color = white, thin] table[x="time", y = "max_ewu", col sep = comma] {Data_simulations/d_25_pointed_shaded_reduced.dat};
\addplot[name path = max_cone, color = white, thin] table[x="time", y = "max_cones", col sep = comma] {Data_simulations/d_25_pointed_shaded_reduced.dat};
% inf
\addplot[name path = min_ogd, color = white, thin] table[x="time", y = "min_ogd", col sep = comma] {Data_simulations/d_25_pointed_shaded_reduced.dat};
\addplot[name path = min_ewu, color = white, thin] table[x="time", y = "min_ewu", col sep = comma] {Data_simulations/d_25_pointed_shaded_reduced.dat};
\addplot[name path = min_cone, color = white, thin] table[x="time", y = "min_cones", col sep = comma] {Data_simulations/d_25_pointed_shaded_reduced.dat};

% filling
\addplot [magenta, fill opacity=0.2] fill between [of = max_ogd and min_ogd];
\addplot [orange, fill opacity=0.2] fill between [of = max_ewu and min_ewu];
\addplot [blue, fill opacity=0.2] fill between [of = max_cone and min_cone];

\end{axis}
\end{tikzpicture}
\caption{$d = 25$ and $T = 5000$} \label{fig:sim02} 
\end{subfigure}
\caption{Average cumulative regret over 50 simulations for EWU, OGD and the \texttt{EllipsoidalCones} algorithms for the pointed case. The shaded regions depict bands associated with the cumulative regret falling  one standard deviation away from the average.}
\end{figure}

\subsection{Learning from Revealed Preferences - General Case}\label{sec:simulations_2}

In our last comparison, we test the \texttt{ProjectedCones} algorithm. Next we consider the case where goods are described by a vector of features in $\mathbb R^d$ and the consumer faces $p$ goods at each period. In turn, the consumer solves the following direct problem

\vspace{-.1in}
\begin{equation}\label{eq:foward_problem_preferences_2}
\max_{x} x'Z_tu: \quad x'p_t \leq b_t, \quad x \in [0,1]^d,
\end{equation}

where $Z_t \in \mathbb R^p \times \mathbb R^d$ is a matrix where each row $i$ is the vector of features $z_{i,t} \in \mathbb R^d$ of the product $i$. The goal of the decision-maker is to recover the consumer's valuation for each feature of the products. This problem  falls under our formulation by defining $f_t(x) = x'Z_t$ and making $c^\star = -u$. In this case, $\mathcal X_t$ is a polyhedron defined by the budget constraint and prices of the products. We solve a fractional knapsack problem for this experiment. The instance for each simulation was generated with the following procedure. The utility vector $u$ is sampled from a multivariate gaussian with mean vector of ones and variance matrix as a diagonal vector of fours. We use the L2 norm to normalize the utility vector. Note that now some components of $u^\star$ may be negative. For instance, it could be the coefficient associated with the level of sugar of a cereal. The prices and budget are generated in the same fashion as the previous simulation and the product features are sampled from a standard multivariate normal with dimension $p$. We make the number of products to be twice the number of features. In \Cref{fig:sim03,fig:sim04} we compare the cumulative regret of the \texttt{ProjectedCones} algorithm with OGD for $p = 20, d = 10, T = 1000$ and $p = 50, d = 25, T = 5000$, respectively. Here, at each step of the OGD algorithm, we project it back to the unit ball using the Euclidean distance. We average the regret over 50 simulations. Note that for this case EWU do not apply, since the coefficient of some of the features can be negative. The simulations demonstrate that our method does not only provide improved theoretical guarantees against adversarial instances but also performs empirically better than the benchmarks for stochastic generated instances. For $T = 1000$, note that the increase in the dimension had an impact of about 10 times in both algorithms. Therefore, the fact that our theoretical bounds have a explicit polynomial dependence on the dimension doesn't seem to be worst than what happens with OGD even though the latter has in theory, a better dependence on the dimension.

\begin{figure}[!ht]
\begin{subfigure}{0.49\textwidth}
\begin{tikzpicture}
\begin{axis}[
            title={},
	        width=7.5cm,
	        height=7.5cm,
	        xlabel =  Period, 
	        ylabel = Cumulative Regret, 
	        grid=both, 
	        legend pos=south east,
	        legend style={nodes={scale=0.8, transform shape}},]
\addplot[color = magenta, ultra thick, dashed] table[x="time", y = "OGD", col sep = comma] {Data_simulations/d_10_sphere_shaded_reduced.dat};\addlegendentry{OGD} 
\addplot[blue, ultra thick] table[x="time", y = "ProjectedCones", col sep = comma] {Data_simulations/d_10_sphere_shaded_reduced.dat};\addlegendentry{ProjCones} 

% shaded areas plot
% sup
\addplot[name path = max_ogd, color = white, thin] table[x="time", y = "max_ogd", col sep = comma] {Data_simulations/d_10_sphere_shaded_reduced.dat};
\addplot[name path = max_cone, color = white, thin] table[x="time", y = "max_cones", col sep = comma] {Data_simulations/d_10_sphere_shaded_reduced.dat};
% % inf
\addplot[name path = min_ogd, color = white, thin] table[x="time", y = "min_ogd", col sep = comma] {Data_simulations/d_10_sphere_shaded_reduced.dat};
\addplot[name path = min_cone, color = white, thin] table[x="time", y = "min_cones", col sep = comma] {Data_simulations/d_10_sphere_shaded_reduced.dat};

% filling
\addplot [magenta, fill opacity=0.2] fill between [of = max_ogd and min_ogd];
\addplot [blue, fill opacity=0.2] fill between [of = max_cone and min_cone];

\end{axis}
\end{tikzpicture}
\caption{$d = 10$ and $T = 1000$.} \label{fig:sim03} 
%\end{center}
\end{subfigure}
\begin{subfigure}{0.49\textwidth}
%\begin{center}
\begin{tikzpicture}
\begin{axis}[
            title={},
	        width=7.5cm,
	        height=7.5cm,
	        xlabel =  Period, 
	        ylabel = Cumulative Regret, 
	        grid=both, 
	        legend pos=south east,
	        legend style={nodes={scale=0.8, transform shape}},]
\addplot[color = magenta, ultra thick, dashed] table[x="time", y = "OGD", col sep = comma] {Data_simulations/d_25_sphere_shaded_reduced.dat};\addlegendentry{OGD} 
\addplot[blue, ultra thick] table[x="time", y = "ProjectedCones", col sep = comma] {Data_simulations/d_25_sphere_shaded_reduced.dat};\addlegendentry{ProjCones} 

% % sup
\addplot[name path = max_ogd, color = white, thin] table[x="time", y = "max_ogd", col sep = comma] {Data_simulations/d_25_sphere_shaded_reduced.dat};
\addplot[name path = max_cone, color = white, thin] table[x="time", y = "max_cones", col sep = comma] {Data_simulations/d_25_sphere_shaded_reduced.dat};
% % inf
\addplot[name path = min_ogd, color = white, thin] table[x="time", y = "min_ogd", col sep = comma] {Data_simulations/d_25_sphere_shaded_reduced.dat};
\addplot[name path = min_cone, color = white, thin] table[x="time", y = "min_cones", col sep = comma] {Data_simulations/d_25_sphere_shaded_reduced.dat};

% % filling
\addplot [magenta, fill opacity=0.2] fill between [of = max_ogd and min_ogd];
\addplot [blue, fill opacity=0.2] fill between [of = max_cone and min_cone];

\end{axis}
\end{tikzpicture}
\caption{$d = 25$ and $T = 5000$.} \label{fig:sim04} 
\end{subfigure}
\caption{Average cumulative regret over 50 simulations for OGD and the \texttt{ProjectedCones} algorithms for the general case. The shaded regions depict bands associated with the cumulative regret falling  one standard deviation away from the average.}
\end{figure}

\section{Conclusion and further directions}\label{sec:conclusion}

In the offline setting, we have shown that without any particular assumption about the offline data $\mathcal D$, the circumcenter policy achieves the minimax regret. Furthermore, the regret of the circumcenter policy is instance-dependent and fully characterized by the uncertainty angle of the set of consistent cost vectors implied by the offline data. We also demonstrated that the online setting presents a new set of challenges. Indeed,  we show that being myopic leads to a linear regret with respect to the time horizon and significant care should be taken to construct dynamic robust \textit{and} informative policies.  The key to ensure learning is to force what we defined as inverse exploration, where nature is induced to choose between ``exploring for us'' or causing no regret. We have shown that such inverse exploration can be conducted effectively by regularizing the knowledge sets over time, leading to the first logarithmic regret results for this class of problems.

The present paper opens many avenues for future research. A first interesting avenue is to explore if the dimensionality dependence on the regret can be improved. Another complementary question is to develop impossibility results for this class of problems.  Finally, a natural extension of the current framework would be to allow for the possibility of noisy feedback.  It would be interesting to explore whether ideas developed for contextual search in  \cite{krishnamurthy2020contextual} to handle irrational agents could be used in the present context.

\setstretch{0.7}
\bibliographystyle{abbrvnat}
\bibliography{references}   

\begin{thebibliography}{46}
\providecommand{\natexlab}[1]{#1}
\providecommand{\url}[1]{\texttt{#1}}
\expandafter\ifx\csname urlstyle\endcsname\relax
  \providecommand{\doi}[1]{doi: #1}\else
  \providecommand{\doi}{doi: \begingroup \urlstyle{rm}\Url}\fi

\bibitem[Abbeel and Ng(2004)]{abbeel2004apprenticeship}
P.~Abbeel and A.~Y. Ng.
\newblock Apprenticeship learning via inverse reinforcement learning.
\newblock In \emph{Proceedings of the twenty-first international conference on
  Machine learning}, page~1, 2004.

\bibitem[Ahuja and Orlin(2001)]{ahuja2001inverse}
R.~K. Ahuja and J.~B. Orlin.
\newblock Inverse optimization.
\newblock \emph{Operations Research}, 49\penalty0 (5):\penalty0 771--783, 2001.

\bibitem[Amin et~al.(2015)Amin, Cummings, Dworkin, Kearns, and
  Roth]{amin2015online}
K.~Amin, R.~Cummings, L.~Dworkin, M.~Kearns, and A.~Roth.
\newblock Online learning and profit maximization from revealed preferences.
\newblock In \emph{Proceedings of the AAAI Conference on Artificial
  Intelligence}, volume~29, 2015.

\bibitem[Amin et~al.(2017)Amin, Jiang, and Singh]{amin2017repeated}
K.~Amin, N.~Jiang, and S.~Singh.
\newblock Repeated inverse reinforcement learning.
\newblock \emph{arXiv preprint arXiv:1705.05427}, 2017.

\bibitem[Arora and Doshi(2021)]{arora2021survey}
S.~Arora and P.~Doshi.
\newblock A survey of inverse reinforcement learning: Challenges, methods and
  progress.
\newblock \emph{Artificial Intelligence}, page 103500, 2021.

\bibitem[Aswani et~al.(2018)Aswani, Shen, and Siddiq]{aswani2018inverse}
A.~Aswani, Z.-J. Shen, and A.~Siddiq.
\newblock Inverse optimization with noisy data.
\newblock \emph{Operations Research}, 66\penalty0 (3):\penalty0 870--892, 2018.

\bibitem[Balcan et~al.(2014)Balcan, Daniely, Mehta, Urner, and
  Vazirani]{balcan2014learning}
M.-F. Balcan, A.~Daniely, R.~Mehta, R.~Urner, and V.~V. Vazirani.
\newblock Learning economic parameters from revealed preferences.
\newblock In \emph{International Conference on Web and Internet Economics},
  pages 338--353. Springer, 2014.

\bibitem[B{\"a}rmann et~al.(2017)B{\"a}rmann, Pokutta, and
  Schneider]{barmann2017emulating}
A.~B{\"a}rmann, S.~Pokutta, and O.~Schneider.
\newblock Emulating the expert: inverse optimization through online learning.
\newblock In \emph{Proceedings of the 34th International Conference on Machine
  Learning}, pages 400--410, 2017.

\bibitem[B{\"a}rmann et~al.(2018)B{\"a}rmann, Martin, Pokutta, and
  Schneider]{barmann2018online}
A.~B{\"a}rmann, A.~Martin, S.~Pokutta, and O.~Schneider.
\newblock An online-learning approach to inverse optimization.
\newblock \emph{arXiv preprint arXiv:1810.12997}, 2018.

\bibitem[Bastani et~al.(2021)Bastani, Bastani, and Sinchaisri]{BBS2021}
H.~Bastani, O.~Bastani, and W.~P. Sinchaisri.
\newblock Learning best practices: Can machine learning improve human
  decision-making?
\newblock \emph{Working paper, University of Pennsylvania}, 2021.

\bibitem[Beigman and Vohra(2006)]{beigman2006learning}
E.~Beigman and R.~Vohra.
\newblock Learning from revealed preference.
\newblock In \emph{Proceedings of the 7th ACM Conference on Electronic
  Commerce}, pages 36--42, 2006.

\bibitem[Bertsimas et~al.(2015)Bertsimas, Gupta, and
  Paschalidis]{bertsimas2015data}
D.~Bertsimas, V.~Gupta, and I.~C. Paschalidis.
\newblock Data-driven estimation in equilibrium using inverse optimization.
\newblock \emph{Mathematical Programming}, 153\penalty0 (2):\penalty0 595--633,
  2015.

\bibitem[Bj{\"o}rck(1994)]{bjorck1994numerics}
{\AA}.~Bj{\"o}rck.
\newblock Numerics of gram-schmidt orthogonalization.
\newblock \emph{Linear Algebra and Its Applications}, 197:\penalty0 297--316,
  1994.

\bibitem[Boyd and Vandenberghe(2004)]{boyd2004convex}
S.~Boyd and L.~Vandenberghe.
\newblock \emph{Convex optimization}.
\newblock Cambridge university press, 2004.

\bibitem[Chan et~al.(2019)Chan, Lee, and Terekhov]{chan2019inverse}
T.~C. Chan, T.~Lee, and D.~Terekhov.
\newblock Inverse optimization: Closed-form solutions, geometry, and goodness
  of fit.
\newblock \emph{Management Science}, 65\penalty0 (3):\penalty0 1115--1135,
  2019.

\bibitem[Chen et~al.(2021)Chen, Cire, Hu, and Lagzi]{chen2021model}
N.~Chen, A.~Cire, M.~Hu, and S.~Lagzi.
\newblock Model-free assortment pricing with transaction data.
\newblock \emph{arXiv preprint arXiv:2101.02251}, 2021.

\bibitem[Cohen et~al.(2020)Cohen, Lobel, and Paes~Leme]{cohen2016feature}
M.~Cohen, I.~Lobel, and R.~Paes~Leme.
\newblock Feature-based dynamic pricing.
\newblock \emph{Management Science}, 66\penalty0 (11):\penalty0 4921--4943,
  2020.

\bibitem[Dong et~al.(2018)Dong, Chen, and Zeng]{dong2018generalized}
C.~Dong, Y.~Chen, and B.~Zeng.
\newblock Generalized inverse optimization through online learning.
\newblock In \emph{Advances in Neural Information Processing Systems}, pages
  86--95, 2018.

\bibitem[Esfahani et~al.(2018)Esfahani, Shafieezadeh-Abadeh, Hanasusanto, and
  Kuhn]{esfahani2018data}
P.~M. Esfahani, S.~Shafieezadeh-Abadeh, G.~A. Hanasusanto, and D.~Kuhn.
\newblock Data-driven inverse optimization with imperfect information.
\newblock \emph{Mathematical Programming}, 167\penalty0 (1):\penalty0 191--234,
  2018.

\bibitem[Feng et~al.(2018)Feng, Caldentey, and Ryan]{feng2018learning}
Y.~Feng, R.~Caldentey, and C.~T. Ryan.
\newblock Learning customer preferences from personalized assortments.
\newblock \emph{Available at SSRN}, 2018.

\bibitem[Gr{\"o}tschel et~al.(1993)Gr{\"o}tschel, Lov{\'a}sz, and
  Schrijver]{grotschel1993ellipsoid}
M.~Gr{\"o}tschel, L.~Lov{\'a}sz, and A.~Schrijver.
\newblock The ellipsoid method.
\newblock In \emph{Geometric Algorithms and Combinatorial Optimization}, pages
  64--101. Springer, 1993.

\bibitem[Henrion and Seeger(2010)]{henrion2010inradius}
R.~Henrion and A.~Seeger.
\newblock Inradius and circumradius of various convex cones arising in
  applications.
\newblock \emph{Set-Valued and Variational Analysis}, 18\penalty0
  (3-4):\penalty0 483--511, 2010.

\bibitem[Hwang(2004)]{hwang2004cauchy}
S.-G. Hwang.
\newblock Cauchy's interlace theorem for eigenvalues of hermitian matrices.
\newblock \emph{The American Mathematical Monthly}, 111\penalty0 (2):\penalty0
  157--159, 2004.

\bibitem[Jabbari et~al.(2016)Jabbari, Rogers, Roth, and
  Wu]{jabbari2016learning}
S.~Jabbari, R.~M. Rogers, A.~Roth, and S.~Z. Wu.
\newblock Learning from rational behavior: Predicting solutions to unknown
  linear programs.
\newblock \emph{Advances in Neural Information Processing Systems},
  29:\penalty0 1570--1578, 2016.

\bibitem[Keshavarz et~al.(2011)Keshavarz, Wang, and
  Boyd]{keshavarz2011imputing}
A.~Keshavarz, Y.~Wang, and S.~Boyd.
\newblock Imputing a convex objective function.
\newblock In \emph{2011 IEEE International Symposium on Intelligent Control},
  pages 613--619. IEEE, 2011.

\bibitem[Khachiyan(1979)]{khachiyan1979polynomial}
L.~G. Khachiyan.
\newblock A polynomial algorithm in linear programming.
\newblock \emph{Doklady Akademii Nauk}, 244\penalty0 (5):\penalty0 1093--1096,
  1979.

\bibitem[Krishnamurthy et~al.(2020)Krishnamurthy, Lykouris, Podimata, and
  Schapire]{krishnamurthy2020contextual}
A.~Krishnamurthy, T.~Lykouris, C.~Podimata, and R.~Schapire.
\newblock Contextual search in the presence of irrational agents.
\newblock \emph{arXiv preprint arXiv:2002.11650}, 2020.

\bibitem[Liu et~al.(2021)Liu, Paes~Leme, and Schneider]{liu2021optimal}
A.~Liu, R.~Paes~Leme, and J.~Schneider.
\newblock Optimal contextual pricing and extensions.
\newblock In \emph{Proceedings of the 2021 ACM-SIAM Symposium on Discrete
  Algorithms (SODA)}, pages 1059--1078. SIAM, 2021.

\bibitem[Lobel et~al.(2018)Lobel, Paes~Leme, and
  Vladu]{lobel2018multidimensional}
I.~Lobel, R.~Paes~Leme, and A.~Vladu.
\newblock Multidimensional binary search for contextual decision-making.
\newblock \emph{Operations Research}, 66\penalty0 (5):\penalty0 1346--1361,
  2018.

\bibitem[Lovell(2008)]{lovell2008simple}
M.~C. Lovell.
\newblock A simple proof of the fwl (frisch-waugh-lovell) theorem.
\newblock \emph{The Journal of Economic Education}, 39\penalty0 (1):\penalty0
  88--91, 2008.

\bibitem[Nowozin and Lampert(2011)]{nowozin2011structured}
S.~Nowozin and C.~H. Lampert.
\newblock \emph{Structured learning and prediction in computer vision},
  volume~6.
\newblock Now publishers Inc, 2011.

\bibitem[Osa et~al.(2018)Osa, Pajarinen, Neumann, Bagnell, Abbeel, and
  Peters]{osa2018algorithmic}
T.~Osa, J.~Pajarinen, G.~Neumann, J.~A. Bagnell, P.~Abbeel, and J.~Peters.
\newblock An algorithmic perspective on imitation learning.
\newblock \emph{arXiv preprint arXiv:1811.06711}, 2018.

\bibitem[Osokin et~al.(2017)Osokin, Bach, and
  Lacoste-Julien]{osokin2017structured}
A.~Osokin, F.~Bach, and S.~Lacoste-Julien.
\newblock On structured prediction theory with calibrated convex surrogate
  losses.
\newblock \emph{arXiv preprint arXiv:1703.02403}, 2017.

\bibitem[Paes~Leme and Schneider(2018)]{leme2018contextual}
R.~Paes~Leme and J.~Schneider.
\newblock Contextual search via intrinsic volumes.
\newblock In \emph{2018 IEEE 59th Annual Symposium on Foundations of Computer
  Science (FOCS)}, pages 268--282. IEEE, 2018.

\bibitem[Ratliff et~al.(2006)Ratliff, Bagnell, and
  Zinkevich]{ratliff2006maximum}
N.~D. Ratliff, J.~A. Bagnell, and M.~A. Zinkevich.
\newblock Maximum margin planning.
\newblock In \emph{Proceedings of the 23rd international conference on Machine
  learning}, pages 729--736, 2006.

\bibitem[Raveh(1985)]{raveh1985use}
A.~Raveh.
\newblock On the use of the inverse of the correlation matrix in multivariate
  data analysis.
\newblock \emph{The American Statistician}, 39\penalty0 (1):\penalty0 39--42,
  1985.

\bibitem[Roth et~al.(2016)Roth, Ullman, and Wu]{roth2016watch}
A.~Roth, J.~Ullman, and Z.~S. Wu.
\newblock Watch and learn: Optimizing from revealed preferences feedback.
\newblock In \emph{Proceedings of the forty-eighth annual ACM symposium on
  Theory of Computing}, pages 949--962, 2016.

\bibitem[Sauré and Vielma(2019)]{Saure_Vielma_2019}
D.~Sauré and J.~P. Vielma.
\newblock Ellipsoidal methods for adaptive choice-based conjoint analysis.
\newblock \emph{Operations Research}, 67\penalty0 (2):\penalty0 315--338, 2019.

\bibitem[Seeger and Vidal-Nu{\~n}ez(2017)]{seeger2017measuring}
A.~Seeger and J.~Vidal-Nu{\~n}ez.
\newblock Measuring centrality and dispersion in directional datasets: the
  ellipsoidal cone covering approach.
\newblock \emph{Journal of Global Optimization}, 68\penalty0 (2):\penalty0
  279--306, 2017.

\bibitem[Sutton and McCallum(2006)]{sutton2006introduction}
C.~Sutton and A.~McCallum.
\newblock An introduction to conditional random fields for relational learning.
\newblock \emph{Introduction to statistical relational learning}, 2:\penalty0
  93--128, 2006.

\bibitem[Taskar et~al.(2005)Taskar, Chatalbashev, Koller, and
  Guestrin]{taskar2005learning}
B.~Taskar, V.~Chatalbashev, D.~Koller, and C.~Guestrin.
\newblock Learning structured prediction models: A large margin approach.
\newblock In \emph{Proceedings of the 22nd International Conference on Machine
  Learning}, pages 896--903, 2005.

\bibitem[Thai and Bayen(2018)]{thai2018imputing}
J.~Thai and A.~M. Bayen.
\newblock Imputing a variational inequality function or a convex objective
  function: A robust approach.
\newblock \emph{Journal of Mathematical Analysis and Applications},
  457\penalty0 (2):\penalty0 1675--1695, 2018.

\bibitem[Toubia et~al.(2007)Toubia, Hauser, and Garcia]{Toubia_2004}
O.~Toubia, J.~Hauser, and R.~Garcia.
\newblock Probabilistic polyhedral methods for adaptive choice-based conjoint
  analysis: Theory and application.
\newblock \emph{Marketing Science}, 26\penalty0 (5):\penalty0 596--610, 2007.
\newblock \doi{10.1287/mksc.1060.0257}.

\bibitem[Ward et~al.(2019)Ward, Master, and Bambos]{ward2019learning}
A.~Ward, N.~Master, and N.~Bambos.
\newblock Learning to emulate an expert projective cone scheduler.
\newblock In \emph{2019 American Control Conference (ACC)}, pages 292--297.
  IEEE, 2019.

\bibitem[Zadimoghaddam and Roth(2012)]{zadimoghaddam2012efficiently}
M.~Zadimoghaddam and A.~Roth.
\newblock Efficiently learning from revealed preference.
\newblock In \emph{International Workshop on Internet and Network Economics},
  pages 114--127. Springer, 2012.

\bibitem[Ziebart et~al.(2008)Ziebart, Maas, Bagnell, and
  Dey]{ziebart2008maximum}
B.~D. Ziebart, A.~L. Maas, J.~A. Bagnell, and A.~K. Dey.
\newblock Maximum entropy inverse reinforcement learning.
\newblock In \emph{Proceedings of AAAI}, volume~8, pages 1433--1438. Chicago,
  IL, USA, 2008.

\end{thebibliography}
\setstretch{1.3}

\appendix

\newpage%######################################
%####### Proofs
%######################################

\newpage

\begin{center}
 {\Large \textbf{Electronic Companion: 
 Appendix for \\
Contextual Inverse Optimization: Offline and Online Learning\\}}
\medskip
Omar Besbes\footnote{Columbia Business School --- ob2015@gsb.columbia.edu.}, ~Yuri Fonseca\footnote{Columbia Business School --- yfonseca23@gsb.columbia.edu.}, ~and Ilan Lobel\footnote{NYU Stern School of Business --- ilobel@stern.nyu.edu.}

\end{center}

\pagenumbering{arabic}
\renewcommand{\thepage}{App-\arabic{page}}
\renewcommand{\theequation}{\thesection-\arabic{equation}}
\renewcommand{\thelemma}{\thesection-\arabic{lemma}}
\setcounter{page}{1}
\setcounter{section}{0}
\setcounter{proposition}{0}
\setcounter{lemma}{0}
\setcounter{equation}{0}

\section{Examples of problem classes that fall under our formulation} \label{sec:appl}

 Next, to illustrate the framework introduced in \Cref{sec:pb-form}, we provide a few prototypical classes of problems that fall under it and relate these more precisely to some existing papers discussed in \Cref{sec:lit}. %We show throughout these examples how to cast some previous studied problems under our notation and using the overall notion of minimizing regret.

A first special case is one in which the decision-maker faces a sequence of optimization problems of the form $\min_{x \in \mathcal X_t} x'c^\star$. This formulation was studied in \cite{barmann2017emulating}. %The particular case where $\Fset_t$ are polyhedrons with a strong separation assumption for $c^\star$ was considered in \cite{amin2015online}. 
 A natural extension of this problem, also studied in \cite{barmann2017emulating}, is to consider linear context functions $f_t$. In this case, the sequence of optimization problems becomes solving $\min_{x \in \mathcal X_t} x'Z_tc^\star$ for the context function $f_t(x) = x'Z_t$. 

As a concrete application of the above,  consider a problem where the goal is to learn preferences from observing consumers' behavior. At each time $t$, the consumer faces an arbitrary bundle $S_t$ of $J_t$ products, each with features $z_t^j \in \mathbb R^d$ and price $p_t^j$. The consumer chooses products in the assortment in order to maximize his/her utility function subject to a budget constraint of $b_t$. Let $Z_t \in \mathbb R^{J_t}\times \mathbb R^d$ be the matrix where each row $j$ is given by the vector of features of the product $j$, i.e., $z_t^j$. Let also $p_t = (p_t^1,...,p_t^{J_t})$ be the vector of prices of products $j = 1,...,J_t$ available at time $t$. At each time, the consumer solves
$$
x^\star_t = \argmin_{x \in \mathcal X_t} f_t(x)'c^\star, \quad \mbox{ where } \Fset_t =  \{x \in \{0,1\}^{J_t} \;:\; x'p_t \leq b_t\}, \; f_t(x) = -x'Z_t,
$$
which is a sequence of adversarial knapsack problems where the universe of products, the budget constraint and the prices are allowed to change and are arbitrarily selected (by nature). This relates to the formulation explored in the learning from revealed preferences literature.

When there is no budget and $|S_t| = 1$, the problem becomes one of sequential customer buy/no buy decisions. The question then is how to leverage such binary feedback. The consumer is rational and buys the product if and only if the utility of buying it, modeled as ($z_t'\theta$) satisfies $z_t'\theta \geq p_t$. Here the vector $\theta$ is unknown. The consumer contextual optimization problem is given by $x^\star = \argmax_{x \in \{0,1\}} (z_t'\theta - p_t)x$. In our notation, this would correspond to context function $f_t(x) = -x(z_t,p_t)$, feasible sets $\mathcal X = \{0,1\}$ and cost vector $c^\star = (\theta,-1).$ 
 Even though the structure of the feedback and the contextual optimization problem faced by the consumer would be the same as  in the contextual pricing literature (see, e.g., \cite{cohen2016feature}), the problem described above  is of different nature since the decision-maker is not allowed to select the price of the product and can't affect the feedback directly. Here, the decision-maker's action is a passive one, where instead of  setting the price, the decision-maker merely ``guesses'' if the product would be bought or not by the consumer. This endows the decision-maker with significantly less control on the information collected.

The flexibility of our contextual optimization formulation allows to encompass works in imitation learning problems. In \cite{ward2019learning}, the authors consider an online problem where the goal is to be able to mimic the optimal scheduling policy of an expert for the following dynamic problem. At each time $t$, the length (states) of $n$ queues $z_t \in \mathbb R^n$ is observed and the expert solves $x^\star_t = \argmax_{x \in \mathcal X} x'Bz_t$, where $\mathcal X$ is the set of admissible schedule configurations that is assumed to be bounded. The matrix $B$ is unknown to the decision-maker. This problem  falls under  our formulation by defining $f_t$ to be the Kronecker product between $x$ and $z_t$, $f_t(x) = x \otimes z_t = (x_{[1]}z_t, x_{[2]}z_t, \cdots, x_{[n]}z_t)$, and letting $\mbox{vec}(B)$ be the operator that stacks the columns of $B$. We then have that at each time $t$ the expert solves $$x^\star_t = \argmin_{x \in \mathcal X} f_t(x)'c^\star, \quad \mbox{ where } f_t(x) = x \otimes z_t, \mbox{ and } c^\star = -\mbox{vec}(B).$$

Next, we illustrate that the framework also captures versions of problems in inverse reinforcement learning. A version of the problem studied in \cite{ratliff2006maximum} can be described as follows. At each period $t$, the decision-maker faces a new Markov decision process (MDP) and the objective is to match the state-action frequency of the expert. The initial state distribution and the transition probabilities for the period $t$ MDP are known. % and denoted by $\nu_t$ and $P_t(s',a,s)$, respectively. 
We let the set of feasible actions for the decision-maker $\mathcal X_t$ to be the set of  feasible state-action pair frequencies for the MDP.
 At the end of the period (after a full run of the period $t$ MDP), the decision-maker observes the optimal state-action pair frequency for that period's MDP. At period $t$, let $ \mathcal S_t$ and $\mathcal A_t$ denote the spaces for state and action, respectively.  For each state-action pair $(s,a) \in \mathcal S_t \times \mathcal A_t$, we have an associated $d$-dimensional vector of features $\phi_t(s,a)$. We assume that the cost function $r(s,a)$ associated with taking action $a$ in state $s$ is a linear with respect to the vector of features, i.e., $r(s,a) = \phi_t(s,a)'c^\star$. We also denote $r \in\mathbb R^{|\mathcal S_t|\times \mathcal {A}_t}$ to be the cost vector for each state-action pair. Let $\Phi_t \in \mathbb R^{|\mathcal S_t|\times \mathcal {A}_t} \times \mathbb R^d$ be the matrix where each row is the feature vector associated with each of the state-action pairs. Using the dual of the LP formulation for the MDP (cf. \cite{sutton2006introduction}), we have that $V_\pi = x_\pi'r =  x_\pi'\Phi_t c^\star$, where $x_\pi$ is the vector of state-action pair frequency implied by policy $\pi$. Therefore, at each time $t$ we would like to solve
$$
x^\star_t = \argmin_{x \in \mathcal X_t} f_t(x)'c^\star, \quad \mbox{ where } f_t(x) = x'\Phi_t.
$$

Finally, recall that a structured prediction problem is one where we observe some input and we would like to predict a multidimensional output. Let the inputs be denoted by $z \in \mathcal Z$, the outputs by $x \in \mathcal X(z)$ and the score function by $g(z,x)$. Our problem formulation includes special cases of this class of problems with a regret objective; when it is possible to assume a parametric form for $g(x,z) = -f(x,z)'{c^\star}$ for a known $f(x,z)$, then the forward problem is given by $x^\star = \argmin_{x \in \mathcal X(z)} f(x,z)'c^\star$ for an unknown $c^\star$. Applications of structured predictions include, e.g., natural language processing and image recognition  \citep{taskar2005learning}.

The examples above illustrate the generality of the formulation that we study, but also that there is a potential to lift up some existing formulations within a general framework of contextual inverse optimization with regret objective.

\section{Appendix: Proofs}\label{app:proofs}

\begin{proof}[\textbf{Proof of \Cref{lemma:01}}]
    We first consider the cases where $\theta(c^\star,c^\pi) \geq \pi/2$. In this case, nature can choose $f$ as the identity function and a set $\Fset = \{x_1,x_2\}$ such that $x_2-x_1 = c^\star$. Note that
    $(x_2-x_1)'c^\star = \|c^\star\|^2 = 1 \geq 0$, and therefore $x_2'c^\star \geq x_1'c^\star$, implying that $x_1 \in \psi(c^\star,\Fset,f)$. We also have that $
    (x_2-x_1)'c^\pi = {c^\star}'c^\pi \leq 0$, where the inequality follows from $\theta(c^\star,c^\pi) \geq \pi/2$. Thus,
    $x_1'c^\pi \geq x_2'c^\pi$, implying that $x_2 \in \psi(c^\pi,\Fset,f)$. Therefore, $\mathcal L(c^\pi,c^\star) \ge (x_2-x_1)'c^\star = \|c^\star\|^2 = 1$.

    We now turn our attention to the case where $0 \leq \theta(c^\star,c^\pi) < \pi/2$. The proof is organized as follows. We first introduce a relaxation of the maximization problem in the definition of $\mathcal L$ (see Eq. \eqref{pb-0}). Second, we show that the relaxation leads to a semi-definite programming (SDP) formulation based on the realizability of Gram matrices. Third, we derive an upper bound for the relaxed problem. The final step is to construct an instance that attains this upper bound.
    
    \textbf{Step 1.} Recall from Eq. \eqref{pb-0} that $\mathcal{L}(c^\pi, c^\star) = \sup_{\mathcal X \in \mathcal B,\; f \in \mathcal F, {x}^\pi \in \psi(c^\pi,{\cal X},f) }  ~ \Bigl(f(x^\pi)-f(x^\star)\Bigr)'c^\star.$
      For any $\Fset \in \mathcal{B}$, $f \in \mathcal{F}$, $c^\star,c^\pi \in S^d$, $x^\star \in \psi(c^\star,\Fset,f)$, and  $x^\pi \in \psi(c^\pi,\Fset,f)$, we have, by the optimality of $x^\star$ and $x^\pi$ (for their respective problems), that $(f(x^\pi)-f(x^\star))'c^\star \ge 0$ and $(f(x^\pi)-f(x^\star))'c^\pi \le 0$. 
    
    In particular, we have the following
    \begin{eqnarray}
     \mathcal{L}(c^\pi,c^\star) 
    &=& \sup \left\{ \: (f(x^\pi)-f(x^\star))'c^\star  \: : \: \mathcal X \in \mathcal B,\; f \in \mathcal F,   x^\pi \in \psi(c^\pi,\Fset,f),~x^\star \in \psi(c^\star,\Fset,f)\right\} \nonumber \\
    &\leq& \sup \left\{ \: \delta'c^\star  \: : \:  \delta \in \mathbb{R}^d,~ \|\delta\| \le 1,~ \delta'c^\pi \le 0 \right\} \nonumber \\
    &=& \sup \left\{ \: \delta'c^\star  \: : \:  \delta \in \mathbb{R}^d,~ \|\delta\| = 1, ~\delta'c^\pi \le 0 \right\}, \nonumber
    \end{eqnarray}
    where the inequality follows from noting that for any feasible $x^\star$, $x^\pi$, if one sets  $\delta = f(x^\star)-f(x^\pi)$,  we must have that $\|\delta\| \le \|x^\star - x^{\pi}\| \le 1$, and $\delta'c^\pi \le 0$. The last equality follows from the fact that an optimal solution will always have a vector $\delta$ with maximal norm. Note that the set of $\delta \in \mathbb R^d$ such that $\|\delta\| = 1$ is the set $S^d$. We therefore switch our attention to the optimization problem
    \begin{eqnarray} \label{eq:pb-1}
    \sup \left\{ \: \delta'c^\star  \: : \:  \delta \in {S}^d,~ \delta'c^\pi \le 0\right\},
    \end{eqnarray} which is an upper bound on the value of $\mathcal{L}(c^\pi, c^\star)$.

\textbf{Step 2.} We now analyze problem  \eqref{eq:pb-1}. In particular, we show how it can be written as an SDP and solved explicitly. For any $c^\star$, $c^\pi$ and $\delta \in S^d$, let us define the following matrices
    $$
    A = 
    \begin{bmatrix}
      \delta & c^\star & c^\pi
    \end{bmatrix},
    \quad \mbox{and} \quad 
    B = A'A.
    $$
  Then, $B$ is equal to
     $$
    B = 
    \begin{bmatrix}
      1 & \delta'c^\star & \delta'c^\pi \\
      \delta'c^\star & 1 & c'c^\pi \\
      \delta'c^\pi & {c^\star}'c^\pi & 1
    \end{bmatrix}.
    $$
  Note that $B$ is the Gram matrix associated with the matrix $A$. Therefore, it must belong to the set of symmetric positive semi-definite matrices (see Chapter 8.3.1 of \cite{boyd2004convex}). Moreover, Cauchy-Schwartz implies that all its entries are in $[-1,1]$. We also have that for any feasible solution of Problem \eqref{eq:pb-1},  the entry $B_{1,3} = \delta'c^\pi \leq 0$ and thus $B_{1,3} \in [-1,0]$.
      
    Let $\mathcal S^3_+$ denote the set of symmetric positive semi-definite matrices in $\mathbb R^{3\times 3}$ and let $\mathcal M^3$ denote the set of symmetric matrices in $\mathbb R^{3\times 3}$. In turn, we have the following relaxation:
    \begin{eqnarray}
       && \hspace*{-2cm} \sup \left\{ \: \delta'c^\star ~:~\delta \in   {S}^d   ,  \delta'c^\pi \le 0 \right\} \nonumber \\
      &\le&  \sup \left\{ B_{1,2} \: : \:  B \in \mathcal S^3_+, \quad B_{1,2} \in [-1,1], \quad  B_{1,3} \in [-1, 0], \quad B_{2,3}={c^\star}'c^\pi \right\}\nonumber \\
      &\le&  \sup \{ B_{1,2} \: : \: B \in \mathcal M^3, \quad   \det(B) \ge 0, \quad B_{1,2} \in [-1,1], \nonumber \\
      && \qquad \qquad \qquad B_{1,3} \in [-1, 0], \quad B_{2,3}={c^\star}'c^\pi\}\nonumber\\
      &=&  \sup \{ B_{1,2} \: : \: B \in \mathcal M^3, \quad   1+ 2 B_{1,2}B_{1,3}B_{2,3} - B_{1,2}^2- B_{1,3}^2 - B_{2,3}^2 \ge 0, \nonumber\\
      && \qquad \qquad \quad B_{1,2} \in [-1,1], \quad  B_{1,3} \in [-1, 0], \quad B_{2,3}={c^\star}'c^\pi \}\nonumber\\
     &=&  \sup \{ r \: : \:     1 + 2\rho z r - z^{ 2} - \rho^2 - r^2 \geq 0, \nonumber \\
     && \qquad \qquad \qquad -1 \leq \rho \leq 0, \quad -1 \leq r \leq 1, \quad  z={c^\star}'c^\pi \}, \label{eq-p2}
     \end{eqnarray}     
     where the last inequality follows from the relaxation to symmetric matrices with non-negative determinants. To simplify notation, we denoted $r = \delta'c^\star, \rho = \delta'c^\pi$ and $z = {c^\star}'c^\pi$.
     
    \textbf{Step 3.} In the final step we upper bound the problem in Eq. (\ref{eq-p2}). Let $h(r) = 1 + 2\rho z r - z^{ 2} - \rho^2 - r^2$ and note that $h$ is a quadratic function for any $\rho,z$. Also,  $h$ is concave and admits two roots
   \begin{eqnarray*}
   r_{+} &=& z\rho + \sqrt{z^{2}\rho^2 +(1-z^{ 2}-\rho^2)},\\
    r_{-} &=& z\rho - \sqrt{z^{2}\rho^2 +(1-z^{ 2}-\rho^2)},
   \end{eqnarray*}
  such that  $h(r)  \ge 0$ if and only if $r \in [r_{-},r_{+}]$. Hence, for any feasible $\rho, z$, the maximal achievable value of $r$ is  $r_{+}$. Solving the problem in Eq. \eqref{eq-p2} reduces to finding the feasible values of $\rho$ and $z$ that maximize $r_{+}$. When $\theta(c^\star,c_t^\pi) = 0$, $z = 1$, and $r_+ = 0$, so there is no regret. Now we consider the case where $0 < \theta(c^\star,c_t^\pi) < \pi/2$. Note that $r_{+}$ is differentiable with respect to $\rho$ and its derivative is given by
    $$
    \frac{\partial r_{+}}{\partial \rho} = z + 2 \rho (z^2-1) \frac{1}{2 \sqrt{z^{2}\rho^2 +(1-z^{ 2}-\rho^2)}}.
    $$
     Since $0 < \theta(c^\star,c^\pi)$ implies $z < 1$, the term inside the square root is always greater than zero and the derivative is always well-defined on the feasible set. Moreover, since $\theta(c^\star,c^\pi) < \pi/2$ implies $z > 0$ and $\rho \leq 0$ implies $\rho (z^2-1) \geq 0$, we get that the derivative is non-negative on the feasible set. Hence, independently of the value of $z$, the value of $r_{+}$ is maximized on the feasible set when $\rho$ achieves its maximum value, $0$. 

    Note that for all $\rho \le 0$, we have
    $
    r_{+} \le \sqrt{1-z^{2}},
    $
    with equality when $\rho=0$. Recall that $z = {c^\star}'c^\pi$.  Using the identity $\sin^2 x + \cos^2 x = 1$, we obtain
    $
    r_{+} \le \sqrt{1-({c^\star}'c^\pi)^2}  = \sin \theta (c^\star,c^\pi).
    $
To summarize, we have established that
 $\mathcal L(c^\pi,c^\star) \le \sin \theta (c^\star,c^\pi)$.

    \textbf{Step 4.} We now construct an instance to show that, for any $c^\star$ and $c^\pi$ with $\theta(c^\star,c^\pi) < \pi/2$, one may construct an instance  $\mathcal X$ and $f$ such that the regret is given by $\sin \theta (c^\star,c^\pi)$. Let $\Pi_{c^\pi}(c^\star)$ denote the orthogonal projection of $c^\star$ onto $c^\pi$. Since $\theta(c^\star,c^\pi) < \pi/2$ and both have unity norm, we have that $\Pi_{c^\pi}(c^\star) = \cos \theta(c^\star,c^\pi)\cdot c^\pi$. We now define $r = c^\star - \Pi_{c^\pi}(c^\star)$. Note that $r$ defines the residual of the projection of $c^\star$ onto $c^\pi$. Therefore, it is orthogonal to $c^\pi$ and $\|r\| = \sin \theta(c^\star,c^\pi)$. Then, by summing the angles within the triangle, we  have that $\theta(c^\star,c^\pi) + \theta(r,c^\star) + \pi/2 = \pi$, and thus $ \theta(r,c^\star) = \pi/2 - \theta(c^\star,c^\pi)$.
    
    Now we set $\delta = r/\|r\|$ and let nature pick ${f}(x) = x$ and ${\Fset} = \{x_1,x_2\}$ such that $x_2-x_1 = \delta$. By construction, $\delta$ is parallel to $r$ and must be orthogonal to $c^\pi$. Then, $\delta'c^\pi = (x_2-x_1)'c^\pi = 0$, which implies that $x_1,x_2 \in \psi(c^\pi,\Fset,f)$. Moreover, 
    $$
    \delta'c^\star = \frac{r'c^\star}{\|r\|} = \frac{\|r\|\|c^\star\|\cos \theta(r,c^\star)}{\|r\|} = \cos \theta(r,c^\star) = \cos (\pi/2 -\theta(c^\star,c^{\pi})) = \sin \theta(c^\star,c^\pi),
    $$
    completing the proof.

\end{proof}

\begin{proof}[\textbf{Proof of \Cref{lemma:existence_circuncenter}}] If $C = \{0\}$, then, for any $\hat c \in S^d$, $\sup_{c \in C}\theta(c,\hat c) = 0$ and any $\hat c \in S^d$ is a minimizer. Next we consider the nontrivial case where $C \setminus \{0\}$ is nonempty.

Define the set $\tilde{C} = \{\tilde c \in \mathbb R^d ~:~ \tilde{c} = \frac{c}{\|c\|} \hbox{ for some } c \in C \setminus \{0\}\}$ to be a set of normalized vectors from $C \setminus \{0\}$. Then, the uncertainty angle of $C$ satisfies:
    $$
    \inf_{\hat{c} \in S^d}\sup_{c \in C} \theta(c,\hat{c}) = \inf_{\hat{c} \in S^d}\sup_{c \in C \setminus \{0\}} \theta(c,\hat{c}) =  \inf_{\hat{c} \in S^d}\sup_{c \in C\setminus \{0\}} \arccos \frac{c'\hat{c}}{\|c\|} = \inf_{\hat{c} \in S^d}\sup_{c \in \tilde{C}} \arccos c'\hat{c}, \quad 
    $$
    where the first equality follows from the origin being a suboptimal solution of the maximization $\sup_{c \in C}$ since $\theta(0,\hat{c}) = 0$ and $\theta$ is a non-negative function, the second equality follows from the definition of an angle, and the third equality follows from replacing $C \setminus \{0\}$ with the normalized $\tilde C$.
    
    Since  $\arccos$ is a decreasing continuous function on $[-1,1]$, we have that the uncertainty angle satisfies:
    $$
    \inf_{\hat{c} \in S^d} \sup_{c \in \tilde C} \arccos c'\hat{c} = \inf_{\hat{c} \in S^d}  \arccos \inf_{c \in \tilde  C} c'\hat{c} = \arccos \sup_{\hat{c} \in S^d}   \inf_{c \in \tilde C} c'\hat{c},
    $$
    and the circumcenter $\hat c$ that optimizes the uncertainty angle is the same one that solves the problem:
    \begin{equation}\label{eq:weierstrass}
    \sup_{\hat{c} \in S^d}   \inf_{c \in \tilde C} c'\hat{c}.
    \end{equation}
    
     Define the function $g(\hat{c}) = \inf_{c \in C} c'\hat{c}$. Then, for any $r > 0$ and $u \in S^d$, we have that:
    $$
    g(\hat{c}+ru) = \inf_{c \in \tilde C}c'(\hat{c}+ru) = \inf_{c \in \tilde C} \{ c'\hat{c}+rc'u \}, 
    $$
    From the Cauchy-Schwarz inequality, we know that $|c'u| \leq \|c\| \cdot \|u\|= 1$ since both $c$ and $u$ belong to $S^d$. This implies that $-r \leq r c'u \leq r$. Then,
    $$
    g(\hat{c})-r \leq g(\hat{c}+ru) \leq g(\hat{c})+r \implies |g(\hat{c}+ru)-g(\hat{c})|\leq r,
    $$
    and we have that $g$ is a continuous function in $\mathbb{R}^d$. Therefore, the problem $\sup_{\hat c \in S^d} g(\hat c)$ from Eq. \eqref{eq:weierstrass} is an optimization problem with a continuous objective function over a compact space. By the Weierstrass theorem, the sup is attained. Since the maximizer of Eq. \eqref{eq:weierstrass} is also the minimizer of the uncertainty angle, the infimum of that problem (the circumcenter) is also attained.

    Now we prove uniqueness under the assumption that $\alpha(C) < \pi/2$. Suppose for a moment that the circumcenter is not unique and that  $\hat c_1$ and $\hat c_2$ are two distinct optimal solutions of Eq. \eqref{eq:weierstrass}. We will show that we can construct a new solution $\hat c_3$ which is strictly better than $\hat c_1$ and $\hat c_2$, leading to a contradiction. 
    
    Let $\hat c_3 = (\hat c_1 + \hat c_2)/2$. First we argue that $\hat c_3 \neq 0$. If $\hat{c}_3 = 0$, this would imply that $\hat c_1 = -\hat c_2$. However, Since $\alpha(C) < \pi/2$, then, it must be the case that $c'\hat{c}_1 > 0$, for all $c \in C \setminus \{0\}$ and $c'\hat{c}_2 > 0$, for all $c \in C \setminus \{0\}$, which implies that $c \in C \setminus \{0\}$ is empty, violating the assumption of the lemma. Thus, $\|\hat c_3\| > 0$. We also have that $\|\hat c_3\| < 1$ since $\hat c_3$ is a convex combination of two distinct vectors on the unit sphere.
    
    Let $z$ be the optimal value of Eq. \eqref{eq:weierstrass}, i.e., $z = \inf_{c \in \tilde C} c' \hat c_1 = \inf_{c \in \tilde C}  c'\hat c_2$. Then,
    \begin{align*}
   z &= \frac{1}{2} \inf_{c \in \tilde C}  c'\hat c_1 + \frac{1}{2} \inf_{c \in \tilde C}  c'\hat c_2 \stackrel{(a)}{<} \frac{\frac{1}{2} \inf_{c \in \tilde C}  c'\hat c_1 +\frac{1}{2}\inf_{c \in \tilde C}  c'\hat c_2}{\|\hat c_3\|} 
   \stackrel{(b)}{\leq} \frac{\inf_{c \in \tilde C}  c'\left(\frac{1}{2} \hat c_1 + \frac{1}{2} \hat c_2\right)}{\|\hat c_3\|} = \inf_{c \in \tilde C} c'\frac{\hat{c}_3}{\|\hat c_3\|},
    \end{align*}
    where $(a)$ follows from $\|\hat c_3\| < 1$ and $(b)$ follows from combining two infinimums. Since ${\hat{c}_3}/{\|\hat c_3\|}$ is a feasible solution of Eq. \eqref{eq:weierstrass} and its objective value is strictly above  $z$, this would violate the optimality of $\hat c_1$ and $\hat c_2$. This is a contradiction, and the circumcenter must be unique.
\end{proof}

\begin{proof}[\textbf{Proof of \Cref{theorem:one_period_problem}}]

 First we show how to construct the upper bound by leveraging the result provided in \Cref{lemma:01}. Next we how the construct a set $C$ such that any policy incurs a worst-case regret at least as high as the upper bound provided for the circumcenter policy.

\textit{Step 1}. We start by showing that for any policy $c^\pi \in \mathcal P'$, we can bound the objective function from Eq. \eqref{eq:wcr-offline} using the worst-case regret loss function $\mathcal{L}$ for specific choices of nature $(c^\star, \mathcal X, f)$,  defined in Eq. \eqref{pb-0}. The following inequality holds for any knowledge set $C \subseteq S^d$: 

\begin{align}\label{eq:opt_problem}
 \inf_{\pi \in {\cal P}} \: \sup_{\Fset \in \mathcal{B},\: f \in \mathcal{F}, \:c^\star \in C} \: \bigl(f(x^{\pi})-f(x^\star)\Bigr)'c^\star 
 &\stackrel{(a)}{\leq}    \inf_{\pi \in \mathcal P'} \: \sup_{\Fset \in \mathcal{B},\: f \in \mathcal{F}, \:c^\star \in C} \: \bigl(f(x^{\pi})-f(x^\star)\Bigr)'c^\star \nonumber \\ 
 &\stackrel{(b)}{=}    \inf_{c^\pi \in S^d} \: \sup_{\Fset \in \mathcal{B},\: f \in \mathcal{F}, \:c^\star \in C} \: \bigl(f(x^{\pi})-f(x^\star)\Bigr)'c^\star \nonumber \\ 
&\stackrel{(c)}{\leq}  \inf_{c^\pi \in S^d}  \: \sup_{c^\star \in C} \: \mathcal{L}(c^\pi, c^\star). %\nonumber 
\end{align}
where $(a)$ follows from restricting $\pi$ to ${\cal P}'$, $(b)$ follows from representing $\pi \in \mathcal P'$ in terms of its cost vector $c^\pi$, and  $(c)$ follows from the definition of $\mathcal{L}$ (it would be an equality if $x^\pi$ were unique). 

Let us define:
    \begin{eqnarray*}
       g(x) 
       &=& \begin{cases}
      \sin x & \mbox{if }0 \leq x < \pi/2, \\
      1 & \mbox{if }x \geq \pi/2.
   \end{cases}
    \end{eqnarray*}
\Cref{lemma:01} shows that $\mathcal{L}(c^\pi, c^\star) = g(\theta(c^\star,c^\pi))$. Combining with Eq. \eqref{eq:opt_problem}, we have:
\begin{eqnarray*}
 \inf_{\pi \in {\cal P}} \: \sup_{\Fset \in \mathcal{B},\: f \in \mathcal{F}, \:c^\star \in C} \: \bigl(f(x^{\pi})-f(x^\star)\Bigr)'c^\star 
 &\leq&  \inf_{c^\pi \in S^d}  \: \sup_{c^\star \in C} \: g(\theta(c^\star,c^\pi)) \\
 &=& g\left(\inf_{c^\pi \in S^d}  \: \sup_{c^\star \in C} \: \theta(c^\star,c^\pi)\right) = g(\alpha(C)), %\nonumber 
\end{eqnarray*}
where the first equality follows from $g(\cdot)$ being   nondecreasing and continuous, the second equality follows from the definition of the uncertainty angle and the third equality follows from applying the tighest lower bound by using the circumcenter policy. Therefore,
$$
	 \inf_{\pi \in {\cal P}} \sup_{\Fset \in \mathcal{B},\: f \in \mathcal{F}, \: c^\star \in C}  \:  \bigl(f(x^{\pi})-f(x^\star)\Bigr)'c^\star \leq  g(\alpha(C)) = g(\bar \alpha).
$$

\textit{Step 2.} To show that no policy can be uniformly better than the circumcenter, we construct instances of the problem where any policy incurs at least  the regret $g(\bar \alpha)$. We first consider the case where $\bar \alpha \leq \pi/2$. Let $e = \frac{1}{\sqrt{d}}(1,\cdots, 1)'$ and $\tilde C= \{c \in S^d:\: \theta(e,c) \leq \bar \alpha\}$, which is the intersection between a sphere and the revolution cone with axis $e$ and aperture angle of $\bar \alpha$.  Note that if $\bar \alpha = \pi/2$ there is no revolution cone, but it suffices to consider the halfspace $\tilde C= \{c \in S^d:\: c'e \geq  0\}$. Moreover, let ${f}(x) = x$ and ${\Fset} = \{x_1,x_2\}$ such that $\delta = x_2-x_1$ is orthogonal to $e$ and $\|\delta\|= 1$. 

Define $c_1^\star = \sin \bar \alpha \cdot \delta + \cos \bar \alpha \cdot e$. We now argue that $c_1^\star$ belongs to $\tilde C$. We first show that $c_1^\star \in S^d$:
\[\|c_1^\star\|^2 = \sin^2 \bar \alpha \cdot \|\delta\|^2 + \cos^2 \bar \alpha \cdot \|e\|^2 = \sin^2 \bar \alpha + \cos^2 \bar \alpha = 1,\]
where the first equality follows from $\delta$ and $e$ being orthogonal vectors, and the second equality follow from $\delta$ and $e$ having norm 1. We now show that $\theta(e,c_1^\star) = \bar \alpha$:
\[\theta(e,c_1^\star) = \arccos \frac{e'c_1^\star}{\|e\| \|c_1^\star\|} = \arccos {e'c_1^\star} = \arccos {\cos \bar \alpha} = \bar \alpha,\]
where the second equality follows from $e$ and $c_1^\star$ having norm 1, and the third equality follows from $\delta$ and $e$ being orthogonal.

We now construct a second vector in $\tilde C$, $c_2^\star = -\sin \bar \alpha \cdot \delta + \cos \bar \alpha \cdot e$. By the same argument as above, $c_2^\star$ also belongs to $\tilde C$. Note  that $x_1$ is the optimal action if $c_1^\star$ is the true cost and $x_2$ is the optimal action if  $c_2^\star$ is the true cost since $(x_2-x_1)'c_1^{\star} = \sin(\bar{\alpha}) \ge 0$ and $(x_2-x_1)'c_2^{\star} = - \sin(\bar{\alpha}) \le 0$. Since for any policy $\pi \in {\cal P}$ the decision-maker must choose either $x_1$ or $x_2$, we have:

\begin{align*}
\inf_{\pi \in {\cal P}} \sup_{\Fset \in \mathcal{B},\: f \in \mathcal{F}, \: c^\star \in \tilde{C}}  \:  \bigl(f(x^{\pi})-f(x^\star)\Bigr)'c^\star &\stackrel{(a)}{\geq}  \inf_{x^\pi \in \{x_1,x_2\}} \sup_{\: c^\star \in \tilde{C}} \:  \bigl(x^{\pi}-x^\star\Bigr)'c^\star \\ 
&\stackrel{(b)}{\geq} \inf_{x^\pi \in \{x_1,x_2\}} \sup_{\: c^\star \in \{c_1^\star,c_2^\star\}} \:  \bigl(x^\pi-x^\star\Bigr)'c^\star \\ 
&\stackrel{(c)}{=} \min\{(x_2-x_1)'c_1^\star,(x_1-x_2)'c_2^\star\},\\
&\stackrel{(d)}{=} \min\{\delta'c_1^\star,-\delta'c_2^\star\}  \\
&\stackrel{(e)}{=} \min\{\sin \bar \alpha,\sin \bar \alpha\} = \sin \bar \alpha,
\end{align*}
where $(a)$ follows from our choice of instance $(\Fset,f)$, $(b)$ follows from restricting the choice of $c^\star \in C$ to $\{c_1^\star,c_2^\star\}$, $(c)$ follows from the fact that $x_1$ is optimal for $c_1^\star$ and $x_2$ is optimal for $c_2^\star$, $(d)$ follows from the definition of $\delta$, and $(e)$ follows from the definitions of $c_1^\star$ and $c_2^\star$ and the fact that $\delta$ and $e$ are orthogonal. This completes the argument for the case where $\bar \alpha \leq \pi /2$.

Finally, we now consider the case where $\pi/2 < \bar \alpha \leq \pi$. Similarly to the previous case, let $\tilde C = \{c \in S^d: \; \theta(e,c) \leq \bar \alpha\}$, however, notice that this set is not a revolution cone anymore. Despite that, it is still well defined and $\alpha(C) = \bar \alpha$, $\hat c(C) = e$ by construction. Moreover, let ${f}(x) = x$ and ${\Fset} = \{x_1,x_2\}$ such that $\delta = x_2-x_1$ is orthogonal to $e$ and $\|\delta\|= 1$. But note that in this case, $x_1 \in \tilde C$ since $x_1'e = 0$ which implies that $\theta(x_1,e) = \pi/2 < \bar \alpha$. By the same argument, $x_2 \in \tilde C$. Therefore, no matter the policy $\pi$ used to choose between $x_1$ and $x_2$, nature can always pick a vector in $\tilde C$ to cause maximum regret. This completes the proof.

\end{proof}

\begin{proof}[\textbf{Proof of \Cref{lemma:insuff_circumcenter}}]

Let $\pi = \pi_{greedy}$ and let $e_i$ denote the $i$-th vector of the canonical basis in $\mathbb{R}^d$. For $c \in \mathbb R^d$, we use $c_{[i]}$ to denote the $i$-th entry of the vector $c$. 

The proof strategy is organized as follows. In step 1, we define a useful family of sets that will characterize our sequence of knowledge sets. In step 2, we construct choices of sets $\mathcal X_t$ and context functions $f_t$ so that the greedy policy implies that the sequence of knowledge sets $C(\mathcal I_t)$ always belongs to the family of sets that we defined in step 1. For this construction, we will focus on a 3-dimensional case, i.e., $d=3$. In step 3, we show that the regret in every time period must be uniformly bounded away from zero regardless of the time horizon, leading to the linear regret.

\textit{Step 1.} We first define a family of sets that will be central in the construction of instances with linear regret.

Define $h_1 = (2\sin^2\bar \alpha, -\sin 2\bar \alpha, \sin 2\bar \alpha)$, $h_2 = (2\sin^2\bar \alpha, -\sin 2\bar \alpha, -\sin 2\bar \alpha)$ and $h_3 = (-\varepsilon, 1, 0)$.  Consider the following family of sets indexed by $\bar \alpha$ and $\varepsilon$ with $0 < \bar \alpha < \pi/2$ and $0 \leq \varepsilon \leq (\tan \bar \alpha)/2$:
$$
C_{\varepsilon,\bar \alpha} = \{c \in \mathbb{R}^d :\;h_1'c \geq 0\} \cap \{c \in \mathbb{R}^d :\;h_2'c \geq 0\} \cap \{c \in \mathbb{R}^d :\;h_3'c \geq 0\} \cap S^d.
$$

Note that $C_{\varepsilon,\bar \alpha}$ is the intersection of a polyhedral cone with the unit sphere. Using the halfpsaces defined by $h_1$, $h_2$ and $h_3$, we can compute the generators of such a polyhedral cone. The generators of the cone are $g_1 = (1,\varepsilon, \varepsilon-\tan \bar \alpha)$, $g_2 = (1,\varepsilon, \tan \bar \alpha - \varepsilon)$ and $g_3 = (\cos \bar \alpha,\sin \bar \alpha,0)$. For simplicity, the generators were not normalized. The set $C_{\varepsilon,\bar \alpha}$ is always nonempty if $\varepsilon \leq \tan \bar \alpha$ and $\bar \alpha < \pi/2$ and we fix $c^\star = g_3 = (\cos \bar \alpha,\sin \bar \alpha,0)$. Using \Cref{def:set_angle}, one can also show that $\hat c (C_{\varepsilon,\bar \alpha}) = (1, \varepsilon, 0)$. In \Cref{fig:example_C_epsilon}, we depict an example of initial knowledge set $C_0$ for $\varepsilon = 0$ and $\bar \alpha = \pi/4$.

\textit{Step 2.}  Fix $0 < \bar \alpha < \pi/2$ and define a sequence of instances as follows. We let $C_0 = C_{0,\bar \alpha}$ (\Cref{fig:example_C_epsilon} depicts the set $C_0$ for $\bar \alpha = \pi/4$).  Suppose that $f_t$ is the identity for all $t\ge 1$ and let  
\bearn
&&\varepsilon_1 = \frac{\tan \bar \alpha}{2T}, \qquad  \varepsilon_t = t \varepsilon_1, \quad t \ge 2, \\
&& \Fset_t = \{\bar x_t, 0\}, \quad t \ge 1,
\eearn
where 
$$ \bar x_t = (e_2 - \varepsilon_t e_1)/\|e_2 - \varepsilon_t e_1\|.$$

In \Cref{fig:circ_fail}, we provide an illustration for $\bar \alpha = \pi/4$ and $T$ taken to be 5, so $\varepsilon_1 = 0.1$. 

\begin{figure}[ht]
  \begin{subfigure}{0.49\textwidth}
    \includegraphics[width=\linewidth]{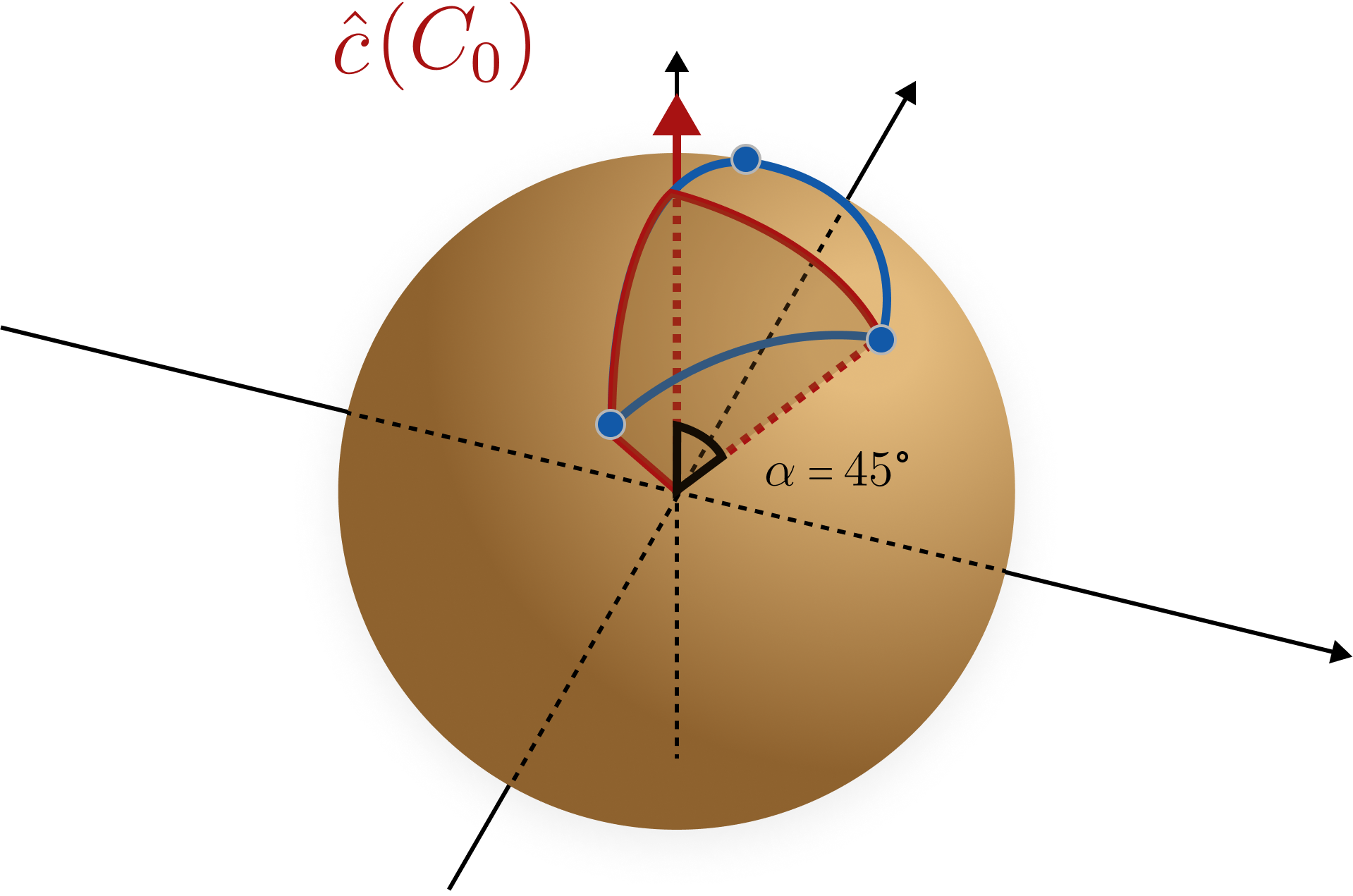}
    \caption{} \label{fig:example_C_epsilon}
  \end{subfigure}%
  \hspace*{\fill}   % maximizeseparation between the subfigures
  \begin{subfigure}{0.49\textwidth}
    \includegraphics[width=\linewidth]{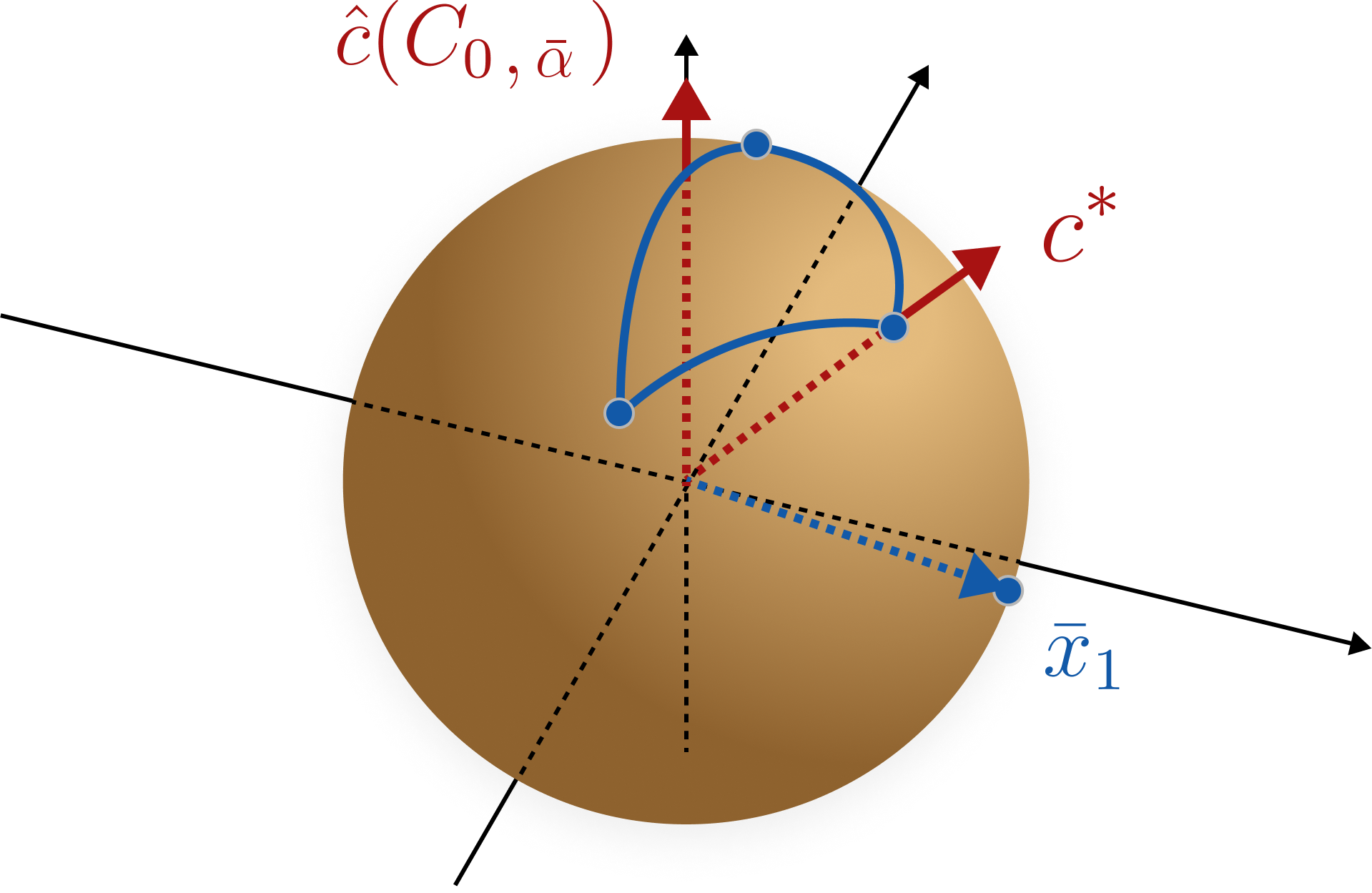}
    \caption{} \label{fig:circ_fail}
  \end{subfigure}
  \caption{In $(a)$, we depict the initial knowledge set $C_0$ and its circumcenter. In $(b)$, we depict the first instance of the optimization problem faced by the decision-maker and the true cost vector $c^\star$.}
\end{figure}

Next, we establish  by induction on $t$ that, under the greedy circumcenter policy, $C(\mathcal I_t) = C_{\varepsilon_{t-1},\bar \alpha}$ for $t \ge 2$. Note that the result is trivial for $t = 1$ since $C(\mathcal I_t) = C_0 = C_{0,\bar \alpha}$ by construction. Next we establish the base case ($t = 2$).  

By definition (Eq. \eqref{eq:knowledge_set}) we have that
$$
C(\mathcal I_2) = C_0 \cap \{c \in \mathbb{R}^{d}:    c'f_1(\bar x_1) \leq c'f_1(x), \;\forall \;x\in {\cal X}_1\}.
$$
Moreover, since $C(\mathcal I_1) = C_{0,\bar \alpha}$, $f_1$ is the identity and $\mathcal X_1 = \{\bar x_1,0\}$, we must have that $\hat c(C(\mathcal I_1)) = e_1$, $\psi(e_1, f_1, \Fset_1) = \bar x_1$ and $\psi(c^\star, f_1, \Fset_1) = 0$. Which leads to
$$
C(\mathcal I_2) =  C_0 \cap \{c \in \mathbb{R}^{d}\;:\;    -\varepsilon_1 c_{[1]} + c_{[2]} \geq 0\} = C_{\varepsilon_1,\bar \alpha},
$$
and the base case is established. Next we show the induction step. Suppose that the result holds for $t$. Then, the circumcenter of $C(\mathcal I_t)$ is given by $(1,\epsilon_{t-1},0)$. Therefore, $\psi(c_t^\pi, f_t, \Fset_t) = \bar x_t$ and $\psi(c^\star, f_, \Fset_t) = 0$, which leads to the update:

\begin{align*}
C(\mathcal I_{t+1}) &= C(\mathcal I_t) \cap \{c \in \mathbb{R}^{d}\;:\;    -\varepsilon_{t} c_{[1]} + c_{[2]} \geq 0\} \\
&= C_0 \cap \{c \in \mathbb{R}^{d}\;:\;    -\varepsilon_{t-1} c_{[1]} + c_{[2]} \geq 0\} \cap \{c \in \mathbb{R}^{d}\;:\;    -\varepsilon_{t} c_{[1]} + c_{[2]} \geq 0\} \\
&=  C_0 \cap \{c \in \mathbb{R}^{d}\;:\;    -\varepsilon_{t} c_{[1]} + c_{[2]} \geq 0\} = C_{\varepsilon_t,\bar \alpha},
\end{align*}
which concludes the proof by induction. Having established the above, we now analyze the regret in each period $t$. 

\textit{Step 3.} From step 2, we have for every time $t$ that
$C(\mathcal I_{t+1}) = C_{\varepsilon_t, \bar \alpha}$, $\hat c(C(\mathcal I_{t+1})) = \frac{1}{\sqrt{1+\varepsilon_t^2}}(1, \varepsilon_t, 0)$ and the regret at period $t$ is given by 
$$
{\delta_t^\pi}'c^\star = \frac{(-\epsilon_t e_1+e_2)'(\cos \bar \alpha, \sin \bar \alpha, 0)}{\sqrt{1+\varepsilon_t^2}} = \frac{\sin \bar \alpha - \varepsilon_t \cos \bar \alpha}{\sqrt{1+\varepsilon_t^2}} \geq \frac{T \sin \bar \alpha - (t/2T) \sin \bar \alpha }{\sqrt{2}} \geq \frac{\sin \bar \alpha}{4},
$$
where the first inequality follows from the fact that $\varepsilon^2 \leq 1$. Therefore, the cumulative regret must be $\Omega(T)$. 

In \Cref{fig:initial_set}, we can see the initial knowledge set $C_0$ for $\bar \alpha = \pi/4$. In \Cref{fig:cir_fail_2}, we have the updated knowledge set $C(\mathcal I_{2}) = C_{\varepsilon, \bar \alpha}$ after solving the first optimization instance. In \Cref{fig:circ_fail_final} we have the final set after collecting the feedback of the last time period $T=5$. No matter the horizon $T$, nature can always adjust $\epsilon_1$ as a function of $\bar \alpha$ and $T$ in in order to ensure that the updates are not enough to make the circumcenter and the true cost vector sufficiently close to each other.

\begin{figure}[ht]
  \begin{subfigure}{0.32\textwidth}
    \includegraphics[width=\linewidth]{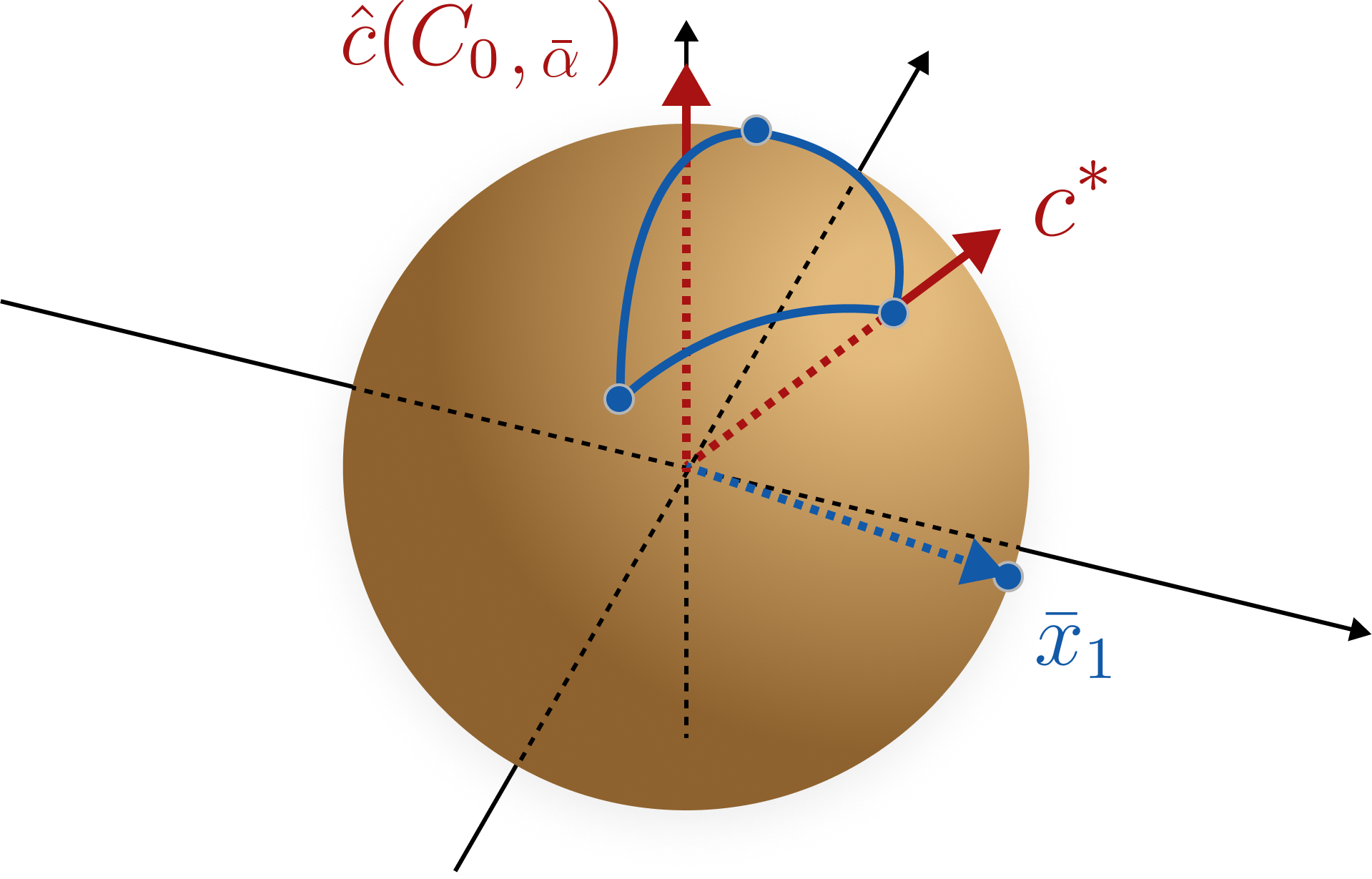}
    \caption{}\label{fig:initial_set} 
  \end{subfigure}%
  \hspace*{\fill}   % maximizeseparation between the subfigures
  \begin{subfigure}{0.32\textwidth}
    \includegraphics[width=\linewidth]{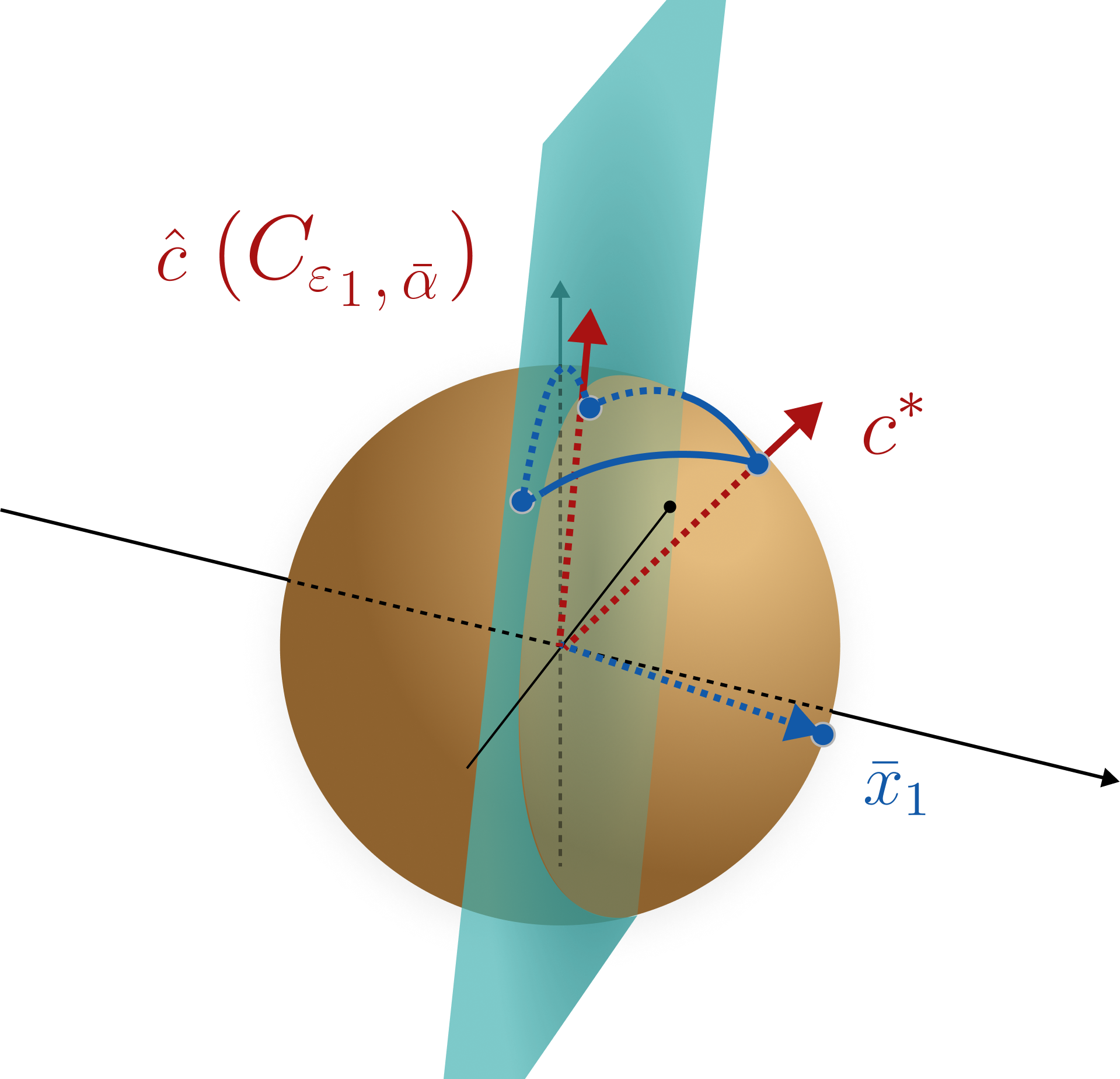}
    \caption{}\label{fig:cir_fail_2} 
  \end{subfigure}
    \hspace*{\fill}   % maximizeseparation between the subfigures
  \begin{subfigure}{0.32\textwidth}
    \includegraphics[width=\linewidth]{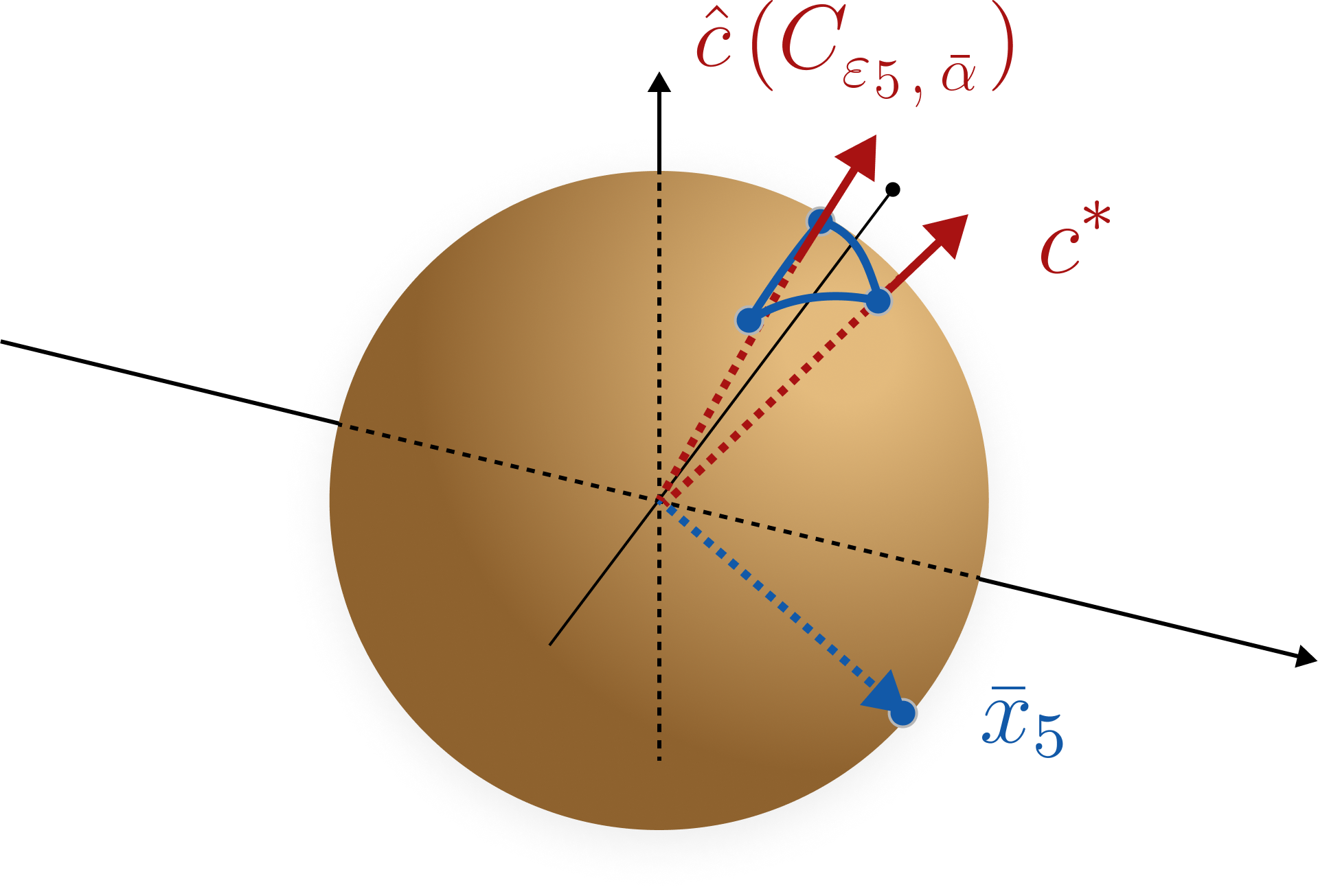}
    \caption{}\label{fig:circ_fail_final}
  \end{subfigure}
  \caption{In $(a)$ we have the initial knowledge set and the first instance of the optimization problem. In $(b)$, the updated set $C(\mathcal I_2)$, the new circumcenter $\hat c(C(\mathcal I_2))$, the true cost vector and the previous action $\bar x_1$. In $(c)$, the knowledge set after collecting the feedback of period $T$.}
\end{figure}

\end{proof}

\begin{proof}[\textbf{Proof of \Cref{lemma:ellipsoidal_cone_update}}] 

\Cref{lemma:ellipsoidal_cone_update} is a direct application of a more general result, described in \Cref{lemma:cone_update_general},  stated and proved in \Cref{app:aux}. In particular, if $t$ is a cone update period, we take $p = d$, $\eta = 0$, $\delta = \delta_t^\pi$ and $E(W,U) = E(W_t,U_t)$. We have that  ${\delta}'\hat c(E(W_t,U_t)) = {\delta_t^\pi}'c_t^\pi \leq 0$ where the inequality holds by definition of the effective difference $\delta_t^\pi$ (Eq. \ref{eq:feasible_feedback}) and the circumcenter policy applied to ellipsoidal cones.
\end{proof}

\begin{proof}[\textbf{Proof of \Cref{prop:angle_and_eigenvalue}}]
        
 Note that the circumcenter of $E(W,U)$ is $U e_1$. Let $E_H(W) = E(W) \cap H$, where $H = \{c \in \mathbb R^d:\; c'  e_1 = 1\}$. Next, recall that for any $c \in E(W) \setminus \{0\}$, the first component is always greater than zero. Then,
        $$
        \alpha(E(W,U)) \:=\: \sup_{c \in E(W,U)} \theta(c,Ue_1) = \sup_{c \in E(W)} \theta(c,e_1) = \sup_{c \in E(W)\setminus \{0\}} \theta(c,e_1) = \sup_{c \in E_H(W)} \theta(c,e_1), 
        $$
        where the second equality follows from the the fact that angles are preserved by orthonormal transformations, the third equality follows from the suboptimality of $\{0\}$ and the last equality follows from the fact that scaling a vector by a positive constant does not affect the angle. Moreover, for every $c \in E_H(W)$, we have that
        $$
        \tan \theta(c,e_1) = \tan \arccos \frac{c'e_1}{\|c\|} = \tan \arccos \frac{1}{\|c\|},
        $$
        where the first equality follows from the definition of an angle and the second one follows from $E_H(W) \subset H$. Hence,
        $$
        \tan \theta(c,e_1) = \sqrt{\|c\|^2-1} = \|c_{[2:d]}\|,
        $$
        where we used the trigonometric identity $\tan \arccos x = \frac{\sqrt{1-x^2}}{x}$ for $x \in [0,\pi/2)$. Using  the fact that $E_H(W)$ is an ellipsoid  one gets that $\|c_{[2:d]}\| \leq \sqrt {\lambda_{\max(M)}}$ and the inequality is tight for some $c \in E_H(W)$. Since $\tan (\cdot)$ is continuous and monotone increasing on $[0,\pi/2)$, we get
        
        $$
        \alpha(E(W,U)) = \sup_{c \in E_H(W)} \theta(c,e_1) = \sup_{c \in E_H(W)} \arctan \left(\tan \theta(c,e_1)\right) = \arctan \sqrt {\lambda_{\max(M)}}.
        $$
        This completes the proof.  
\end{proof}

\begin{proof}[\textbf{Proof of \Cref{lemma:epsilon}}]
    
Consider any $c \in E(W_t,U_t)\cap S^d$. Let $\tilde{c} = U_t^{-1}c$ and $\tilde{\delta}_t^{\pi} = U_t^{-1}\delta_t^{\pi}$. Recalling that for an orthonormal matrix $U$, $U^{-1} = U'$, the regret in period $t$ if the true underlying cost is $c$ is given by
	\begin{eqnarray*}
		{\delta^\pi_t}'c &=& {\delta^\pi_t}'U_t U_t^{-1}c \:=\:  {(U_t^{-1}{\delta^\pi_t})}' U_t^{-1}c \: = \: (\tilde{\delta}_t^{\pi})' \tilde{c}.
\end{eqnarray*}		
Note that $\tilde{c} \in E(W_t)\cap S^d$ and hence $\tilde{c}_{[1]}\in (0,1]$. Let $\tilde{\nu} =  \tilde{c}_{[2:d]}/\tilde{c}_{[1]}$ and note that $(1,\tilde{\nu}) \in E_H(W)$.  In turn, we have
\begin{eqnarray*}		
{\delta^\pi_t}'c &=& \tilde{c}_{[1]} \left(\tilde{\delta}^\pi_{t,[1]}+(\tilde{\delta}^\pi_{t,[2:d]})'\tilde{\nu} \right) \\
		&\leq& \tilde{c}_{[1]} \left(\tilde{\delta}^\pi_{t,[1]}+\sup_{(1,\nu)\in E_H(W)}(\tilde{\delta}^\pi_{[2:d]})'\nu\right) \\
		&\stackrel{(a)}{=}& \tilde{c}_{[1]} \left(\tilde{\delta}^\pi_{t,[1]}+\sqrt{(\tilde{\delta}^\pi_{t,[2:d]})'W_t\tilde{\delta}^\pi_{t,[2:d]}}\right) \\
		&\stackrel{(b)}{\leq}& \tilde{c}_{[1]} \left(\sqrt{(\tilde{\delta}^\pi_{t,[2:d]})'W_t\tilde{\delta}^\pi_{t,[2:d]}}\right) \\
		&\stackrel{(c)}{\leq}& \epsilon,
\end{eqnarray*}
where  $(a)$ follows from the fact that $E_H(W)$ is an ellipsoid and the optimization of a linear function over an ellipsoid has an analytical solution as given above (see, for instance, \cite{boyd2004convex}),  $(b)$ holds due to the fact that ${\tilde{\delta}}^\pi_{t,[1]} = (\tilde{\delta}^\pi)'e_1 = {\delta_t^\pi}' U_t e_1 = {\delta_t^\pi}' c_t^\pi \leq 0$ (see Eq. \eqref{eq:feasible_feedback}), and $(c)$ follows from $c_{[1]}^\star \leq 1$ and the assumption of the lemma. This completes the proof.

\end{proof}

\begin{proof}[\textbf{Proof of \Cref{lemma:stopping_rule}}]
 Let $\lambda_i(W)$ denote the $i$-th eigenvalue of $W$ in nondecreasing order. Suppose first that $\lambda_1(W_1) > \left( \frac{\epsilon}{10(d-1)}\right)^2$.  By \Cref{lemma:ellipsoidal_cone_update}, we have that that for every update period $t$, 
\bearn 
\prod_{i=1}^{d-1} \lambda_i(W_{t+1}) &\leq& e^{-1/(d-1)}\prod_{i=1}^{d-1} \lambda_i(W_t).
\eearn
For any time $t$ after exactly $I_T^{\pi}$ updates took place, we can apply the latter recursively to obtain
\bear \label{eq:ineqeigen1}
\prod_{i=1}^{d-1} \lambda_i(W_{t}) &\leq& e^{-I_T^{\pi}/(d-1)}\prod_{i=1}^{d-1} \lambda_i(W_1) \:\le \: e^{-I_T^{\pi}/(d-1)} (\lambda_{max}(W_1))^{d-1}.
\eear

Next we lower bound, for any $t$, the eigenvalues  $\lambda_i(W_{t})$. We will establish that  $\lambda_1(W_t) \ge  \left( \frac{\epsilon}{10d}\right)^2$ for all $t \ge 1$.

If $I_T^{\pi}=0$, then we have that $\lambda_1(W_t) = \lambda_1(W_1) > \left( \frac{\epsilon}{10(d-1)}\right)^2 
> \left( \frac{\epsilon}{10 d}\right)^2$.

Suppose now that $I_T^{\pi}>0$. We will show by induction  that  $\lambda_1(W_t) \ge  \left( \frac{\epsilon}{10d}\right)^2$ for all $t \ge 1$.

This clearly true for $t=1$. Suppose that it is true at time $s$. If there is no update, then this is trivially true for $s+1$. If there is an update $s$, two cases can happen. 

Case 1: $\lambda_1(W_s) > \left( \frac{\epsilon}{10(d-1)}\right)^2$. Note that the largest decrease possible in any eigenvalue after applying the ellipsoid method in $\mathbb{R}^{d-1}$ is $(d-1)^2/d^2$ (this is the decrease that happens when the cut is along that particular eigenvector). As a result, we must have $\lambda_1(W_{s+1}) \ge ((d-1)^2/d^2) \left( \frac{\epsilon}{10(d-1)}\right)^2 = \left( \frac{\epsilon}{10d}\right)^2$.

Case 2: $\lambda_1(W_s) \le \left( \frac{\epsilon}{10(d-1)}\right)^2$. In this case, given that updates are only performed when $\delta_{s,[2:d]}'W_s\delta_{s,[2:d]} > \epsilon^2$,  we can use \cite[Lemma 4]{cohen2016feature}. Indeed, in an update period, the second to the fourth update equations in algorithm \texttt{ConeUpdate} are precisely the update equations for the ellipsoid method for an ellipsoid in $\mathbb{R}^{d-1}$. \cite{cohen2016feature} introduce a version of the ellipsoid method where updates are not performed if the length of the ellipsoid along the direction to be cut is smaller than a certain threshold $\epsilon$. The condition in Eq. \eqref{eq:condition-update} for an update in our paper is essentially the same (our condition is on the distance from the center to the edge of the ellipsoid, but this is simply half the length).

Recall the notation used in algorithm \texttt{ConeUpdate} and note that every time that we update the matrix $W_t$ with the ellipsoid method, we get a matrix $N$. Let $\lambda_{1}(N)$ denote the smallest eigenvalue of the matrix $N$. By construction, if $t$ is an update period, we have that $\lambda_{1}(W_{t+1}) = \lambda_{1}(N)$. Lemma 4 from \cite{cohen2016feature} says that if we update our ellipsoidal cone according to algorithm \texttt{ConeUpdate}, if $\lambda_{1}(W_s) \leq \left(\frac{\epsilon}{10(d-1)}\right)^2$ and $\delta_{s,[2:d]}'W_s\delta_{s,[2:d]} > \epsilon^2$, then $\lambda_{1}(W_{s+1}) \geq \lambda_{1}(W_s)$, i.e., the smallest eigenvalue does not decrease after the update.

In this case, we deduce that $\lambda_1(W_{s+1}) \ge \lambda_1(W_s)\ge \left( \frac{\epsilon}{10d}\right)^2$, where the last inequality  follows from the induction hypothesis. This concludes the induction.

Combining this result with \eqref{eq:ineqeigen1}, we obtain
$$
\lambda_{max}(W_t) \leq \left(\frac{10d}{\epsilon}\right)^{2(d-2)} \left(\lambda_{max}(W_1)\right)^{(d-1)} e^{-I_T^\pi/(d-1)}.
$$

 Suppose now that $\lambda_1(W_1) \le \left( \frac{\epsilon}{10(d-1)}\right)^2$. By construction, $W_1$ is a revolution cone and $\lambda_i(W_1)=  \lambda_1(W_1)$ for $i=1,...,d-1$. For any vector $\delta \in \mathbb{R}^{d-1}$ with $\| \delta \| \le 1$, we have  
$\delta' W_1 \delta \le \|\delta\|^2 \lambda_{max}(W_1) \le  \left( \frac{\epsilon}{10(d-1)}\right)^2 \le \epsilon^2$. Hence, in this case, no update takes place, $I_T^{\pi} = 0$,  $W_t=W_1$ for all $t$, and by   \Cref{lemma:epsilon}, the per period regret is bounded above by $\epsilon$. This concludes the proof.
\end{proof} 

\begin{proof}[\textbf{Proof of \Cref{theorem:regret}}]
We prove \Cref{theorem:regret} in four steps. In the first step, we prove that $C_0 \subseteq E(W_1,U_1)$ and, that for every $t \ge 1$, $C(\mathcal{I}_t) \subseteq E(W_{t+1},U_{t+1})$. In the second step, we decompose the cumulative regret based on periods in which  the condition in Eq. \eqref{eq:condition-update} is satisfied or violated. In the third step, we upper bound the number of times that Eq. \eqref{eq:condition-update} can be violated, to obtain a bound on the cumulative regret. In the fourth step, we prove the running time claim.

\noindent
\textbf{Step 1.}  Note that $E(W_1,U_1)$, by construction, is precisely the revolution cone with aperture angle $\alpha(C_0)$ that contains $C_0$. The inclusions $C(\mathcal{I}_t) \subseteq E(W_{t+1},U_{t+1})$, $1 \leq t \leq T$ follows from our construction and  \Cref{lemma:ellipsoidal_cone_update}.

\noindent
\textbf{Step 2.} Recall that $I^{\pi}_T = \sum_{t=1}^T \mathbf{1} \{{\delta}'_{t,[2:d]}W_t \delta_{t,[2:d]} > \epsilon^2\}$ is the number of cone-update periods. We have the following regret bound.
	\begin{eqnarray}
        {\cal R}^{\pi}_T\left(c^\star,\vec{\cal X}_T,\vec f_T\right)
        &=& \sum_{t=1}^T \left(f_t(x_t^{\pi})-f_t(x^\star_t)\right)'c^\star \nonumber\\
        &=&  \sum_{t=1}^T {\|f_t(x^\pi_t)-f_t(x^\star_t)\|}{\delta_t^\pi}'c^\star  \nonumber \\
		&\stackrel{(a)}{\leq}& \sum_{t=1}^T \left(\mathbf{1} \{{\delta}'_{t,[2:d]}W_t \delta_{t,[2:d]} > \epsilon^2\} + \mathbf{1} \{{\delta}'_{t,[2:d]}W_t \delta_{t,[2:d]} \leq \epsilon^2\}\right){\delta_t^\pi}'c^\star \nonumber \\
		&\stackrel{(b)}{\le}& \sum_{t=1}^T  \mathbf{1} \{{\delta}'_{t,[2:d]}W_t \delta_{t,[2:d]} > \epsilon^2\}{\delta_t^\pi}'c^\star 
	 + \sum_{t=1}^T \mathbf{1} \{{\delta}'_{t,[2:d]}W_t \delta_{t,[2:d]} \leq \epsilon^2\}\epsilon \nonumber \\
	 &\stackrel{(c)}{\le}& \sum_{t=1}^T  \mathbf{1} \{{\delta}'_{t,[2:d]}W_t \delta_{t,[2:d]} > \epsilon^2\}
	 +  \sum_{t=1}^T\mathbf{1} \{{\delta}'_{t,[2:d]}W_t \delta_{t,[2:d]} \leq \epsilon^2\} \epsilon \nonumber \\
		&=&  I^{\pi}_T +   \left(T-I^{\pi}_T \right) \epsilon \nonumber\\
		&\leq& I^{\pi}_T + T\epsilon,\label{eq:i-t-pi}
	\end{eqnarray}
where $(a)$ follows from the fact that $\|f_t(x^\pi_t)-f_t(x^\star_t)\| \leq 1$ since $f_t \in \mathcal{F}$, $\Fset_t \in \mathcal{B}$, for all $t\leq T$, $(b)$ follows from \Cref{lemma:epsilon}, and $(c)$  follows from Cauchy-Schwarz inequality: ${\delta_t^\pi}'c^\star \leq \|\delta_t^\pi\|\|c^\star\| = 1$.

\noindent
\textbf{Step 3.} We now provide an upper bound for $I^{\pi}_T$. We need to consider  two separate cases as a function of $\alpha (C_0)$. 

If $\alpha(C_0) \leq \arctan \epsilon$, an application of \Cref{theorem:one_period_problem} shows that the regret of every period is less than $\sin \arctan \epsilon = \epsilon/(1+\epsilon)$ which is enough to ensure a performance of at least $\epsilon$ when using the circumcenter as the proxy cost vector. Moreover, \Cref{prop:angle_and_eigenvalue} and our choice of $E(W_1,U_1)$ in Algorithm \texttt{EllipsoidalCones} implies that $\lambda_{max}(W_1) \leq \epsilon^2$, and hence, for every possible vector $\delta$, $\delta_{[2:d]}'W_1\delta_{[2:d]} \leq \epsilon^2$. In this case, the algorithm never has a cone-update period and $I_T^\pi = 0$. Therefore, Eq. \eqref{eq:i-t-pi} implies that the regret is bounded by $T\epsilon$, which is equal to $d$ by our choice of $\epsilon$.

When $\alpha(C_0) > \arctan \epsilon$, we may have cone-update periods. We show that after an $\mathcal O (\log 1/\epsilon)$ amount of update steps, it must be the case that $\alpha(E(W_t,U_t)) < \arctan \epsilon$. Then, from this time onward, it must be the case that the algorithm never updates again and \Cref{lemma:epsilon} implies that the regret is upper bounded by $\epsilon$ for every period from this time onward. 

Suppose that we updated the ellipsoidal cone $\tau$ times. \Cref{lemma:stopping_rule} and the revolution cone initialization, 
$\lambda_1(W_1) = \cdots = \lambda_{d-1}(W_1) = \tan^2 \alpha(C_0)$ gives us that 
\begin{align*}
\lambda_{max}(W_t) &\leq \left(\frac{10d}{\epsilon}\right)^{2(d-2)} \left(\lambda_{max}(W_1) \right)^{d-1} e^{-\frac{\tau}{(d-1)}} = \left(\frac{10d}{\epsilon}\right)^{2(d-2)} (\tan \alpha(C_0))^{2(d-1)} e^{-\frac{\tau}{(d-1)}},
\end{align*}
for every $t$ after $\tau$ updates. Note that if we have $\tau = 2(d-1)^2\ln \left(\frac{10 d \tan \alpha(C_0)}{\epsilon }\right)$, then
\begin{align*}
\lambda_{max}(W_t) &\leq \left(\frac{10d}{\epsilon}\right)^{2(d-2)} (\tan \alpha(C_0))^{2(d-1)} e^{-\frac{2(d-1)^2\ln \left(\frac{10 d \tan \alpha(C_0)}{\epsilon }\right)}{(d-1)}} \\
& \leq \left(\frac{10d}{\epsilon}\right)^{2(d-1)-2}\left(\tan \alpha(C_0)\right)^{2(d-1)}  \left(\frac{10 d \tan \alpha(C_0)}{\epsilon }\right)^{-2(d-1)} \\
&\leq \left(\frac{10d}{\epsilon}\right)^{-2}.
\end{align*}
In this case, starting after $\tau$ updates, we are  as in case $ii.)$ of \Cref{lemma:stopping_rule}; there will be no  cone-update periods onward, and the per period regret will bounded by $\epsilon$ in future periods. In turn,

\begin{equation}\label{eq:upper_I_d}
I_T^\pi \leq 2(d-1)^2\ln \left(\frac{10 d \tan \alpha(C_0)}{\epsilon }\right).
\end{equation}
Combining Eqs. \eqref{eq:i-t-pi} and \eqref{eq:upper_I_d}, and selecting $\epsilon = d/T$ leads to:
\begin{eqnarray*}\label{eq:regret_case2}
        {\cal R}^{\pi}_T\left(c^\star,\vec{\cal X}_T,\vec f_T\right)  \leq   2(d-1)^2\ln \left(\frac{10 d \tan \alpha(C_0)}{\epsilon}\right) + T\epsilon 
\leq  2(d-1)^2\ln \left(10 T \tan \alpha(C_0)\right)+d.
\end{eqnarray*}

\noindent\textbf{Step 4.} This algorithm runs in polynomial time in $d$ and $T$ since every period's computation is a function only of $d$. Low-regret periods are computationally very cheap, while each cone-update period requires a spectral decomposition, which has a running time upper bound of $\mathcal O(d^3)$. 

The proof is complete.
\end{proof}

\begin{proof}[\textbf{Proof of \Cref{lemma:existence_subspace}}]
The subspace $\Delta_{t_0+1}$ is a one-dimensional object and, by the definition of $\Delta_{t_0+1}$, we have that for every element $c \in \Pi_{\Delta_{t_0+1}}\left(C(\mathcal{I}_{t_0+1})\right)$, $c = \gamma \delta^\pi_{t_0}$ for some $\gamma \in \mathbb R$. Moreover, for every $t \geq {t_0}$, Eq. \eqref{eq:relaxed_update} implies that we must have for every $c \in C(\mathcal{I}_{t})$ that ${\delta_{t_0}^\pi}'c \geq 0$. Since $\delta^\pi_{t_0} \in \Delta_{t_0+1}$,  we have that ${\delta_{t_0}^\pi}'c \geq 0 \iff {\delta_{t_0}^\pi}'\Pi_{\Delta_{t_0+1}}(c) \geq 0$. Hence,
$$
\Pi_{\Delta_{t_0+1}}\left(C(\mathcal{I}_{t_0+1})\right) \subseteq \{c \in \mathbb{R}^d:\; c = \gamma \delta_{t_0}^\pi, \; \gamma \geq 0\} \implies \alpha(\Pi_{\Delta_{t_0+1}}\left(C(\mathcal{I}_{t_0+1})\right)) = 0,
$$
and $\Pi_{\Delta_{t_0+1}}\left(C(\mathcal{I}_{t_0+1})\right)$ lives in a pointed cone.

The second affirmative follows directly from the fact that whenever $f_t(x_t^\pi) = f_t(x_t^\star)$,  there is no suboptimality gap and the regret is zero.
\end{proof}

\begin{proof}[\textbf{Proof of \Cref{lemma:stopping_rule_2}}]
For every time $t$, we have that
\begin{eqnarray*}
   \sup_{x^\star_t \in \psi(c^\star,\Fset_t,f_t)} \left(f_t(x_t^{\pi})-f_t(x^\star_t)\right)'c^\star &\stackrel{(a)}{\leq}&   \; {\delta_t^\pi}'c^\star = {r_t^\pi}'c^\star + {\Pi_{\Delta_t}(\delta_t^\pi)}'c^\star 
  \stackrel{(b)}{\leq}  \eta + {\Pi_{\Delta_t}(\delta_t^\pi)}'\Pi_{\Delta_t}(c^\star),
\end{eqnarray*}
where $(a)$ follows from the fact that $f_t \in \mathcal{F}$, $\Fset_t \in \mathcal{B}$ and $(b)$ follows from bounding ${r_t^\pi}'{c^\star}$ with $\eta$ since we assumed $\|r_t^\pi\| \leq \eta$, and replacing $c^\star$ with its projection onto $\Delta_t$.

Next, we show that ${\Pi_{\Delta_t}(\delta_t^\pi)}'\Pi_{\Delta_t}(c^\star) \leq \epsilon$ when the assumptions of the lemma are satisfied. For that, note that $\Pi_{\Delta_t}(\delta_t^\pi) \in \Delta_t$ and $E(W_t,U_t) \subset \Delta_t$. Moreover, we have by assumption that 
$\Pi_{\Delta_t}(c^\star) \in \Pi_{\Delta_t}(C(\mathcal I_t)) \subseteq E(W_t,U_t)$.
%Hence, an application of \newob{the arguments} in the proof of \Cref{lemma:epsilon} completes the proof. 
Hence, an application of \Cref{lemma:epsilon}
in the subspace $\Delta_t$ completes the proof.

\end{proof} 

\begin{proof}[\textbf{Proof of \Cref{lemma:properties_robust_update}}]

First we show that we do not exclude any feasible vector when we update the ellipsoidal cone using the projected effective difference. By the definition of the residual, we have that, for any $c \in C(\mathcal{I}_{t+1})$, that
$
{\delta^\pi_t}'c = {\Pi_{\Delta_{t+1}}(\delta^\pi_t)}'c+  {r^\pi_t}'c \geq 0,
$
where the inequality follows from Eq. (\ref{eq:relaxed_update}). Therefore, 
\begin{eqnarray*}
\Pi_{\Delta_{t+1}}\left(C(\mathcal{I}_{t+1})\right) &\subseteq& \Pi_{\Delta_{t+1}}\left(C(\mathcal{I}_t)\cap \{c \in \mathbb{R}^d:\; {\delta^\pi_t}'c \geq 0\} \right) \\ 
&\subseteq& \Pi_{\Delta_{t+1}}\left(C(\mathcal{I}_t)\cap \{c \in \mathbb{R}^d:\; {\Pi_{\Delta_t}(\delta^\pi_t)}'c \geq -\eta\} \right) \\ 
&=& \Pi_{\Delta_t}\left(C(\mathcal{I}_t)\cap \{c \in \mathbb{R}^d:\; {\Pi_{\Delta_t}(\delta^\pi_t)}'\Pi_{\Delta_t}(c) \geq -\eta\} \right) \\ 
&\subseteq& \Pi_{\Delta_t}\left(C(\mathcal{I}_t)\right)\cap\{c \in \Delta_t:\; {\Pi_{\Delta_t}(\delta^\pi_t)}'c \geq -\eta\} \\
&\subseteq& E(W_t,U_t)\cap \{c \in \Delta_t:\; {\Pi_{\Delta_t}(\delta^\pi_t)}'c \geq -\eta\},
\end{eqnarray*}
where the first inclusion follows from the fact that including elements in a set can only include elements in the projection of that set, the second inclusion follows from applying Chauchy-Schwarz for ${r^\pi_t}'c$ and using that $\|r_t^\pi\| \leq \eta$, the  equality follows the fact that $\Pi_{\Delta_t}(\delta^\pi_t) \in \Delta_{t}$, which implies that ${\Pi_{\Delta_t}(\delta^\pi_t)}'c = {\Pi_{\Delta_t}(\delta^\pi_t)}'\Pi_{\Delta_t}(c)$ and the fact that $\Delta_{t+1} = \Delta_t$, the third inclusion follows from the fact that the orthogonal projection of the intersection of sets must belong to the orthogonal projection of each of the sets and,  finally, the last inclusion follows from assumption of the lemma.

The next step is a direct application of \Cref{lemma:properties_robust_update}. For every cone update period $t$, we focus on the subspace $\Delta_t$ which has dimension $2 \leq p \leq d$. We represent the vectors in $\Delta_t$ under its basis representation by calculating $B_{\Delta_t}c \in \mathbb R^p$ for every $c \in \Delta_t^\pi$. Therefore, it suffices to apply \Cref{lemma:cone_update_general} with $p = p$, $\delta = B_{\Delta_t}\Pi_{\Delta_t}(\delta_t^\pi)$, $\eta = \eta$ and $E(W_t,U_t)$, where the matrices $W_t,U_t$ are already written in the basis representation of $\Delta_t$ by construction. 

The final step of this proof is to show that our choice of $\eta$ and $\delta$ satisfy the assumptions of the lemma, meaning that the shallow-cut is sufficiently deep to ensure a volume reduction. Since
$$
\Pi_{\Delta_t}(\delta_t)_{[2:p]}'W_{t}\Pi_{\Delta_t}(\delta_t)_{[2:p]} > \epsilon^2 \implies \frac{1}{\sqrt{\Pi_{\Delta_t}(\delta_t)_{[2:p]}'W_{t}\Pi_{\Delta_t}(\delta_t)_{[2:p]}}} < \frac{1}{\epsilon},
$$
we have that, for $\eta = \epsilon/2d$ and $1 < p \leq d$:
$$
\eta \leq \frac{\sqrt{\Pi_{\Delta_t}(\delta_t)_{[2:p]}'W_{t}\Pi_{\Delta_t}(\delta_t)_{[2:p]}}}{2d} \leq \frac{\sqrt{\Pi_{\Delta_t}(\delta_t)_{[2:p]}'W_{t}\Pi_{\Delta_t}(\delta_t)_{[2:p]}}}{2(p-1)} = \frac{\sqrt{(B_{\Delta_t}\Pi_{\Delta_t}(\delta_t)_{[2:p]})'W_{t}B_{\Delta_t}\Pi_{\Delta_t}(\delta_t)_{[2:p]}}}{2(p-1)} 
$$

The desired inclusion and the reduction in the product of the eigenvalues follows directly from \Cref{lemma:cone_update_general}.
\end{proof}

\begin{proof}[\textbf{Proof of \Cref{lemma:cone_updates_per_subspace}}]
In order to prove the result, we will follow the same strategy as in the proof of \Cref{lemma:stopping_rule}. For the boundary cases where $t = t_0^p$ for $2 \leq p \leq d$, the ellipsoidal cone $E(W_t,U_t)$ is constructed by a dimension update period, and we must have that $\lambda_1(W_{t_0^p}) = \lambda_{max}(W_{t_0^p})$.

Suppose $\lambda_1(W_{t_0^p}) > \left(\frac{\epsilon}{10p}\right)^2$. The shallow-cut equivalent of Lemma 4 from \cite{cohen2016feature} says that if we update our ellipsoidal cone according to Eq. \eqref{eq:update_rule_general_case}, if $\lambda_{1}(W_t) \leq \frac{\epsilon^2}{100(p-1)^2}$ and $\delta_{t,[2:p]}'W_t\delta_{t,[2:p]} > \epsilon^2$, then $\lambda_{1}(W_{t+1}) \geq \lambda_{1}(W_t)$, i.e., the smallest eigenvalue does not decrease after the update. Since the largest decrease possible in any eigenvalue after applying the ellipsoid method in $\mathbb{R}^{p-1}$ is $(p-1)^2/p^2$ (this is the decrease that happens when the cut is along that particular eigenvector), we have that for all $t_0^p < t < t_0^{p+1}$ (all time periods where the dimensionality of $\Delta_t$ is $p$) that
$$
\lambda_{1}(W_{t+1}) \geq \frac{(p-1)^2}{p^2}\frac{\epsilon^2}{100(p-1)^2} = \left(\frac{\epsilon}{10p}\right)^2.
$$
We omit the induction argument here since it mimics the proof of \Cref{lemma:stopping_rule}. Hence, 
$$
\lambda_{p-1}(W_t) \geq \left(\frac{\epsilon}{10p}\right)^2.
$$
Moreover, the shrinking factor of \Cref{lemma:properties_robust_update} gives us that
$$
\prod_{i=1}^{p-1}\lambda_i(W_{t+1}) \leq e^{-1/20(p-1)}\prod_{i=1}^{p-1}\lambda_i(W_t).
$$
Using the lower bound for the eigenvalues of $W_t$ gives us that
$$
\lambda_{max}(W_t) \leq \left(\frac{10p}{\epsilon}\right)^{2(p-2)} \left(\lambda_{max}(W_1)\right)^{(p-1)} e^{-\frac{I_T^{\pi,p}}{20(p-1)}},
$$
similarly to the proof of step 3 of \Cref{theorem:regret}, we must have that
$$
I_T^{\pi,p} \leq 20(p-1)^2\ln \left(\frac{10 p \tan \alpha(E(W_{t_0^p},U_{t_0^p}))}{\epsilon }\right),
$$
where $t_0^p$ is the first time such that the dimension of $\Delta_t = p$ and $E(W_{t_0},U_{t_0})$ the initial ellipsoidal cone.

 If $\lambda_1(W_{t_0^p}) \leq \left(\frac{\epsilon}{10p}\right)^2$, then noting that $\lambda_{max}(W_{t_0^p}) = \lambda_1(W_{t_0^p}) $, the same argument from \Cref{lemma:stopping_rule} holds and $I_T^{\pi,p} = 0$, while \Cref{lemma:stopping_rule_2} ensures a upper bound of $\epsilon$ for each period regret $t$ such that $\mbox{dim}(\Delta_t) = p$.
\end{proof}

\begin{proof}[\textbf{Proof of \Cref{lemma:gamma_pointed}}]

The proof is divided in three parts. First we show the inclusion, then we show how to compute the circumcenter, and finally we prove the bound for the uncertainty angle. 

\textbf{Inclusion.} The definition of the knowledge set  implies that for every $c \in C(\mathcal I_t)$, ${\delta_t^\pi}'c \geq 0$. In particular, this holds for all $i \in \tau(t)$. Moreover, by definition, $\delta_i^\pi \in \Delta_t$ for all $i \in \tau(t)$. Hence, for every $c \in C(\mathcal I_t)$, we have that ${\delta_i^\pi}'c = {\delta_i^\pi}'\Pi_{\Delta_t}(c) \geq 0$ for all $i \in \tau(t)$, which implies that $\Pi_{\Delta_t}(C(\mathcal I_t)) \subseteq K_t$. 

\textbf{Computation of circumcenter.} The computation of the circumcenter of $K_t$ is done via algorithm \texttt{PolyCenter} (see Algorithm \ref{algo:poly_center}). The subspace updating rule implies that $\delta_i^{\pi}$, $i \in \tau(t)$ are linearly independent. Hence, the system of equations described has one and only one solution for each iteration $k$. In addition, we have that $z$ is a vector in the interior of the cone $\{c \in \mathbb R^p:\; c = \sum \gamma_i \bar \delta_i, \; \gamma_i \geq 0, \; i \in \tau(t)\}$, which is the dual cone of $K_t$  \citep{boyd2004convex}. Hence, $c'z > 0$, for all $c \in K_t$ and $K_t \cap \{c \in \mathbb R^p :\; c'z \leq 1\}$ is bounded with extreme points given by the rays (not normalized) of $K_t$ and the origin. To see why the quadratic program in \texttt{PolyCenter} yields to the solution of the circumcenter, we refer to \cite{seeger2017measuring}. The algorithm runs in polynomial time in $p$ since it contains $p$ linear systems with $p$ equations and one quadratic programming formulation.

\textbf{Upper bound for the uncertainty angle.}

For any convex, closed and pointed cone $K \subset \mathbb R^d$, we define the circumradius of the cone $\mu(K)$ to be equal to sine of its uncertainty angle, i.e., $\mu(K) = \sin \alpha(K)$. In this proof, we will show that
\begin{equation}\label{eq:main-lemma11}\mu(K_{t+1}) {\leq} \sqrt{1-\frac{\eta^{2(p-1)}}{p^3}}.\end{equation}
Once we prove that Eq. \eqref{eq:main-lemma11} is true, it follows   that 
$$
\cos \alpha(K_{t+1}) = \sqrt{1 - \mu^2(K_{t+1})} \geq \frac{\eta^{(p-1)}}{p^{3/2}} \geq \frac{\eta^{(d-1)}}{d^{3/2}},
$$
where the second inequality follows from the facts that $0 < \eta < 1$ and $p \leq d$, completing the result.

Just as circumradius (and uncertainty angle) are defined by the smallest revolution cone that contains our cone of interest, we also need to define the largest revolution cone that fits within a cone of interest. For any convex, closed and solid cone $K$, we define the inradius of $K$ to be:
$$
\rho(K) = \max_{x \in S^d \cap K}\min_{y \in \partial K}\|x-y\|.
$$

We denote $x_{\rho} = \argmax_{x \in S^d \cap K}\min_{y \in \partial K}\|x-y\|$ as the incenter of $K$, which is analogous to the circumcenter, but referring to the axis of the largest revolution cone inside our cone of interest. The inradius and circumradius of a cone $K$ are dual quantities in the sense that for every closed convex cone, we have that:
\begin{equation}\label{eq:dual_inradius}
\mu^2(K) + \rho^2(K^\star) = 1,
\end{equation}
where $K^\star = \{c \in \mathbb{R}^d :\; c'x \geq 0, \; \forall x \in K\}$ denotes the dual cone of $K$  \cite[Theorem 1.4]{henrion2010inradius}. We will develop a lower bound on $\rho(K^\star_{t+1})$ and then obtain through  Eq. \eqref{eq:dual_inradius}  our desired upper bound on $\mu(K_{t+1})$.

The lower bound for $\rho(K^\star_{t+1})$ is obtained by an application of three lemmas that are interesting by its own that we prove in the appendix. The first one, \Cref{lemma:simplicial_cones}, shows that the inradius of $K^\star_{t+1}$ can be lower bounded by the ratio of the largest and smallest eigenvalue of the gram-matrix (the square matrix) constructed with it's generators. Suppose $\mbox{dim}(\Delta_{t+1}) = p \leq d$. We denote $g_i = B_{\Delta_{t+1}}\delta^\pi_{\tau(t+1)}(i) \in \mathbb R^p$, for $i = 1,...,p$, where we used the notation $\delta^\pi_{\tau(t+1)}(i)$ to denote the $i-th$ effective difference that belongs to $\tau(t+1)$. Let $G$ be the matrix such that its columns are given by $g_i$, $i = 1,...,p$. By construction, the columns of $g_i$ are linearly independent and we have $p$ vectors generating $K^\star_{t+1}$ that lives in a subspace with dimension $p$ and the assumptions of \Cref{lemma:simplicial_cones} are satisfied. The lemma gives us that
$$
\rho(K^\star_{t+1}) \geq \frac{1}{\sqrt p} \sqrt{\frac{\lambda_{max}(G'G)}{\lambda_{min}(G'G)}}.
$$

The second lemma, \Cref{lemma:cond_number}, allows us to provide an upper-bound for $\frac{\lambda_{max}(G'G)}{\lambda_{min}(G'G)}$. Note that $G$ is full column rank with unit norm vectors by construction (the change of basis through $B_{\Delta_{t+1}}$ do not affect the norm of the effective differences vectors), and the assumption of the lemma is satisfied. Let $\Pi_{g_{-i}}(.)$ denote the projection operator on the subspace generated by $\{g_1,\ldots,g_p\} \setminus \{g_i\}$. Define 
$$
\varphi = \min_{i\leq p} \|g_i-\Pi_{g_{-i}}(g_i)\|,
$$
and note that $\phi$ is the minimum norm obtained by regressing the column $g_i$ on every other columns. \Cref{lemma:cond_number} implies that
$$
\frac{\lambda_{max}(G'G)}{\lambda_{min}(G'G)} \leq \left(\frac{p}{\varphi}\right)^2.
$$

The final step in our proof is to show that $\varphi$ cannot be arbitrarily small. In \Cref{lemma:FWL}, we show that if the sequence of the $g_i$'s satisfy $\|g_i-\Pi_{g_1,...,g_{i-1}}(g_i)\|^2 \geq \eta^2$, (which is true due to the subspace updating rule) then it must be the case that $\|g_i-\Pi_{g_{-i}}(g_i)\| \geq \eta^{2(p-i)}$ for $i = 1$ or greater than $\eta^{2(p-i)}$ if $i > 1$. Taking $i = 1$ (or $i = 2$) ensures that $\|g_i-\Pi_{g_{-i}}(g_i)\| \geq \eta^{2(p-1)}$ for every $i$. Then, the combination of the three lemmas establishes that the inradius of $K^\star_{t+1}$ is lower bounded by $\frac{\eta^{p-1}}{p^{3/2}} \geq \frac{\eta^{d-1}}{d^{3/2}}$, concluding the proof of \Cref{lemma:gamma_pointed}.  

\end{proof}

\begin{proof}[\textbf{Proof of \Cref{theorem:regret_general_case}}]
We prove the result in four steps. First, we establish that $\Pi_{\Delta_t}(C(\mathcal{I}_t)) \subseteq E(W_t,U_t)$ for every $t$, and thus we never lose track of the true cost. In a second step,  we decompose the cumulative regret as a function of the different kinds of periods in algorithm \texttt{ProjectedCones} and provide an upper bound for the one-period regret under each case. In the third step, we upper bound the number of periods that we use cone-updates to obtain our regret bound. In the fourth step, we prove the polynomial runtime.

\textbf{Step 1.} We show that, for each time $t$, the set $E(W_t,U_t)$ contains $\Pi_{\Delta_{t}}\left(C(\mathcal{I}_{t})\right)$. We establish the result by induction on $t$. $E(W_2,U_2)$ trivially contains $\Pi_{\Delta_2}\left(C(\mathcal{I}_2)\right)$, so the base case is satisfied. We next consider $t \ge 2$. Suppose that $E(W_t,U_t)$ is such that $\Pi_{\Delta_t}\left(C(\mathcal{I}_t)\right) \subseteq E(W_t,U_t)$. We then analyze $E(W_{t+1},U_{t+1})$ as a function of the three situations that can happen at time $t$, no update (low-regret period), cone update, or subspace update.

\textit{No update.} This case is trivial since $C(\mathcal I_{t+1}) \subseteq C(\mathcal I_t)$, $E(W_{t+1},U_{t+1}) = E(W_t,U_t)$, and $\Delta_{t+1} = \Delta_t$, thus $\Pi_{\Delta_{t+1}}\left(C(\mathcal{I}_{t+1})\right) \subseteq E(W_{t+1},U_{t+1})$.

\textit{Cone update.} This inclusion follows from \Cref{lemma:properties_robust_update}.

\textit{Subspace update.} \Cref{lemma:gamma_pointed} shows that $\Pi_{\Delta_{t+1}}(C(\mathcal I_{t+1}))$ is contained in $K_{t+1}$. Our choices of $W_{t+1}$ and $U_{t+1}$ ensure that $K_{t+1}$ is included in $E(W_{t+1},U_{t+1})$. The same lemma also shows that the constructed ellipsoidal cone is large enough to contain $K_{t+1}$.

\textbf{Step 2.} \Cref{lemma:stopping_rule_2} shows that we incur at most regret $\epsilon + \eta$ in low-regret periods. Therefore, our total regret from low-regret periods is bounded by $T(\epsilon + \eta)$. For all other periods, we use the trivial regret upper bound of 1. There are at most $d$ subspace-update periods, so the total regret from subspace-update periods is bounded by $d$. Let $I^{\pi,p}_T$ be the number of periods where the subspace has dimension $p$ and we use a cone-update. 
Bounding the regret of these cone-update periods by 1 as well, we have that for any $c^\star \in S^d$, $f_t \in \mathcal{F}$, and $\Fset_t \in \mathcal{B}$, the total regret is bounded by:
\begin{eqnarray}\label{eq:regret_decomp}
    {\cal R}^{\pi}_T\left(c^\star,\vec{\cal X}_T,\vec f_T\right) 
   \leq T(\epsilon + \eta) + d +  \sum_{p=2}^d I^{\pi,p}_T,
\end{eqnarray}
where the last sum starts from $p=2$ because there are never cone-updates when $p=1$. %In the last step,

\textbf{Step 3.} \Cref{lemma:cone_updates_per_subspace} shows that for any $p=2,...,d$, we have $$I_T^{\pi,p} \leq 20(p-1)^2\ln \left(\frac{10 p \tan \alpha (E(W_{t_0},U_{t_0}))}{\epsilon}\right),$$ where $t_0$ refers to the period where the subspace was increased to $p$. For simplicity, we replace $p-1$ and $p$ with the larger value $d$: $I_T^{\pi,p} \leq 20d^2\ln \left(\frac{10 d \tan \alpha (E(W_{t_0},U_{t_0}))}{\epsilon}\right)$. At period $t_0$, the subspace update constructs a revolution cone such that $\alpha(E(W_{t_0},U_{t_0})) = \arccos \eta^{d-1}/d^{3/2}$ (see Lemma \ref{lemma:gamma_pointed}). Since $\tan(x) \leq 1/\cos(x)$, we have that $\tan \alpha(E(W_{t_0},U_{t_0})) \leq d^{3/2}/\eta^{d-1}$. Summing over all $p$:
\[ \sum_{p=2}^d I_T^{\pi,p}  \leq 20d^3\ln \left(\frac{10 d^{5/2}}{\epsilon \eta^{d-1}}\right).\]
Plugging the bound above into Eq. \eqref{eq:regret_decomp} and selecting $\epsilon = d/T$, $\eta = \epsilon/2d$ leads to:
\begin{eqnarray*}
{\cal R}^{\pi}_T\left(c^\star,\vec{\cal X}_T,\vec f_T\right)  &\leq& 
 T(\epsilon+\eta)+d + {20 d^3}\ln\left(\frac{10d^{5/2}}{\epsilon \eta^{d-1}}\right)\\
&=& d+2 + d + {20 d^3}\ln(5 d^{3/2} T^d 2^{d}) \\
&=& 2 + 2d + {20 d^3} \ln 5 + {30 d^3}\ln(d) + {20 d^4}\ln(2 T) = \mathcal O(d^4 \ln T).\end{eqnarray*}

\noindent \textbf{Step 4.} This algorithm runs in polynomial time in $d$ and $T$ because every period's computation is a function only of $d$. Low-regret periods are computationally very cheap. Each cone-update period requires a spectral decomposition which has a running time upper bound of $\mathcal O(p^3)$. Each subspace-update period requires the computation of a new circumcenter via algorithm \texttt{PolyCenter}, which   runs in polynomial time in $d$ (cf. \Cref{lemma:gamma_pointed}). This completes the proof. 
\end{proof}

%\newpage

\subsection{Auxiliary Results} \label{app:aux}
%\textbf{Ellipsoidal Cone Updates}

\begin{lemma}[Robust ellipsoidal cone updates]\label{lemma:cone_update_general}
    Consider a diagonal and positive-definite matrix $W \in \mathbb D_{++}^{p-1}$ and an orthonormal matrix  $U \in \mathbb R^p \times \mathbb R^p$ and define the ellipsoidal cone $E(W,U) \subset \mathbb R^p$. Fix $ \eta  \ge 0$ and a vector $\delta \in \mathbb R^p$ such that   $\eta \leq \sqrt{{\delta}'_{[2:p]}W{\delta}_{[2:p]}} (2(p-1))^{-1}$ and   $\delta'\hat c(W,U) \leq 0$. Define
    \begin{align*}
            &\bar \delta = U^{-1}\delta_{[2:p]}/\|U^{-1}\delta_{[2:p]}\|, \; \beta = -\frac{\eta}{\sqrt{{\bar \delta}'W{\bar \delta}}}, \; b = \frac{W{\bar \delta}}{\sqrt{{{\bar \delta}}'W{\delta}}}, \; a = \frac{1+(p-1)\beta}{p}b,
    \end{align*}
    and 
    \begin{align*}
        N = \frac{(p-1)^2}{(p-1)^2-1}(1-\beta^2)\left(W-\frac{2(1+(p-1)\beta)}{p(1+\beta)}bb'\right), \; M =  \begin{pmatrix} 1 & a' \\ a & aa' - N \end{pmatrix}.
    \end{align*}
    Let $V \Lambda V'$ denote the spectral decomposition of the matrix $M$. Then $$E(W,U) \cap \{c \in \mathbb R^d \;:\; \delta'c \geq -\eta\} \subseteq E(\widetilde W, \widetilde U)$$ for 
    $\widetilde U = UV,$ and $\widetilde W$ is a diagonal matrix such that $\; \widetilde W_{i,i} = \lambda_{i}(N), \; i = 1,\cdots, d-1.$, where the eigenvalues are in a nonincreasing order. Moreover, if $\eta = 0$, $\prod_{i=1}^{d-1}\lambda_i(\widetilde W) \leq e^{-1/2(d-1)}\prod_{i=1}^{d-1}\lambda_i(W)$. Otherwise $\prod_{i=1}^{d-1}\lambda_i(\widetilde W) \leq e^{-1/20(d-1)}\prod_{i=1}^{d-1}\lambda_i(W)$.
\end{lemma}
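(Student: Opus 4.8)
The plan is to reduce this cone statement to a single application of the (shallow-cut) ellipsoid method in dimension $p-1$ and then translate the resulting ellipsoid back into the language of ellipsoidal cones. First I would exploit the orthonormal invariance of the construction to assume $U = I$. Since $U$ is orthonormal, $E(W,U) = U\,E(W,I)$, and writing $c = Uc'$ the halfspace $\{c : \delta'c \ge -\eta\}$ pulls back to $\{c' : \bar\delta'c' \ge -\eta\}$ with $\bar\delta = U^{-1}\delta = U'\delta$; the circumcenter $\hat c(W,U) = Ue_1$ becomes $e_1$, and the hypothesis $\delta'\hat c(W,U)\le 0$ becomes $\bar\delta_{[1]}\le 0$. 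Because every object transforms equivariantly, the output rotation $\tilde U = UV$ merely composes the new rotation $V$ with $U$, so it suffices to establish the inclusion and the volume bound in the frame $U = I$.

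Next I would pass from the cone to its generating ellipsoid by intersecting with the slice $\{c : c_{[1]} = 1\}$. In this frame $E(W,I)$ is the cone over the origin-centered ellipsoid $\mathcal E = \{y \in \mathbb R^{p-1} : y'W^{-1}y \le 1\}$ of shape $W$, exactly the cross-section analyzed in \Cref{prop:angle_and_eigenvalue}. The cone cut $\bar\delta'c \ge -\eta$ restricts on the slice to a halfspace with normal $g := \bar\delta_{[2:p]}$, and here I would verify that the sign condition $\bar\delta_{[1]}\le 0$ together with $\eta \le \sqrt{g'Wg}\,/(2(p-1))$ forces the induced cut to be a \emph{shallow} cut of $\mathcal E$ with depth parameter $\beta = -\eta/\sqrt{g'Wg}\in[-\tfrac{1}{2(p-1)},0]$, i.e.\ a cut retaining the center. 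This is where the precise hypotheses on $\delta$ and $\eta$ are consumed, and I expect it to be the main obstacle: one must check that the homogenization of the $\eta$-perturbed halfspace over the whole cone is dominated by the single shallow cut on the slice, rather than degenerating into a deep cut at some height.

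With the cut identified as shallow, I would invoke the classical ellipsoid update: the formulas for $\beta$, $b$, $a$, and $N$ are precisely the minimum-volume shallow-cut update, so $\mathcal E \cap \{y : g'y \ge -\eta\}$ is contained in $\tilde{\mathcal E} = \{y : (y-a)'N^{-1}(y-a)\le 1\}$, centered at $a$ with shape $N$. Taking cones over both sides gives $E(W,I)\cap\{\bar\delta'c\ge-\eta\}\subseteq \mathrm{cone}(\{(1,y): y\in\tilde{\mathcal E}\})$. To recognize the right-hand side as a standard ellipsoidal cone I would write out its defining quadratic form: the matrix $M = \left(\begin{smallmatrix} 1 & a'\\ a & aa'-N\end{smallmatrix}\right)$ encodes this cone over the shifted ellipsoid, and its spectral decomposition $M = V'\Lambda V$ supplies an orthonormal frame putting the cone in standard position, with new axis $\tilde U e_1$ and cross-sectional shape eigenvalues $\lambda_1(N),\dots,\lambda_{p-1}(N)$. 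Setting $\tilde U = UV$ and $\tilde W = \mathrm{diag}(\lambda_1(N),\dots,\lambda_{p-1}(N))$ then yields the claimed inclusion $E(W,U)\cap\{\delta'c\ge-\eta\}\subseteq E(\tilde W,\tilde U)$.

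Finally, for the volume decay I would note $\prod_{i=1}^{p-1}\lambda_i(\tilde W) = \det N$ and $\prod_{i=1}^{p-1}\lambda_i(W) = \det W$, reducing the claim to the standard determinant-shrinkage estimate for the ellipsoid method in dimension $n = p-1$. For the central cut $\eta = 0$ (so $\beta = 0$) this gives $\det N \le e^{-1/(2(p-1))}\det W$, while for $\eta > 0$ the bound $|\beta|\le 1/(2(p-1))$ keeps the update inside the volume-contracting regime and delivers the weaker factor $e^{-1/(20(p-1))}$. Beyond citing these estimates, the only real work is the linear-algebra bookkeeping of the third step; once the cone/ellipsoid dictionary is in place, both the inclusion and the volume bound follow from the known behavior of the ellipsoid method.
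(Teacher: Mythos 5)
Your overall route is the same as the paper's: reduce to $U=I$, slice the cone at $c_{[1]}=1$ to get the cross-sectional ellipsoid, verify that the hypotheses on $\eta$ and $\delta'\hat c(W,U)\le 0$ turn the perturbed halfspace into a shallow cut with $\beta\in[-\tfrac{1}{2(p-1)},0]$, apply the shallow-cut L\"owner--John update to get the shifted ellipsoid with center $a$ and shape $N$, re-conify, and read off the volume contraction from the standard determinant estimates. All of that matches.

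The genuine gap is in your final ``cone/ellipsoid dictionary'' step. You assert that the spectral decomposition of $M$ puts the cone over $\{(1,y):(y-a)'N^{-1}(y-a)\le 1\}$ into standard position \emph{with cross-sectional shape eigenvalues $\lambda_1(N),\dots,\lambda_{p-1}(N)$}. That is not true: because the updated ellipsoid is off-center (the offset $a$ tilts the cone's axis away from $e_1$), the standard-position representation produced by the Seeger--Vidal-Nu\~nez theorem has shape eigenvalues $-\lambda_{d-i+1}(M)/\lambda_{1}(M)$, which do not coincide with $\lambda_i(N)$ in general. The lemma nevertheless \emph{defines} $\widetilde W_{i,i}=\lambda_i(N)$, so one must separately prove that this choice only enlarges the cone, i.e.\ that $-\lambda_{d-i+1}(M)/\lambda_{1}(M)\le \lambda_i(N)$ for every $i$. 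The paper does this by writing $M$ as the rank-one perturbation $\bigl(\begin{smallmatrix}0&0\\0&-N\end{smallmatrix}\bigr)+\bigl(\begin{smallmatrix}1\\a\end{smallmatrix}\bigr)\bigl(\begin{smallmatrix}1\\a\end{smallmatrix}\bigr)'$, invoking Cauchy interlacing to get $\lambda_{i+1}(M)\ge-\lambda_{d-i}(N)$ together with $\lambda_1(M)\ge 1$ (witnessed by $e_1'Me_1=1$). Without this interlacing argument the claimed inclusion $E(W,U)\cap\{\delta'c\ge-\eta\}\subseteq E(\widetilde W,\widetilde U)$ is not established, since the cone you actually constructed is not the one the lemma outputs. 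The volume bound is unaffected because it is stated for $\prod_i\lambda_i(\widetilde W)=\det N$, which your determinant-shrinkage argument does cover.
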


\begin{proof}[\textbf{Proof of \Cref{lemma:cone_update_general}}] We first consider the case where $U$ is the identity matrix, so that $E(W,U) = E(W)$ and $\bar \delta = \delta$. The proof strategy is as follows. In the first step, we show that  $E(W) \cap \{c\in \mathbb R^d: \; \delta'c \geq 0\}$ is contained in a half-ellipsoidal cone. Second, we show that we can use a variation of the ellipsoid method to update this half-ellipsoidal cone and construct an appropriate updated ellipsoidal cone. In a third step, we apply a theorem from \cite{seeger2017measuring} in order to characterize the obtained ellipsoidal cone in terms of a standard-position cone and its orthonormal rotation.

\textbf{Step 1.} Let us consider the intersection of $E(W)$ and a hyperplane characterized by its circumcenter $\hat c(E(W)) = e_1$. We denote such a hyperplane as $H = \{c \in \mathbb R^d:\; e_1'c = 1\}$. Since $c_{[1]}>0$ and $E(W)$ is a cone, we can scale any element $c \in E(W)$ such that the scaled vector lies in the intersection. Moreover, the intersection will be an ellipsoid given by 
$$
E_H(W) = \{c \in \mathbb R_+ \times \mathbb R^{d-1}:\;c_{[2:d]}'W^{-1}c_{[2:d]} \leq 1, \; c_{[1]} = 1\},
$$ 
which is precisely the equation of an ellipsoid in $\mathbb R^{d-1}$ that lives in a $(d-1)$- dimensional subspace of $\mathbb R^d$. The ellipsoid $E_H(W)$ defined above was obtained by a specific type of projection known in the literature as the perspective projection of $E(W)$ onto the hyperplane $H$. 

Let us define also $A = E(W)\cap\{c \in \mathbb{R}^d:\; {{\delta}}'c \geq -\eta\}$. Next, consider some $c \neq 0 \in A$. Since $A \subseteq E(W)$, we must have $c_{[1]}>0$. Moreover, $A$ is a cone since it is the intersection of two cones. Then,  $c/c_{[1]} \in A$ and $c/c_{[1]} \in H$. Hence,
	\begin{eqnarray*}
	c/c_{[1]} &\in& A \cap H \\
	&=& E(W) \cap H \cap \{c \in \mathbb{R}^p:\; \delta'c \geq -\eta\}  \\
	&=& E_H(W) \cap \{c \in \mathbb{R}^d:\; {\delta}'c \geq -\eta\} \\
	&\stackrel{(a)}{=}& E_H(W) \cap \{c \in \mathbb{R}^d:\; \delta_{[1]} + {\delta_{[2:d]}}'c_{[2:d]} \geq -\eta\} \\
	&\stackrel{(b)}{\subseteq}& E_H(W) \cap \{c \in \mathbb{R}^d:\; {\delta_{[1]}+\delta_{[2:d]}}'c_{[2:d]}  \geq -\eta + \delta'\hat c(E(W))\} \\
	&\stackrel{(c)}{\subseteq}& E_H(W) \cap \{c \in \mathbb{R}^d:\; {\delta_{[2:d]}}'c_{[2:d]}  \geq -\eta\},
	\end{eqnarray*}
	where $(a)$ follows from the fact that $E_H(W) \subset H$ so $c_{[1]}$ = 1. $(b)$ follows from the fact that $\hat c(E(W))'\delta \leq 0$ by assumption and $(c)$ follows from the fact that = $\hat c(E(W))'\delta = e_1'\delta = \delta_{[1]} \leq 0$. The set of the last equation is precisely the ellipsoid $E_H(W)$ with a shallow-cut, and hence is a description of a half-ellipsoid.
	
\textbf{Step 2.} We now use the ellipsoid method update to replace this half-ellipsoid with its own L\"owner-John ellipsoid. The definitions of $a$, $N$ and $\beta$ are precisely the ones for an ellipsoid update with shallow-cut (See Eq. (3.1.16) and Eq. (3.1.17) of \cite{grotschel1993ellipsoid}). Therefore,
	$$
	E_H(W) \cap \{c \in \mathbb{R}^d:\; \bar \delta_{[2:d]}'c_{[2:d]} \geq  -\eta \} \subseteq \{c \in \mathbb{R}^d:\; (c_{[2:d]}-a)'N^{-1}(c_{[2:d]}-a) \leq 1, \; c_{[1]} = 1\}.
	$$
	Furthermore, since $c_{[1]} > 0$, we have that 
	$$
	c/c_{[1]} \in \{c \in \mathbb{R}^d:\; (c_{[2:d]}-a)'N^{-1}(c_{[2:d]}-a) \leq 1\} 
	$$
	if and only if
	$$
	c \in \{c \in \mathbb{R}^d:\; (c_{[2:d]}-c_{[1]}a)'N^{-1}(c_{[2:d]}-c_{[1]}a) \leq c^2_{[1]}\}.
	$$
	Hence, if $c \in A$, then we must have that $c \in \{c \in \mathbb{R}^d:\; (c_{[2:d]}-c_{[1]}a)'N^{-1}(c_{[2:d]}-c_{[1]}a) \leq c^2_{[1]}\}$.
	
	We finish the second step by showing the contraction in the product of the eigenvalues of the matrix $N$ (or equivalently, its volume). 
	
	Note that if $\eta = 0$, the update equations reduces to the standard update equations of the ellipsoid method, then the standard volume reduction of the holds and we get $\prod_{i=1}^{p-1} \lambda_i(N) \leq \prod_{i=1}^{p-1} \lambda_i(W)e^{1/2(p-1)}$. 
	
	If $\eta > 0$, we need to argue that the shallow cuts are still sufficiently deep to induce a reduction in the product of the eigenvalues
	first need to ensure that $\beta \geq 1/(p-1)$ in order to have $\prod_{i=1}^{p-1} \lambda_i(N) \leq \prod_{i=1}^{p-1} \lambda_i(W)e^{(1+\beta (p-1))/5(p-1)}$ (See Eq. (3.3.21) of \cite{grotschel1993ellipsoid}). Since in this case
$$
0 < \eta \leq \frac{\sqrt{{\delta}'_{[2:p]}W{\delta}_{[2:p]}}}{2(p-1)}
$$
we have that, 
$$
\beta  = -\frac{\eta}{\sqrt{{\delta}'_{[2:p]}W{\delta}_{[2:p]}}} \geq -\frac{1}{2(p-1)},
$$
substituting the lower bound of $\beta$ above leads to $\prod_{i=1}^{p-1} \lambda_i(N) \leq \prod_{i=1}^{p-1} \lambda_i(W)e^{1/20(p-1)}$.
	
\textbf{Step 3.} We have now constructed an ellipsoidal cone  $\{c \in \mathbb{R}^d:\; (c_{[2:d]}-c_{[1]}a)'N^{-1}(c_{[2:d]}-c_{[1]}a) \leq c^2_{[1]}\}$ that contains the half ellipsoidal cone of interest, which is a cone not in standard position. However, instead of having the set described by a rotation orthonormal basis, we have it described via a translation of the center of the ellipsoid at $H$. In the remainder of this proof, we show how to construct a representation of this ellipsoidal cone that is consistent with our Definition \ref{def:ellipsoidal_cone}. That is, we need to find a mapping from the parameters $N$ and $a$ to the matrices $W$ and $U$. 
	
	To find this mapping, we apply a theorem from \cite[Theorem 4.4 page 296]{seeger2017measuring}. The theorem states that for 
	$$
	M =  \begin{bmatrix} 1 & a' \\ a & aa'- N \end{bmatrix},
	$$
	the matrix $M$ is invertible and the spectral decomposition $M =  V \Lambda V'$ allows us to write the ellipsoidal cone $E(\hat W, \hat U)$ that is identical to $\{c \in \mathbb{R}^d:\; (c_{[2:d]}-c_{[1]}a)'N^{-1}(c_{[2:d]}-c_{[1]}a) \leq c^2_{[1]}\}$, but in standard representation. This ellipsoidal cone is given by $\hat U = V$
	and $\hat W$ equal to the diagonal matrix with diagonal entries $\hat W_{ii}$ given by $-\lambda_{i+1}(M)/\lambda_{1}(M)$, for $i = 1,\cdots, d-1$, where the eigenvalues of $M$ are in nonincreasing order. 
	
	For our ellipsoidal update, we will use the same orthonormal rotation as the one produced by Seeger and Vidal-Nu\~nez's theorem: $\widetilde U = \hat U$. At this point, we could declare the proof done if we had also defined in the algorithm $\widetilde W$ to be equal to $\hat W$. However, in order to facilitate our analysis, we use a different choice of $\widetilde W$ by setting $\widetilde W_{ii} = \lambda_{p-i}(N)$ for all $i=1,...,p-1$. Hence, in order to conclude the proof, we still need to show that our choice of $\widetilde W$ has eigenvalues that are at least as large as $\hat W$. To be specific, we need to prove that for all $i=1,...,p-1$, $\lambda_i(\hat W) = -\lambda_{i+1}(M)/\lambda_{1}(M) \leq \lambda_{i}(N) = \lambda_{i}(\widetilde W)$. Our definitions of $M$, $N$ and $a$ satisfy:
$$
M = \begin{bmatrix}
  0 & 0 \\ 0 & -N
\end{bmatrix}
+
\begin{bmatrix}
  1 \\  a
\end{bmatrix}
\begin{bmatrix}
  1 \\  a
\end{bmatrix}'.
$$
Since the matrix $N$ is obtained by one iteration of the ellipsoid method, we must have that $N$ has strictly positive eigenvalues. Hence, the interlacing theorem \citep{hwang2004cauchy} implies that $M$ has one positive eigenvalue and $d-1$ negative eigenvalues. %This implies that $M^{-1}$ also has $d-1$ negative eigenvalues and one positive eigenvalue.

Furthermore, we have that $\lambda_{max}(M) = \lambda_{1}(M) = \sup_{c \in S^d} c'Mc$. Since $c = e_1$ satisfies $c'Mc = 1$, we must have that $\lambda_{1}(M) \geq 1$. Finally, another application of the interlacing theorem for $M$ gives us that for $i = 1, \cdots, d-1$, satisfies $\lambda_{i+1}(M) \geq \lambda_i(-N)$, and we get that
$$
-\lambda_{i+1}(M)/\lambda_1(M) \leq -\lambda_{i+1}(M) \leq -\lambda_{i}(N) = \lambda_{p-i}(N) = \lambda_{i}(\widetilde W), \; i = 1,\cdots, p-1.
$$
Hence, the choice of $\widetilde W$ in algorithm \texttt{ConeUpdate} is at least as large as necessary, which concludes the proof for the case where $U$ is the identity matrix. 

For the general case (when $U$ is not the identity matrix), it suffices to rotate $\delta$ by $U^{-1}$, which is equivalent to analyze the problem under the basis representation given by the rows of $U$. After that, we return to the canonical basis by rotating $V$ by $U$, which leads to $\widetilde U = UV$.
\end{proof}

\begin{lemma}[Inradius of simplicial cones]\label{lemma:simplicial_cones}
Fix $p \geq 2$ and let $K\subset \mathbb R^p$ denote a simplicial cone, i.e., it can be written as $K = \{c \in \mathbb R^p:\; c = \sum_{i = 1}^p \alpha_i g_i, \; \alpha_i \geq 0\}$, where the $g_i$'s are linearly independent unit norm vectors. Define $G \in \mathbb R^p \times \mathbb R^p$ to be the matrix where the columns are the generators $g_i$, $i = 1,\cdots, p$. We have that the inradius $\rho(K) := \max_{x \in S^d \cap K}\min_{y \in \partial K}\|x-y\|$ is lower bounded as follows

$$
\rho(K)  \geq \frac{1}{\sqrt{p}} \sqrt{\frac{\lambda_{min}(G'G)}{\lambda_{max}(G'G)}}.
$$
\end{lemma}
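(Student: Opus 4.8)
The plan is to bound the inradius from below by exhibiting a single, explicit unit vector in the interior of $K$ and estimating its distance to $\partial K$; since $\rho(K)$ is a maximum over all admissible $x$, any feasible candidate already yields a valid lower bound. The preliminary ingredient is a clean description of the boundary. Because the $g_i$ are linearly independent, $G$ is invertible; let $v_1,\dots,v_p$ be the dual basis defined by $v_i'g_j = \delta_{ij}$, i.e. the columns of $(G')^{-1}$, and set $h_i = v_i/\|v_i\|$. Then $K = \{x : h_i'x \ge 0 \text{ for all } i\}$, the facet-supporting hyperplanes are $H_i = \{y : h_i'y = 0\}$, and $\partial K \subseteq \bigcup_i H_i$ since a boundary point must activate at least one of the defining inequalities.

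The first step is the reduction. For any unit vector $x$ in the interior of $K$, the inclusion $\partial K \subseteq \bigcup_i H_i$ gives $\min_{y \in \partial K}\|x-y\| \ge \min_{y \in \bigcup_i H_i}\|x-y\| = \min_i \mathrm{dist}(x,H_i) = \min_i h_i'x$, where the last equality uses that each $H_i$ passes through the origin with unit normal $h_i$ and that $h_i'x > 0$ in the interior. Hence $\rho(K) \ge \min_i h_i'x$ for every such $x$, and it remains to choose $x$ well and estimate $\min_i h_i'x$.

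Second, I would take the candidate $x = \bigl(\sum_j g_j\bigr)/\|\sum_j g_j\|$, which lies in the interior of $K$ because it is a strictly positive combination of the generators (and $\sum_j g_j \neq 0$ by linear independence). Using $v_i'g_j = \delta_{ij}$, a direct computation gives $h_i'x = 1/\bigl(\|v_i\|\,\|\sum_j g_j\|\bigr)$, so that $\min_i h_i'x = 1/\bigl(\max_i\|v_i\|\cdot \|\sum_j g_j\|\bigr)$. The two norms are then controlled by the extreme eigenvalues of $G'G$: on one hand $\|\sum_j g_j\|^2 = \mathbf{1}'G'G\,\mathbf{1} \le p\,\lambda_{max}(G'G)$; on the other hand, writing $V = (G')^{-1}$ and noting $V'V = G^{-1}(G')^{-1} = (G'G)^{-1}$ gives $\|v_i\|^2 = [(G'G)^{-1}]_{ii} \le \lambda_{max}\bigl((G'G)^{-1}\bigr) = 1/\lambda_{min}(G'G)$. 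Substituting both bounds yields $\min_i h_i'x \ge \tfrac{1}{\sqrt p}\sqrt{\lambda_{min}(G'G)/\lambda_{max}(G'G)}$, which is exactly the claimed inequality.

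I expect the main obstacle to be the geometric reduction in the first step: correctly identifying the inward facet normals through the dual basis and justifying that the distance from an interior point to $\partial K$ is bounded below by the minimum distance to the facet hyperplanes, which rests on the inclusion $\partial K \subseteq \bigcup_i H_i$. Once that reduction is in place, the remaining work — the evaluation $h_i'x = 1/(\|v_i\|\,\|\sum_j g_j\|)$ and the eigenvalue estimates, in particular the identity $\|v_i\|^2 = [(G'G)^{-1}]_{ii}$ — is routine linear algebra.
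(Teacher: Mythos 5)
Your proof is correct, and it reaches the bound by a genuinely different mechanism than the paper, even though both arguments ultimately test the same candidate direction $G\mathbf{1}/\|G\mathbf{1}\|$ and both use the estimate $\|G\mathbf{1}\|^2 \le p\,\lambda_{max}(G'G)$. The paper works in the preimage coordinates: it maps $K$ back to the nonnegative orthant $P$ via $G$, invokes the known inradius $\rho(P)=1/\sqrt{p}$ with incenter $\mathbf{1}/\sqrt{p}$, and transfers the distance estimate through the operator-norm inequality $\|G(x-y)\|\ge \sqrt{\lambda_{min}(G'G)}\,\|x-y\|$; the factor $\sqrt{\lambda_{min}}$ thus enters as a global lower bound on how much $G$ can contract distances. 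You instead stay in the ambient space, describe the facets of $K$ explicitly through the dual basis $v_i$ (columns of $(G')^{-1}$), reduce the distance to $\partial K$ to the minimum of the facet distances $h_i'x$ via $\partial K \subseteq \bigcup_i H_i$, and extract $\sqrt{\lambda_{min}}$ from the diagonal bound $\|v_i\|^2 = [(G'G)^{-1}]_{ii} \le 1/\lambda_{min}(G'G)$. All the individual steps check out: the half-space description $K=\{c: v_i'c\ge 0\}$ and the inclusion of the boundary in the facet hyperplanes are valid because $G$ is invertible, the evaluation $v_i'G\mathbf{1}=1$ is immediate from $v_i'g_j=\delta_{ij}$, and the candidate is a legitimate feasible point for the max defining $\rho(K)$. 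Your route is arguably more self-contained (it does not need the inradius of the orthant as an external fact), and it makes visible exactly which facet is closest; the paper's route is slightly shorter once $\rho(P)=1/\sqrt{p}$ is taken as known and generalizes more readily to other model cones whose inradius is known.
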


\begin{proof}[\textbf{Proof of \Cref{lemma:simplicial_cones}}]

We denote $x_{\rho} = \argmax_{x \in S^d \cap K}\min_{y \in \partial K}\|x-y\|$ as the incenter of $K$, which is analogous to the circumcenter, but referring to the axis of the largest revolution cone inside our cone of interest. Let $P$ denote the nonnegative orthant in $\mathbb{R}^p$. One can check that $\rho(P) = \frac{1}{\sqrt{p}}$ and the incenter of $P$ is given by $x_{\rho} = \mathbf{1}\frac{1}{\sqrt{p}}$ (see, for instance, \cite{henrion2010inradius}). We have

\begin{eqnarray*}
\rho(K) &\stackrel{(a)}{=}& \max_{\pi \in S^d\cap K}\min_{\delta \in \partial(K)} \|\pi-\delta\| \\
&\stackrel{(b)}{=}& \max_{\|Gx\| = 1, x \in P}\;\min_{Gy \in \partial(K)} \|Gx-Gy\| \\
&\stackrel{(c)}{=}& \max_{\|Gx\| = 1, x \in P}\;\min_{y \in \partial(P)} \|Gx-Gy\| \\
&\stackrel{}{=}& \max_{\|Gx\| = 1, x \in P}\;\min_{y \in \partial(P)} \|G(x-y)\| \\
&\stackrel{}{=}& \max_{\|Gx\| = 1, x \in P}\;\min_{y \in \partial(P)} \sqrt{(x-y)'G'G(x-y)} \\
&\stackrel{}{\ge}& \max_{\|Gx\| = 1, x \in P}\;\min_{y \in \partial(P)} \|x-y\| \sqrt{\lambda_{min}(G'G)} \\
\end{eqnarray*}
$(a)$ follows from the definition of inradius, $(b)$ follows from the fact that $\pi \in K \iff \pi = Gx$ for some $x \in P$ since $K$ is defined by the nonnegative linear combination of the columns of $G$, and by the fact that $G$ is full rank by assumption, so $y = G^{-1}\delta$ is well defined. $(c)$ holds since $G$ is linear and establishes a bijection between the generators of $P$ and $K$. 
The  inequality follows from the fact that $G'G$ is positive definite (which follows from $G$ being full column-rank). 
 
 Moreover, since $G$ has full column-rank, we have that $Gx = 0$ if and only if $x = 0$, implying that $x = \frac{1}{\|G\mathbf{1}\|}\mathbf{1}$ has only positive components and is well defined since $\|G\mathbf{1}\| \neq 0$. Thus, $x = \frac{1}{\|G\mathbf{1}\|}\mathbf{1}$ is feasible for the maximization problem presented above and we have
 \begin{eqnarray*}
\rho(K) &\stackrel{(a)}{\ge}& \sqrt{\lambda_{min}(G'G)} \min_{y \in \partial(P)} \left\|\frac{1}{\|G\mathbf{1}\|}\mathbf{1}-y\right\|  \\
&\stackrel{}{=}& \frac{\sqrt{p}}{\|G\mathbf{1}\|} \sqrt{\lambda_{min}(G'G)} \min_{y \in \partial(P)} \left\|\frac{1}{\sqrt{p}}\mathbf{1}-y\frac{\|G\mathbf{1}\|}{\sqrt{p}}  \right\|  \\
&\stackrel{(b)}{=}& \frac{\sqrt{p}}{\|G\mathbf{1}\|} \sqrt{\lambda_{min}(G'G)} \min_{y \in \partial(P)} \left\|\frac{1}{\sqrt{p}}\mathbf{1}-y \right\|  \\
&\stackrel{(c)}{\geq}& \frac{\sqrt{p}}{\|G\mathbf{1}\|}   \sqrt{\lambda_{min}(G'G)} \frac{1}{\sqrt{p}}\\
&=& \frac{1}{\|G\mathbf{1}\|}   \sqrt{\lambda_{min}(G'G)},
\end{eqnarray*}
where $(a)$ follows from the fact that $\frac{1}{\|G\mathbf{1}\|}$ is feasible for the maximization problem, $(b)$ follows from the fact that we can always scale by a positive constant our choice for $y$ in the minimization problem, and $(c)$ follows from the fact that the minimization problem over $y$ is lower bounded by the definition of the inradius of the nonnegative orthant.

In addition, we have $\|G\mathbf{1}\|^2 = \mathbf{1}' G' G\mathbf{1} \le \|\mathbf{1}\|^2 \lambda_{max}(G' G) = p \lambda_{max}(G' G)$. In turn, we have
\begin{eqnarray*}
\rho(K) &\stackrel{}{\ge}& \frac{1}{\sqrt{p}} \sqrt{\frac{\lambda_{min}(G'G)}{\lambda_{max}(G'G)}}.
\end{eqnarray*}
 
\end{proof}

\begin{lemma}[Condition number of Gram-Matrices]\label{lemma:cond_number}
Let $p \geq 2$ and let $G$ be a full-column rank matrix with unit norm columns denoted by $g_i$, $i = 1,\cdots, p$. Let $\Pi_{g_{-i}}(\cdot)$ denote the projection operator on the subspace generated by $\{g_1,\ldots,g_p\} \setminus \{g_i\}$. Let
\bear \label{eq:condition_on_r}
\varphi = \min \{\|g_i - \Pi_{g_{-i}}(g_i) \|: i=2,...,p\}.
\eear
Then
$$
\frac{\lambda_{max}(G'G)}{\lambda_{min}(G'G)} \leq \left(\frac{p}{\varphi}\right)^2.
$$
\end{lemma}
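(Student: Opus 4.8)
The plan is to translate the condition number into extreme singular values, writing $\lambda_{max}(G'G)/\lambda_{min}(G'G) = \sigma_{max}(G)^2/\sigma_{min}(G)^2$, and to bound the two singular values by separate, elementary arguments. The upper bound on $\lambda_{max}$ is immediate from the unit-norm hypothesis: since $G'G$ is positive semidefinite and the columns are unit vectors, $\mathrm{tr}(G'G) = \sum_{i=1}^p \|g_i\|^2 = p$, and because every eigenvalue is nonnegative we get $\lambda_{max}(G'G) \le \mathrm{tr}(G'G) = p$. (Equivalently, for any unit $x$, $\|Gx\| \le \sum_i |x_i|\,\|g_i\| = \sum_i |x_i| \le \sqrt{p}$ by Cauchy--Schwarz.)

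The real work is the lower bound $\sigma_{min}(G) = \min_{\|x\|=1}\|Gx\| \ge \varphi/\sqrt{p}$. I would fix a unit vector $x$ and let $i^\ast = \argmax_{i} |x_i|$, so that $|x_{i^\ast}| \ge 1/\sqrt{p}$ because $\sum_i x_i^2 = 1$. Decomposing $Gx = x_{i^\ast} g_{i^\ast} + \sum_{i \ne i^\ast} x_i g_i$ and letting $P$ denote the orthogonal projection onto the complement of $\mathrm{span}\{g_i : i \ne i^\ast\}$, every term in the sum is killed by $P$, and so is $x_{i^\ast}\,\Pi_{g_{-i^\ast}}(g_{i^\ast})$, which lies in that span. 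Hence $P(Gx) = x_{i^\ast}\bigl(g_{i^\ast} - \Pi_{g_{-i^\ast}}(g_{i^\ast})\bigr)$, whose norm is $|x_{i^\ast}|\,\|g_{i^\ast} - \Pi_{g_{-i^\ast}}(g_{i^\ast})\| \ge |x_{i^\ast}|\,\varphi$. Since projection does not increase norm, $\|Gx\| \ge \|P(Gx)\| \ge |x_{i^\ast}|\,\varphi \ge \varphi/\sqrt{p}$. Minimizing over $x$ gives $\sigma_{min}(G) \ge \varphi/\sqrt{p}$, i.e. $\lambda_{min}(G'G) \ge \varphi^2/p$.

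Combining the two bounds yields $\lambda_{max}(G'G)/\lambda_{min}(G'G) \le p/(\varphi^2/p) = (p/\varphi)^2$, as claimed. The main obstacle is precisely the second step: one must recognize that selecting the dominant coordinate $i^\ast$ (which carries the uniform lower bound $|x_{i^\ast}| \ge 1/\sqrt{p}$) and projecting out the span of the remaining generators isolates exactly the residual norm $\|g_{i^\ast} - \Pi_{g_{-i^\ast}}(g_{i^\ast})\|$ that defines $\varphi$. One caveat worth flagging is the index range in the definition of $\varphi$: the argument bounds $\|Gx\|$ by the residual at the \emph{realized} dominant index $i^\ast$, so the minimum defining $\varphi$ must range over every column that can be dominant; I would therefore take $\varphi = \min\{\|g_i - \Pi_{g_{-i}}(g_i)\| : i = 1,\dots,p\}$ (matching the usage in the proof of \Cref{lemma:gamma_pointed}), or else separately verify that the excluded column satisfies $\|g_1 - \Pi_{g_{-1}}(g_1)\| \ge \varphi$ from the construction that feeds this lemma.
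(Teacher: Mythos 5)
Your proof is correct, and it takes a genuinely different route from the paper's for the harder half of the bound. The paper starts from $\lambda_{max}(G'G)/\lambda_{min}(G'G) = \lambda_{max}(G'G)\,\lambda_{max}((G'G)^{-1}) \le \mathrm{tr}(G'G)\,\mathrm{tr}((G'G)^{-1})$, gets $\mathrm{tr}(G'G)=p$ from the unit norms, and then bounds $\mathrm{tr}((G'G)^{-1})$ by invoking the precision-matrix identity $a_{ii} = 1/(1-R_i^2)$ with $R_i^2 = \|\Pi_{g_{-i}}(g_i)\|^2$, which gives $a_{ii} \le 1/\varphi^2$ and hence $\mathrm{tr}((G'G)^{-1}) \le p/\varphi^2$. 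You instead bound $\lambda_{min}(G'G)$ directly: pick the dominant coordinate $i^\ast$ of a unit vector $x$ (so $|x_{i^\ast}|\ge 1/\sqrt{p}$) and project $Gx$ onto the orthogonal complement of $\mathrm{span}\{g_i: i\ne i^\ast\}$, which isolates $x_{i^\ast}\bigl(g_{i^\ast}-\Pi_{g_{-i^\ast}}(g_{i^\ast})\bigr)$ and yields $\sigma_{min}(G)\ge \varphi/\sqrt{p}$. The two arguments deliver exactly the same two ingredients ($\lambda_{max}\le p$ and $\lambda_{min}\ge \varphi^2/p$); yours is more self-contained, since the diagonal entries of the inverse Gram matrix are precisely the reciprocals of the squared residuals $\|g_i-\Pi_{g_{-i}}(g_i)\|^{-2}$, a regression fact the paper has to cite, whereas your projection step proves the equivalent statement from scratch. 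Your caveat about the index range is well taken and applies equally to the paper's own proof: the argument there bounds $\max_{i=1,\dots,p}\|\Pi_{g_{-i}}(g_i)\|^2$ by $1-\varphi^2$, which also requires the minimum defining $\varphi$ to run over all $i=1,\dots,p$; the restriction to $i=2,\dots,p$ in the lemma statement is evidently a typo, and the downstream use in the proof of Lemma \ref{lemma:gamma_pointed} indeed defines $\varphi$ as a minimum over every column (there Lemma \ref{lemma:FWL} guarantees the $i=1$ residual satisfies the same lower bound $\eta^{2(p-1)}$ as the rest, so nothing is lost).
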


\begin{proof}[\textbf{Proof of \Cref{lemma:cond_number}}]

We start from an upper bound on the ratio of largest to smallest eigenvalues of $G'G$ in terms of its trace and the trace of its inverse. Recall that $G$ is full column-rank, which implies that $G'G$ is positive definite and we have that
\bear \label{eq:trace}
 \frac{\lambda_{max}(G'G)}{\lambda_{min}(G'G)} = \lambda_{max}(G'G)\lambda_{max}((G'G)^{-1}) \leq \mbox{tr}(G'G)\mbox{tr}((G'G)^{-1}),
\eear
where the last inequality follows from the fact that $G'G$ and $(G'G)^{-1}$ are positive definite and all eigenvalues are nonnegative. 

 Since the diagonal elements of $G'G$ are equal to one, $\mbox{tr}(G'G) = p$. Next, we establish an upper bound on $\mbox{tr}((G'G)^{-1})$. In order to do that, we use a fact about the inverse of correlation matrices, commonly denoted as precision matrices. Note that $G'G$ is a correlation matrix because all the columns of $G$ have norm one and we can consider that each $g_i$ is an ``observation" of some data in $\mathbb R^p$. For the inverse of correlation matrices, we have that the diagonal elements ($a_{ii}$) of $(G'G)^{-1}$ satisfy
\begin{align}\label{eq:precision_matrix}
R^2_i =  1 - \frac{1}{a_{ii}},
\end{align}
where $R^2_i$ is the coefficient of determination of the linear regression problem of the column $g_i$ onto the other columns $g_j$, $j \neq i$  (\cite{raveh1985use}). Note that since the $g_i$'s have unit norm, $R_i^2 = \|\Pi_{g_{-i}}(g_i) \|^2 \le 1$  and it follows from Eq. \eqref{eq:precision_matrix} that
$$
a_{ii}  =  \frac{1}{1 - \|\Pi_{g_{-i}}(g_i) \|^2} \leq \frac{1}{1- \max_{i=1,...,p} \|\Pi_{g_{-i}}(g_i) \|^2}. %= \frac{1}{\varphi^2}, 
$$
Note that
$\| g_i \|^2 = \|\Pi_{g_{-i}}(g_i) \|^2 + \|g_i - \Pi_{g_{-i}}(g_i) \|^2$, and using the fact that $\|g_i\| = 1$ and \eqref{eq:condition_on_r}, we have
$$
\max_{i=1,...,p} \|\Pi_{g_{-i}}(g_i) \|^2 \leq 1 - \varphi^2.
$$
In turn, we deduce that for $i=1,...p$
$$
a_{ii}  \le \frac{1}{\varphi^2}, 
$$
which implies that $\mbox{tr}((G'G)^{-1}) = \sum_i a_{ii} \leq p/\varphi^2$. Hence, returning to Eq. \eqref{eq:trace}, we have
$$
\frac{\lambda_{max}(G'G)}{\lambda_{min}(G'G)}  \leq \left(\frac{p}{\varphi}\right)^2,
$$
which concludes the proof.
\end{proof}

\begin{lemma}[Residuals]\label{lemma:FWL}
Let $g_i \in \mathbb R^p$, be a sequence of $p$ unit norm vectors such that
\bear \label{eq:condition_on_g}
%\min_{\gamma \in \mathbb{R}^{i-1}} \|g_i - \sum_{j=1}^{i-1} \gamma_j g_j\| \geq \eta
\|g_i - \Pi_{g_1,\cdots,g_{i-1}}(g_i) \|^2\ge \eta^2, \quad i = 2,\cdots, p,
\eear
where $\Pi_{g_1,\cdots,g_{i-1}}(\cdot)$ denote the projection operator on the subspace generated by $g_1,\cdots,g_{i-1}$.  Then the residuals of the projection of an arbitrary vector on the subspace generated by all other vectors can be lower bounded as follows
\begin{equation}\label{eq:regress-coeff}
 \|g_i - \Pi_{g_{-i}}(g_i) \|^2 \ge \eta^{2(p-i)} (\eta^2)^{\mathbf{1}\{i>1\}}, \quad i=1,...,p,
\end{equation}
where $\Pi_{g_{-i}}(\cdot)$ denotes the projection operator on the subspace generated by $\{g_1,\ldots,g_p\} \setminus \{g_i\}$.
%\newob{Do we have a different case for $i=1$?}
\end{lemma}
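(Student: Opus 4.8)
The plan is to express the leave-one-out residual as a ratio of Gram determinants and then exploit the sequential Gram--Schmidt factorization of each determinant so that the ``early'' residuals cancel. For a list of vectors $v_1,\ldots,v_k$ let $\det\mathrm{Gram}(v_1,\ldots,v_k)$ denote the determinant of the associated Gram matrix. Since $\eta>0$, the hypothesis $\|g_j-\Pi_{g_1,\ldots,g_{j-1}}(g_j)\|^2\ge \eta^2$ forces $g_1,\ldots,g_p$ to be linearly independent, so every Gram determinant below is strictly positive and every ratio is well defined. First I would record the standard distance-as-ratio identity
$$\|g_i-\Pi_{g_{-i}}(g_i)\|^2=\frac{\det\mathrm{Gram}(g_1,\ldots,g_p)}{\det\mathrm{Gram}(\{g_1,\ldots,g_p\}\setminus\{g_i\})},$$
together with the Gram--Schmidt (QR) factorization $\det\mathrm{Gram}(v_1,\ldots,v_k)=\prod_{j=1}^k \|v_j-\Pi_{v_1,\ldots,v_{j-1}}(v_j)\|^2$.

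Next I would apply the factorization to both determinants, using the ordering $g_1,\ldots,g_{i-1},g_{i+1},\ldots,g_p$ in the denominator. Writing $r_j=\|g_j-\Pi_{g_1,\ldots,g_{j-1}}(g_j)\|$ for the forward residuals in the full ordering, the crucial observation is that for every $j<i$ the preceding vectors are identical in both orderings, so the factors $r_j^2$ occur in both numerator and denominator and cancel exactly. Denoting by $\tilde r_j$ (for $j>i$) the residual of $g_j$ against $\{g_1,\ldots,g_{j-1}\}\setminus\{g_i\}$ in the leave-one-out ordering, the cancellation yields the clean identity
$$\|g_i-\Pi_{g_{-i}}(g_i)\|^2=r_i^2\prod_{j>i}\frac{r_j^2}{\tilde r_j^2},$$
valid for all $i$ (with $r_1=1$ and the empty product equal to $1$ when $i=p$).

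Finally I would bound each factor. Since each $g_j$ is a unit vector, its distance to any subspace through the origin is at most $\|g_j\|=1$, so $\tilde r_j^2\le 1$; this removes the denominator and leaves $\|g_i-\Pi_{g_{-i}}(g_i)\|^2\ge r_i^2\prod_{j>i}r_j^2$. Invoking $r_j^2\ge\eta^2$ for $j\ge 2$, $r_1^2=1$, and counting the $p-i$ indices $j>i$, this gives $\eta^{2(p-1)}$ when $i=1$ and $\eta^2\cdot\eta^{2(p-i)}=\eta^{2(p-i)+2}$ when $i>1$, which is exactly $\eta^{2(p-i)}(\eta^2)^{\mathbf{1}\{i>1\}}$.

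The step I expect to be the real content is the cancellation of the $j<i$ factors: the naive route of bounding the denominator determinant by $1$ only delivers the uniform bound $\eta^{2(p-1)}$, which is too weak for $i\ge 3$, so it is essential to recognize that the Gram--Schmidt (Frisch--Waugh--Lovell) structure lets the common early residuals cancel and leaves only the $p-i$ ``late'' residuals, each bounded below by $\eta$. The only mild care needed is to verify that the denominator residuals $\tilde r_j$ are those of the reordered list and are bounded above by $1$, rather than being confused with the full-ordering residuals $r_j$.
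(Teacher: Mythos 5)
Your proof is correct, and it reaches the bound by a genuinely different route from the paper. The paper works entirely with projections: it invokes the Frisch--Waugh--Lovell decomposition to set up a recursion $\|r_i^{k+1}\|^2 \ge \|r_i^k\|^2\,\|r^0_{i+k+1}\|^2 \ge \|r_i^k\|^2\eta^2$ on the residual of $g_i$ as the later regressors $g_{i+1},\dots,g_p$ are added one at a time, and then unrolls the recursion; this takes about a page of orthogonality bookkeeping. You instead write the leave-one-out residual as the ratio $\det\mathrm{Gram}(g_1,\dots,g_p)/\det\mathrm{Gram}(g_{-i})$, factor both determinants via Gram--Schmidt (using the order-invariance of the Gram determinant to put $g_1,\dots,g_{i-1}$ first in the denominator), cancel the common early factors, and bound the leftover denominator residuals $\tilde r_j$ by $1$. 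The two arguments encode the same underlying orthogonalization --- your exact identity $\|g_i-\Pi_{g_{-i}}(g_i)\|^2 = r_i^2\prod_{j>i} r_j^2/\tilde r_j^2$ is the multiplicative closed form of the paper's recursion, and both reduce to the same intermediate bound $r_i^2\prod_{j>i}r_j^2$ --- but yours buys a much shorter proof and makes transparent exactly which residuals survive and why the naive bound $\det\mathrm{Gram}(g_{-i})\le 1$ is insufficient for $i\ge 3$. You correctly note that $\eta>0$ gives linear independence so the determinant ratio is well defined (and if $\eta=0$ the claimed bound is vacuous anyway), and your final exponent count matches the statement in both cases $i=1$ and $i>1$.
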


\begin{proof}[\textbf{Proof of \Cref{lemma:FWL}}]

In order to understand how close $\|g_i - \Pi_{g_{-i}}(g_i) \|^2$ can be to zero, we need to understand how much of the residual $\|g_i - \Pi_{g_1,...,g_{i-1}}(g_i) \|^2$ can be reduced when adding the new vectors $g_j$'s, for $j = i+1,..,p$. The more the vectors $g_j$'s can explain, the closest  $\|g_i - \Pi_{g_{-i}}(g_i) \|^2$ gets to zero. We will leverage that the vectors $g_j$'s  satisfy Eq. \eqref{eq:condition_on_g} to establish that  there is a limit for how small $\|g_i - \Pi_{g_{-i}}(g_i) \|^2$ can be.  

Two key properties that allow us to establish the result can be derived from the Frisch-Waugh-Lovell Theorem \citep{lovell2008simple}. Intuitively, the theorem states that when we add a new regressor to a linear regression model, it suffices to consider only the component of the new regressor that is orthogonal to the linear subspace of the regressors already present. We will use the following properties implied by the theorem:

\begin{itemize}
    \item {Projection decomposition: for $i=1,...,p$ and $k=0,...,p-i$, \begin{align}\label{eq:fwl_theorem}
\Pi_{g_1,...,g_{i+k}}(g_{i+k+1}) = \Pi_{g_1,...,g_{i-1},g_{i+1},...,g_{i+k}}(g_{i+k+1})+\Pi_{(g_i - \Pi_{g_1,...,g_{i-1},g_{i+1},...,g_{i+k}}(g_i))}(g_{i+k+1}), 
\end{align}}
    \item {$R^2$ decomposition: for $i=1,...,p$ and $k=0,...,p-i$,
    \begin{align}\label{eq:fwl_square}
  \hspace*{-1.4cm}  \|\Pi_{g_1,...,g_{i+k}}(g_{i+k+1})\|^2 = \|\Pi_{g_1,...,g_{i-1},g_{i+1},...,g_{i+k}}(g_{i+k+1})\|^2+\|\Pi_{(g_i - \Pi_{g_1,...,g_{i-1},g_{i+1},...,g_{i+k}}(g_i))}(g_{i+k+1})\|^2.
\end{align}}
\end{itemize}

The first property states that we can decompose the projection of $g_{i+1}$ in two parts. First, we consider the projection of $g_{i+1}$ onto the subspace without $g_i$. Second, we consider the projection of $g_{i+1}$ on the component of $g_i$ that is orthogonal to the subspace considered in the previous step, i.e., the projection of $g_{i+1}$ onto the vector $g_i - \Pi_{g_1,...,g_{i-1}}(g_{i})$. For the second property, since $g_i - \Pi_{g_1,...,g_{i-1}}(g_{i+1})$ is orthogonal to the vectors $g_1,...,g_{i-1}$, we have a situation in which the triangle equality holds and we can decompose the coefficient of determination $R^2$ in the contribution of each orthogonal set of regressors.

 Note that  when $i = p$, the result follows from the assumption (cf. Eq.\eqref{eq:condition_on_g}). Next, fix $i$ in $\{1,...,p-1\}$ and define the following sequence of residuals:
\begin{align}
\label{eq:residual_xi}
r^k_i = g_{i}-\Pi_{g_1,...,g_{i-1},g_{i+1},...,g_{i+k}}(g_{i}), \quad k = 0,...,p-i. 
\end{align}
 When $i = 1$ and $k=0$ we will define by convention $r^0_1 = g_1$. 
 
 We will show that the sequence of residuals satisfies $\|r^{k+1}_i\|^2 \geq \|r^k_i\|^2\eta^2$. This recursive relationship allows us to write 
$$
\|r_i^{k+1}\|^2 \geq \|r_i^0\|^2\eta^{2(k+1)},
$$
Moreover, $\|r_i^0\|^2 \geq \eta^2$ for $i>1$ (cf. Eq. \eqref{eq:condition_on_g}), and $\|r_1^0\|^2 = \|g_1\|^2 = 1$, which implies that for  $k = 0,...,p-i$,
$$
\|r^k_i\| \geq \begin{cases} \eta^{2k}, & i = 1 \\ \eta^{2(k+1)}, & 2 \leq i \leq p-1,\end{cases}
$$
Taking $k = p-i$  leads to \eqref{eq:regress-coeff}. Next, we  establish  that $\|r_i^{k+1}\|^2 \geq \|r_i^0\|^2\eta^{2(k+1)}$ holds.

  We define the following vectors 
$$
\nu_{i}^{k} = g_{i+k}-\Pi_{g_1,...,g_{i-1},g_{i+1},...,g_{i+k-1}}(g_{i+k}), \quad k = 0,...,p-i.
$$

Note that Eq. \eqref{eq:fwl_square} implies that $\nu_i^{k}$ is the vector that effectively contributes to reduce the residual $r_i^{k-1}$ when $g_{i+k}$ is used to explain $g_i$ and $\{g_1,...,g_{i+k-1}\}\setminus \{g_i\}$ were already considered.  Eq. \eqref{eq:fwl_square} gives us that
$$
\|\Pi_{g_1,...,g_{i-1},g_{i+1},...,g_{i+k+1}}(g_{i})\|^2 = \|\Pi_{g_1,...,g_{i-1},g_{i+1},...,g_{i+k}}(g_{i})\|^2+\|\Pi_{\nu_{i}^{k+1}}(g_{i})\|^2,
$$
and since $g_i$ has unit norm, by construction, we have
\bearn
\|\Pi_{g_1,...,g_{i-1},g_{i+1},...,g_{i+k+1}}(g_{i})\|^2  &=& 1-\|r^{k+1}_i\|^2\\
\|\Pi_{g_1,...,g_{i-1},g_{i+1},...,g_{i+k}}(g_{i})\|^2 &=& 1-\|r^k_i\|^2,
%= \|\Pi_{g_1,...,g_{i-1},g_{i+1},...,g_{i+k}}(g_{i})\|^2+\|\Pi_{\nu_{i}^{k+1}}(g_{i})\|^2.
\eearn
and hence the following recursive relationship
\bear \label{eq:rk1}
\|r^{k+1}_i\|^2 &=& \|r^k_i\|^2 - \|\Pi_{\nu_{i}^{k+1}}(g_{i})\|^2.
\eear
Next, we upper bound $\|\Pi_{\nu_{i}^{k+1}}(g_{i})\|^2$. We have that
\begin{align}\label{eq:r2_induction}
\|\Pi_{\nu_{i}^{k+1}}(g_{i})\|^2 &\stackrel{}{=} \frac{(g_i'\nu_{i}^{k+1})^2}{\|\nu_{i}^{k+1}\|^2} \stackrel{(a)}{=} \frac{\Big((\Pi_{g_1,...,g_{i-1},g_{i+1},...,g_{i+k}}(g_{i})+r^k_i)'\nu_{i}^{k+1}\Big)^2}{\|\nu_{i}^{k+1}\|^2} \stackrel{(b)}{=} \frac{\Big({r^k_i}'\nu_{i}^{k+1}\Big)^2}{\|\nu_{i}^{k+1}\|^2}, 
\end{align}
where $(a)$ follows from the definition of $r_i^k$ (Eq. \eqref{eq:residual_xi}), and $(b)$ follows from the fact that $\nu_{i}^{k+1}$ and $\Pi_{g_1,...,g_{i-1},g_{i+1},...,g_{i+k}}(g_{i})$ are orthogonal since we removed the projection of $g_{i+1}$ onto the subspace generated by $\{g_1,...,g_{i+k}\}\setminus \{g_i\}$ when constructing $\nu_{i}^{k+1}$.  Next we develop an alternative representation for $\nu_{i}^{k+1}$. We have
\bearn
\nu_{i}^{k+1} &=& g_{i+k+1} - \Pi_{g_1,...,g_{i-1},g_{i+1},...,g_{i+k}}(g_{i+k+1}) \\
&\stackrel{(a)}{=}& \Pi_{g_1,...,g_{i+k}}(g_{i+k+1})+ r^0_{i+k+1} -  \Pi_{g_1,...,g_{i-1},g_{i+1},...,g_{i+k}}(g_{i+k+1})\\
&\stackrel{(b)}{=}& \Pi_{g_1,...,g_{i-1},g_{i+1},...,g_{i+k}}(g_{i+k+1}) + \Pi_{\beta_i^k}(g_{i+k+1}) + r^0_{i+k+1} -  \Pi_{g_1,...,g_{i-1},g_{i+1},...,g_{i+k}}(g_{i+k+1})\\
&=& \Pi_{\beta_i^k}(g_{i+k+1}) + r^0_{i+k+1},
\eearn
where (a) follows from the definition of $r^0_{i+k+1}$ and in (b), we have used Eq. \eqref{eq:fwl_theorem} and defined $
\beta_i^k = g_{i}-\Pi_{g_1,...,g_{i-1},g_{i+1},...,g_{i+k}}(g_{i})
$.  Using this new representation for $\nu_{i}^{k+1}$ in Eq. \eqref{eq:r2_induction} yields
\begin{align}
\Big({r_i^k}'\nu_{i}^{k+1}\Big)^2 &= \Big({r_i^k}'(\Pi_{\beta_i^k}(g_{i+k+1}) +  r_{i+k+1}^0)\Big)^2 \nonumber \\
&\stackrel{(a)}{=} \Big({r_i^k}'\Pi_{\beta_i^k}(g_{i+k+1})\Big)^2 \nonumber \\
&\stackrel{(b)}{\leq} \|{r_i^k}\|^2\|\Pi_{\beta_i^k}(g_{i+k+1})\Big\|^2,
\end{align}
where $(a)$ follows from the fact that $r_{i+k+1}^0$ is orthogonal to the subspace generated by $g_1,...,g_{i+k}$, to which $r_i^k$ belongs and (b) follows from the Cauchy-Schwarz inequality. In addition, we have that $\Pi_{\nu_i^k}(g_{i+k+1})$ and $r_{i+k+1}^0$ are orthogonal since $r_{i+k+1}^0$ is by definition orthogonal to the subspace generated by $g_1,...,g_{i+k}$, which $\Pi_{\beta_i^k}(g_{i+k+1})$  belongs to. Therefore, we have that 
$$
\|\nu_i^{k+1}\|^2 = \|\Pi_{\beta_i^k}(g_{i+k+1})+r_{i+k+1}^0\|^2 = \|\Pi_{\beta_i^k}(g_{i+k+1})\|^2+\|r_{i+k+1}^0\|^2,
$$
which implies that
$$
\frac{\|\Pi_{\beta_i^k}(g_{i+k+1})\|^2}{\|\nu_i^{k+1}\|^2} = 1- \frac{\| r_{i+k+1}^0\|^2}{\|\nu_i^{k+1}\|^2} \leq 1- \| r_{i+k+1}^0\|^2,
$$
where the inequality follows from the fact that $\nu_i^{k+1}$ is a projection of a unit vector,so $\|\nu_i^{k+1}\|^2 \le 1$.

 Returning to  Eq. \eqref{eq:rk1} and  Eq. \eqref{eq:r2_induction}, we have established  that
$$
\|r^{k+1}_i\|^2 \ge \|r^k_i\|^2 - \|{r_i^k}\|^2(1-\|r_{i+k+1}^0\|^2) = \|{r_i^k}\|^2 \|r_{i+k+1}^0\|^2 \ge \|{r_i^k}\|^2 \eta^2,
$$
where the last inequality follows from assumption (cf. Eq. \eqref{eq:condition_on_g}). This completes the proof.

\end{proof}

\section{Additional Figures}
\label{app:figures}

We provide here additional analysis for the simulation results. For the cases depicted in Figures \ref{fig:sim01}, \ref{fig:sim02}, \ref{fig:sim03}  and \ref{fig:sim04}, we depict the results in log-log scale in Figures  \ref{fig:sim_sqrt1}, \ref{fig:sim_sqrt2}, \ref{fig:sim_sqrt3}  and \ref{fig:sim_sqrt4}, respectively. A dependence of the form $T^{\alpha}$ would lead to a linear relationship on this graph. Each graph depicts for reference the curve $T^{1/2}$.  We can see that our algorithms appear to achieve logarithmic regret when the time horizon is sufficiently long.  It is worth noticing that the transition regime (before achieving the logarithmic regret rate) can be affected by the dimension of the problem, which is also observed for the competing methods. Interestingly, for both OGD and EWU, we observe over the numerics that these  seem to achieve regret rates better than $O(\sqrt T)$ for the stochastic case. Whereas the  theoretical guarantees for these methods in the adversarial case are currently of order $O(\sqrt T)$, the numerics highlight an interesting theme for future research: delineating  whether or not OGD and EWU can achieve better regret bounds in non-adversarial environments.

\begin{figure}[!ht]
  \begin{subfigure}{0.49\textwidth}
\begin{tikzpicture}
\begin{loglogaxis}[
            title={},
	        width=7.5cm,
	        height=7cm,
	        xlabel =  Period, 
	        ylabel = Cumulative Regret, 
	        grid=major, 
	        legend pos=south east]
\addplot[color = magenta, ultra thick, dashed] table[x="time", y = "OGD", col sep = comma] {Data_simulations/d_10_pointed_sqrt.dat};\addlegendentry{OGD} 
\addplot[color = orange, ultra thick, dashdotted] table[x="time", y = "EWU", col sep = comma] {Data_simulations/d_10_pointed_sqrt.dat};\addlegendentry{EWU} 
\addplot[blue, ultra thick] table[x="time", y = "EllipsoidalCones", col sep = comma] {Data_simulations/d_10_pointed_sqrt.dat};\addlegendentry{EllipCones} 
\addplot[color = black] table[x="time", y = "sqrt_time", col sep = comma] {Data_simulations/d_10_pointed_sqrt.dat}; 
\end{loglogaxis}
\end{tikzpicture}
\caption{$d = 10$ and $T = 1000$.} \label{fig:sim_sqrt1} 
\end{subfigure}
\begin{subfigure}{0.49\textwidth}
\begin{tikzpicture}
\begin{loglogaxis}[
            title={},
	        width=7.5cm,
	        height=7cm,
	        xlabel =  Period, 
	        ylabel = Cumulative Regret, 
	        grid=major, 
	        legend pos=south east]
\addplot[color = magenta, ultra thick, dashed] table[x="time", y = "OGD", col sep = comma] {Data_simulations/d_25_pointed_sqrt.dat};\addlegendentry{OGD} 
\addplot[color = orange, ultra thick, dashdotted] table[x="time", y = "EWU", col sep = comma] {Data_simulations/d_25_pointed_sqrt.dat};\addlegendentry{EWU} 
\addplot[blue, ultra thick] table[x="time", y = "EllipsoidalCones", col sep = comma] {Data_simulations/d_25_pointed_sqrt.dat};\addlegendentry{EllipCones}
\addplot[color = black] table[x="time", y = "sqrt_time", col sep = comma] {Data_simulations/d_25_pointed_sqrt.dat};
\end{loglogaxis}
\end{tikzpicture}
\caption{$d = 25$ and $T = 5000$} \label{fig:sim_sqrt2} 
\end{subfigure}
\caption{Average cumulative regret over 50 simulations for EWU, OGD and the \texttt{EllipsoidalCones} algorithms for the pointed case in loglog scale.}
\end{figure}

\begin{figure}[!ht]
\begin{subfigure}{0.49\textwidth}
\begin{tikzpicture}
\begin{loglogaxis}[
            title={},
	        width=7.5cm,
	        height=7cm,
	        xlabel =  Period, 
	        ylabel = Cumulative Regret, 
	        grid=major, 
	        legend pos=south east]
\addplot[color = magenta, ultra thick, dashed] table[x="time", y = "OGD", col sep = comma] {Data_simulations/d_10_sphere_sqrt.dat};\addlegendentry{OGD} 
\addplot[blue, ultra thick] table[x="time", y = "ProjectedCones", col sep = comma] {Data_simulations/d_10_sphere_sqrt.dat};\addlegendentry{ProjCones} 
\addplot[color = black] table[x="time", y = "sqrt_time", col sep = comma] {Data_simulations/d_25_pointed_sqrt.dat};
\end{loglogaxis}
\end{tikzpicture}
\caption{$d = 10$ and $T = 1000$.} \label{fig:sim_sqrt3} 
\end{subfigure}
\begin{subfigure}{0.49\textwidth}
\begin{tikzpicture}
\begin{loglogaxis}[
            title={},
	        width=7.5cm,
	        height=7cm,
	        xlabel =  Period, 
	        ylabel = Cumulative Regret, 
	        grid=major, 
	        legend pos=south east]
\addplot[color = magenta, ultra thick, dashed] table[x="time", y = "OGD", col sep = comma] {Data_simulations/d_25_sphere_sqrt.dat};\addlegendentry{OGD} 
\addplot[blue, ultra thick] table[x="time", y = "ProjectedCones", col sep = comma] {Data_simulations/d_25_sphere_sqrt.dat};\addlegendentry{ProjCones} 
\addplot[color = black] table[x="time", y = "sqrt_time", col sep = comma] {Data_simulations/d_25_pointed_sqrt.dat};
\end{loglogaxis}
\end{tikzpicture}
\caption{$d = 25$ and $T = 5000$.} \label{fig:sim_sqrt4} 
\end{subfigure}
\caption{Average cumulative regret over 50 simulations for OGD and the \texttt{ProjectedCones} algorithms for the general case in loglog scale.}
\end{figure}

\end{document}